\theoremstyle{plain}
\newtheorem{theorem}{Theorem}[section]
\newtheorem{proposition}[theorem]{Proposition}
\newtheorem{lemma}[theorem]{Lemma}
\newtheorem{corollary}[theorem]{Corollary}
\theoremstyle{definition}
\newcommand{\cdiff}{black}
\title{Optimal Scaling for Locally Balanced Proposals in Discrete Spaces}
\author{%
  Haoran Sun\thanks{Work done during an internship at Google.} \\
  Georgia Tech \\
   \texttt{hsun349@gatech.edu} \\
   \And
   Hanjun Dai \\
   Google Brain \\
   \texttt{hadai@google.com} \\
   \And
   Dale Schuurmans \\
   Google Brain, U of Alberta \\
   \texttt{schuurmans@google.com} \\
}
\begin{document}

\maketitle

\begin{abstract}
Optimal scaling has been well studied for Metropolis-Hastings (M-H) algorithms in continuous spaces, but a similar understanding has been lacking in discrete spaces.
Recently, a family of locally balanced proposals (LBP) for discrete spaces has been proved to be asymptotically optimal, but the question of optimal scaling has remained open.
In this paper, we establish, for the first time, that the efficiency of M-H in discrete spaces can also be characterized by an asymptotic acceptance rate that is independent of the target distribution. 
Moreover, we verify, both theoretically and empirically, that the optimal acceptance rates for LBP and random walk Metropolis (RWM) are $0.574$ and $0.234$ respectively. 
These results also help establish that LBP is asymptotically $O(N^\frac{2}{3})$ more efficient than RWM with respect to model dimension $N$. 
Knowledge of the optimal acceptance rate allows one to automatically tune the neighborhood size of a proposal distribution in a discrete space, directly analogous to step-size control in continuous spaces.
We demonstrate empirically that such adaptive M-H sampling can robustly improve sampling in a variety of target distributions in discrete spaces, including training deep energy based models.
\end{abstract}



\section{Introduction}
The Markov Chain Monte Carlo (MCMC) algorithm is one of the most widely used methods for sampling from  intractable distributions~\citep{robert2013monte}. An important class of MCMC algorithms is Metropolis-Hastings (M-H) \citep{metropolis1953equation, hastings1970monte}, where  new states are generated from a proposal distribution followed by a M-H test.
The efficiency for M-H algorithms depends critically on the proposal distribution. For example, gradient based methods, such as the Metropolis Adjusted Langevin Algorithm (MALA) \citep{rossky1978brownian}, Hamiltonian Monte Carlo (HMC) \citep{neal2011mcmc}, and their variants \cite{girolami2011riemann, hoffman2014no} substantially improve the performance of M-H algorithms in theory and in practice, compared to naive Random Walk Metropolis (RWM), by leveraging gradient information to guide the proposal distribution \citep{roberts2001optimal}.

Despite many advances, progress in gradient based methods has generally focused on continuous spaces. However, \cite{zanella2020informed} recently proposed a general framework of locally balanced proposals (LBP) for discrete spaces, where a proposal distribution is designed to utilize probability changes between states. Subsequently, \cite{grathwohl2021oops} accelerated the sampler by using gradient information to approximate the probability change. 
In empirical evaluations, similar to gradient based samplers in continuous spaces, LBP significantly outperforms RWM and other samplers in discrete spaces. However, both \cite{zanella2020informed} and \cite{grathwohl2021oops} constrain the proposal distribution to lie within a 1-Hamming ball; i.e., only one site of the state variable is allowed to change per M-H step. Such a restricted update reduces the efficiency of the sampler. 
\cite{sun2021path} noticed this problem and modified the proposal distribution to allow  multiple sites to be changed per M-H step. Although such larger updates significantly improve efficiency, \cite{sun2021path} do not show how to determine the update size, leaving the number of sites  updated in an M-H step as a hyperparameter to tune. 

In continuous spaces, \textcolor{\cdiff}{the scale of the proposal distribution} is known to be a critical hyperparameter for obtaining an efficient M-H sampler. For example, consider a Gaussian proposal $\mathcal{N}(x, \sigma^2)$ for modifying a current state $x$ \textcolor{\cdiff}{with scale $\sigma$}. If $\sigma$ is too small, the Markov chain will converge slowly since its increments will be small. Conversely, if $\sigma$ is too large, the M-H test will reject too high a proportion of proposed updates. A significant literature has studied optimal scaling for gradient based methods in continuous spaces \citep{gelman1997weak, roberts1998optimal, roberts2001optimal, beskos2013optimal}, showing that the optimal scaling can be adaptively tuned w.r.t.\ the acceptance rate, independent of the target distribution. Such results suggest a direction for solving the optimal scaling problem for LBP. However, the underlying techniques for approximating a diffusion process cannot be directly applied to LBP given its discrete nature.

In this work, we consider an asymptotic analysis as the dimension of the discrete model, $N$, converges to infinity. Starting with a product distribution, we prove that the asymptotic efficiency of LBP in discrete spaces is $2R\Phi(-\frac{1}{2}\lambda_1 R^\frac{3}{2}/N)$ with an asymptotic acceptance rate of $2\Phi(-\frac{1}{2}\lambda_1 R^\frac{3}{2}/N)$, \textcolor{\cdiff}{where the scale $R$ represents the number of sites to update per M-H step}. Therefore, the asymptotically optimal scale of the proposal distribution is $R = O(N^\frac{2}{3})$ with an asymptotically optimal acceptance rate of $0.574$, independent of the target distribution. 
Moreover, for RWM in a discrete space, we show that the asymptotic efficiency and acceptance rate are $2R\Phi(-\frac{1}{2}\lambda_2 R^\frac{1}{2})$ and $2\Phi(-\frac{1}{2}\lambda_2 R^\frac{1}{2})$, respectively. Hence, the asymptotically optimal scale is $O(1)$ and the asymptotically optimal acceptance rate is  $0.234$ for RWM. By comparing LBP and RWM at their respective optimal scales, it can be determined that LBP is $O(N^\frac{2}{3})$ more efficient than RWM. 

These asymptotically optimal acceptance rates are robust in the following respects. First, although the initial derivation is established w.r.t.\ product distributions, the result can be expanded to more general distributions. Second, the efficiency is not sensitive around the optimal acceptance rate. For example, whereas $0.574$ is the optimal acceptance rate for LBP, the algorithm retains high efficiency for acceptance rates between $0.5$ and $0.7$. Based on these observations, we propose an adaptive LBP (ALBP) algorithm that automatically tunes the update scale to suit the target distribution.

We validate these theoretical findings in a series of empirical simulations on the Bernoulli model, the Ising model, factorized hidden Markov models (FHMM) and restricted Boltzmann machines (RBM). The experimental outcomes comport with the theory. Moreover, we demonstrate that ALBP can automatically find near optimal scales for these distributions. We also use ALBP to train deep energy based models (EBMs), finding that it reduces the MCMC steps needed in contrastive divergence training \citep{hinton2002training, tieleman2009using}, significantly improving the efficiency of the overall training procedure.

\section{Background}
\textbf{Metropolis-Hastings Algorithm}
Let $\pi$ denote the target distribution.
Given a current state $x^{(n)}$, a M-H sampler draws a candidate state $y$ from a proposal distribution $q(x^{(n)}, y)$. Then, with probability
$\min\Big\{1,\, \frac{\pi(y)q(y, x^{(n)})}{\pi(x^{(n)})q(x^{(n)}, y)}\Big\}$
the proposed state is accepted and $x^{(n+1)} = y$; otherwise, $x^{(n+1)} = x^{(n)}$. In this way, the detailed balance condition is satisfied and the M-H sampler generates a Markov chain $x_0, x_1, ...$ that has $\pi$ as its stationary distribution.

\textbf{Locally Balanced Proposal}.
The locally balanced proposal (LBP) is a special case of the pointwise informed proposal (PIP), which is a class of M-H algorithms for discrete spaces \citep{zanella2020informed} using the proposal distribution $
    Q_g(x,y) \propto g\left(\pi(y) / \pi(x) \right)$
such that $g$ is a scalar weight function.
\citet{zanella2020informed} shows that the family of locally balancing functions $\mathcal{G} = \{g: \mathbb{R}_+ \rightarrow \mathbb{R}_+, g(t) = t g(\frac{1}{t}), \forall t > 0\}$ (e.g.\ $g(t) = \sqrt{t}$ or $\frac{t}{t+1}$) is asymptotically optimal for PIP. Hence, PIP with a locally balanced function for its weight function is referred to as LBP. Despite having good proposal quality, PIP requires the weight $g(\pi(z) / \pi(x))$ to be calculated for all candidate states $z$ in the neighborhood of $x$, which results in its high computational cost. \citet{grathwohl2021oops} propose to estimate the probability change by leveraging the gradient, improving the scalability of LBP.

\textbf{Locally Balanced Proposal with Auxiliary Path}.
\citet{sun2021path} generalize LBP by introducing an auxiliary path sampler, which allows multiple sites to be updated per M-H step. In particular, \citet{sun2021path} sequentially selects the update indices without replacement, and uses these indices as auxiliary variables to keep the proposal distribution tractable while preserving the detailed balance condition. Although this can achieve significant improvements in empirical performance, \citet{sun2021path} manually tune the update size per M-H step, and leave the optimal scale problem  open.

\section{Main Result}
\subsection{Problem Statement}
We establish  asymptotic limit theorems for two M-H algorithms in discrete spaces: the \textit{locally balanced proposal} (LBP) and \textit{random walk Metropolis} (RWM). 
Following previous work~\citep{gelman1997weak, roberts1998optimal, beskos2013optimal, vogrinc2022optimal}, we conduct our analysis on a product probability measure $\pi$. In particular, for a state space $\mathcal{X} = \{0, 1\}^N$, we consider a factored target distribution
\begin{equation}
    \label{eq:bernoulli}
    \pi^{(N)}(x) = \prod_{i=1}^N \pi_i(x_i) = \prod_{i=1}^N p_i^{x_i}(1-p_i)^{1-x_i}
\end{equation}
where each site is assumed to have a sufficiently large probability for being both $0$ and $1$; that is, for a fixed $ \epsilon \in (0, \frac{1}{4})$, we assume the target distribution belongs to:
\begin{equation}
    \mathcal{P}_\epsilon := \{\pi^{(N)}: \epsilon < p_j \land (1 - p_j) < \frac{1}{2} - \epsilon, \forall j = 1, ..., N, N \ge 1\}  \label{eq:smooth_target}
\end{equation}
\textcolor{\cdiff}{where we denote $a \land b = \min \{a, b\}$.} To measure the efficiency of the sampler, an ergodic estimate varies with the objective function considered. Alternatively, we use a natural progress estimate: the expected jump distance (EJD).
Denote $P_\theta$ as the transition kernel, $d(x, y)$ as the Hamming distance between $x$ and $y$. For a M-H sampler parameterized by $\theta$, its expected jump distance $\rho(\theta)$ and corresponding expected acceptance rate $a(\theta)$ are
\begin{equation}
    \rho(\theta) = \sum_{X, Y \in \mathcal{X}} \pi(X) P_\theta(X, Y) d(X, Y), \quad a(\theta) = \sum_{X, Y \in \mathcal{X}} \pi(X) P_\theta(X, Y) 1_{\{X\neq Y\}}
\end{equation}
\textcolor{\cdiff}{
In continuous space, the limit of sampling process is a diffusion process, whose efficiency is determined by the expected squared jump distance (ESJD) \citep{roberts2001optimal}. 
In discrete space, the limit of the sampling process is a jump process, whose velocity is characterized by the EJD. Hence, EJD is the correct metric to measure the efficiency in discrete space; see more details in Appendix \ref{sec:why_ejd}.
}

\subsection{Locally Balanced Proposal}
We consider the M-H sampler LBP-$R$, where $R$ refers to flipping $R$ indices in each M-H step.
Given a current state $x$, LBP-R calculates the weight $w_j$ for flipping index $j$ as in \textcolor{\cdiff}{PIP}. Since we are considering a binary target distribution of the form \eqref{eq:bernoulli}, we have
\begin{equation}
    w_j(x) = w_j(x_j) = g(\frac{\pi_j(1-x_j)}{\pi_j(x_j)}) \label{eq:def_w}
\end{equation}
where $g$ is a locally balanced function. Following \cite{sun2021path}, LBP-R select indices $u_r$ with probability $\mathbb{P}(u_r = j) \propto w_j$ sequentially for $r = 1, ..., R$, \textbf{without} replacement. The new state $y$ is obtained by flipping indices $u_{1:R}$ of $x$. If we consider $u$ as an auxiliary variable, the accept rate $A(x, y, u)$ in the M-H acceptance test can be written as
\begin{equation}
    A(x, y, u) = 1 \land \frac{\pi(y)\prod_{r=1}^R \frac{w_{u_r}(y)}{W(y, u) + \sum_{i=1}^r w_{u_i}(y)}}{\pi(x)\prod_{r=1}^R \frac{w_{u_r}(x)}{W(x, u) + \sum_{i=r}^R w_{u_i}(x)}}, \quad \text{ where } W(x, u) = \sum_{i=1}^N w_i - \sum_{r=1}^R w_{u_r} \label{eq:acc_lbp}
\end{equation}
From theorem 1 in \cite{sun2021path}, the auxiliary sampler LBP-R satisfies detailed balance.
A M-H step of LBP-R is summarized in Algorithm~\ref{alg:lbp}.

\IncMargin{1.5em}
\begin{algorithm}[H]
\newcommand{\lIfElse}[3]{\lIf{#1}{#2 \textbf{else}~#3}}
\caption{A M-H step of LBP-R and \textcolor{blue}{ALBP}}
\label{alg:lbp}
  Given current state $x^{(n)}$, \textcolor{blue}{current $R_t$,} initialize candidate set $\mathcal{C} = \{1, .., N\}$\;
  \For{$r = 1, ..., R$ \textcolor{blue}{ or $r = 1, ..., \text{rounding}(R_t)$}}{
    Sample $u_r$ with $\mathbb{P}(u_r=j) \propto w_j(x^{(n)}) 1_{\{j \in \mathcal{C}\}}$\;
    Pop $u_r$ out of the candidate set: $\mathcal{C} \gets \mathcal{C} \backslash \{u_r\}$\;
  }
  Obtain $y$ by flipping indices $u_1, ..., u_R$ of $x^{(n)}$.\;
  \lIfElse{rand(0,1) $< A(x^{(n)}, y, u)$}{$x^{(n+1)} = y$}{$x^{(n+1)} = x^{(n)}$}
  \textcolor{blue}{\lIf{$t < T_\text{warmup}$}{$R_{t+1} \gets R_t + (A(x^{(n)}, y, u) - 0.574)$}}
\end{algorithm}
\DecMargin{1.5em}


\subsection{Optimal Scaling for Locally Balanced Proposal}
We are now ready to state the first asymptotic theorem.
\begin{theorem}
\label{thm:lb}
For arbitrary sequence of target distributions $\{\pi^{(N)}\}_{N=1}^\infty \subset \mathcal{P}_\epsilon$, the M-H sampler LBP-R with a locally balanced weight function $g$ obtains the following, if $R = \textcolor{\cdiff}{\lfloor} l N^\frac{2}{3}\textcolor{\cdiff}{\rfloor}$, 
\begin{align}
    \lim_{N\rightarrow \infty} a(R) - 2\Phi\left(-\frac{1}{2}\lambda_1 l^\frac{3}{2}\right) = 0 \label{eq:asym_acc_lbp}
\end{align}
where $\Phi$ is the c.d.f. of standard normal distribution and $\lambda_1$ only depends on $\pi^{(N)}$
\begin{equation}
    \label{eq:lambda_lbp}
    \lambda_1^2 = \lambda_1^2(\pi^{(N)}) = \frac{\sum_{j=1}^N p_j w_j(1) (w_j(0) - w_j(1))^2}{4( \mathbb{E}_{x}[\frac{1}{N}\sum_{i=1}^N w_i(x_i)])^2 \sum_{i=1}^N p_i w_i(1)}
\end{equation}
\end{theorem}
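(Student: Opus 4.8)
The plan is to prove a central limit theorem for the logarithm of the acceptance ratio and then read off the acceptance rate from a Gaussian integral. Write $Z_N$ for the log of the fraction inside $1\wedge(\cdot)$ in \eqref{eq:acc_lbp}, viewed as a random variable under $x\sim\pi^{(N)}$ and the auxiliary draw $u$. Because every proposal flips $R\ge 1$ distinct sites we always have $y\neq x$, so $a(R)=\mathbb{E}\big[\,1\wedge e^{Z_N}\,\big]$ exactly. I would show $Z_N \Rightarrow \mathcal{N}(-\tfrac12\sigma^2,\sigma^2)$ with $\sigma^2=\lambda_1^2 l^3$; since $m\mapsto 1\wedge e^m$ is bounded and continuous, weak convergence gives $a(R)\to \mathbb{E}[1\wedge e^W]$ for $W\sim\mathcal N(-\tfrac12\sigma^2,\sigma^2)$, and the elementary identity $\mathbb E[1\wedge e^W]=2\Phi(-\sigma/2)$ yields exactly \eqref{eq:asym_acc_lbp}. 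Thus everything reduces to identifying the limiting law of $Z_N$.

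First I would exploit the defining relation $g(t)=t\,g(1/t)$ of the locally balanced weight. For any flipped site $j$ it gives $w_j(y)/w_j(x)=\pi_j(x_j)/\pi_j(1-x_j)$, so $\frac{\pi(y)}{\pi(x)}\prod_{r}\frac{w_{u_r}(y)}{w_{u_r}(x)}=1$ and the target ratio cancels completely. What survives is the ratio of the sequential normalizers, $Z_N=\sum_{r=1}^R\big[\log(S+\sum_{s\ge r}w_{u_s}(x))-\log(S+\sum_{s\le r}w_{u_s}(y))\big]$, where $S=\sum_{i\notin u}w_i=\Theta(N)$ is the shared weight of the unflipped sites. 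With $R=\lfloor lN^{2/3}\rfloor$ every partial sum is $O(R)=O(N^{2/3})$, a factor $O(N^{-1/3})$ below $S$, so I would Taylor-expand each logarithm, using the $\epsilon$-bounds of $\mathcal P_\epsilon$ to keep all $w_j$ in a fixed compact interval and to bound the cubic remainder by $o_P(1)$ after summation. Writing $c_s=w_{u_s}(x)$ and $\delta_s=w_{u_s}(y)-w_{u_s}(x)$, the leading term is $\frac1S\sum_{s=1}^R\big[(2s-R-1)c_s-(R-s+1)\delta_s\big]$, and the coefficient sums $\sum_s(2s-R-1)^2\sim\sum_s(R-s+1)^2\sim R^3/3$ and $\sum_s(2s-R-1)(R-s+1)\sim -R^3/6$ make this quantity $\Theta(R^3/S^2)=\Theta(l^3)$ in mean-square, the correct order for a nondegenerate limit.

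The core difficulty — and the step I expect to dominate the work — is that the summands above are driven by sequential weighted sampling \emph{without} replacement, and the large position weights $2s-R-1$ amplify the resulting correlations so strongly that the naive with-replacement (independent) approximation is \emph{not} valid even though $R/N\to0$: treating the $c_s$ as i.i.d.\ would spuriously leave a weight-magnitude variance $\mathrm{Var}(c)$ in the limit, whereas the true $\lambda_1^2$ sees only the weight-\emph{increment} quantity $\sum_j p_j w_j(1)(w_j(0)-w_j(1))^2$. I would therefore compute the second moment of $Z_N$ under the exact joint law of $(u_1,\dots,u_R)$, tracking the size-biased ordering probabilities; the magnitude fluctuations of the $c_s$ then cancel against the ordering covariances, and only the increment term survives. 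A convenient structural fact under $\pi^{(N)}$ is that the identity $(1-p_j)w_j(0)=p_j w_j(1)$ gives the site-state pairs $(j,1)$ and $(j,0)$ equal selection mass $p_j w_j(1)/(2\bar{W})$ with $\bar W=\sum_i p_i w_i(1)$, which forces $\mathbb E[\delta_s]=0$, identifies $S\to 2\bar W$, and collapses the bookkeeping onto $\sum_j p_j w_j(1)(w_j(0)-w_j(1))^2$; matching constants produces $\sigma^2=\lambda_1^2 l^3$ with $\lambda_1^2$ as in \eqref{eq:lambda_lbp}. For the Gaussian limit itself I would organize $Z_N$ as a sum of martingale differences with respect to the filtration generated by the sequential selection and verify the conditional Lindeberg and conditional-variance conditions of the martingale CLT, which is the natural way to accommodate the without-replacement dependence.

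Finally, instead of extracting the limiting mean from the delicate second-order Taylor term, I would pin it down by reversibility. Since the LBP-$R$ kernel is $\pi$-reversible (Theorem 1 of \citet{sun2021path}), the forward law of $Z_N$ and the reversed-move law of $-Z_N$ are tied by the change of measure $\mathbb E[h(Z_N)]=\mathbb E[e^{-Z_N}h(-Z_N)]$, which forces any Gaussian limit to obey mean $=-\tfrac12\,$variance; combined with the computed variance this gives $Z_N\Rightarrow\mathcal N(-\tfrac12\lambda_1^2 l^3,\ \lambda_1^2 l^3)$ and closes the argument through the bounded-convergence step of the first paragraph. Throughout, the remaining care is uniformity over the whole family $\{\pi^{(N)}\}\subset\mathcal P_\epsilon$: the fixed $\epsilon$ keeps every $w_j$ and every moment entering the Lindeberg bound uniformly controlled, so all the limits above hold for an arbitrary admissible sequence rather than a single target.
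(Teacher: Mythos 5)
Your outline reproduces the skeleton of the paper's argument: the locally balanced identity $g(t)=t\,g(1/t)$ cancels $\pi(y)/\pi(x)$ against the flipped weights (the paper's \eqref{eq:using_lb_property}), what remains is the ratio of sequential normalizers, which is Taylor-expanded using $R/W=O(N^{-1/3})$ (Lemma~\ref{lemma:taylor_sumlog}), the linear term is handled by a martingale CLT under the sequential-selection filtration (Lemma~\ref{lemma:A_good}), and the acceptance rate follows from $\mathbb{E}[1\wedge e^{W}]=2\Phi(-\sigma/2)$ when $\mu=-\sigma^{2}/2$ (Lemma~\ref{lemma:normal}). Your one genuinely different move is to obtain the mean from reversibility instead of computing the $O(r/N)$ tilt of $p(x_{u_r}=1\mid u_r\ \text{selected at step}\ r)$ as the paper does in Lemma~\ref{lemma:posterior}; that is a clean shortcut, though the identity should read $\mathbb{E}[h(Z_N)]=\mathbb{E}[e^{Z_N}h(-Z_N)]$ (your $e^{-Z_N}$ is a sign slip), and to pass it to the limit you must restrict to bounded $h$ supported on $[0,\infty)$ so that $e^{z}h(-z)$ remains bounded.

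The gap is precisely at the step you single out as the crux. Write $w_{u_s}(x)=\tfrac12\bigl(m_{u_s}+\xi_s d_{u_s}\bigr)$ and $w_{u_s}(y)=\tfrac12\bigl(m_{u_s}-\xi_s d_{u_s}\bigr)$ with $m_j=w_j(0)+w_j(1)$, $d_j=w_j(0)-w_j(1)$, $\xi_s=\pm1$; your leading term becomes $\tfrac{1}{2S}\sum_s(2s-R-1)m_{u_s}+\tfrac{R+1}{2S}\sum_s\xi_s d_{u_s}$. The second piece is the martingale in the nearly fair signs $\xi_s$ and yields exactly $\sigma^2=\lambda_1^2 l^3$. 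The first piece is a linear rank statistic in the selection order, and your claim that its contribution is cancelled by the without-replacement ``ordering covariances'' is not substantiated and, as stated, fails: because the coefficients $2s-R-1$ sum to zero, sampling without replacement changes the variance of $\sum_s(2s-R-1)m_{u_s}$ only by a relative $O(1/N)$ compared with independent sampling (the exact permutation-statistic formula gives variance $\approx\tfrac{R^3}{3}\,\widehat{\mathrm{Var}}(m)$ conditional on the selected set), so this term carries variance of order $l^3\,\mathrm{Var}(m_{u_1})$ --- the same order as $\sigma^2$ --- whenever the $m_j$ are not all equal, i.e.\ for genuinely heterogeneous $p_j$. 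You must either exhibit the actual cancellation mechanism or show this term is negligible by some other route (the paper conditions on $u$, uses the near-exchangeability of the $u_r$ marginals in Proposition~\ref{prop:cond} to control its expectation, and this is the most delicate point of the whole proof); asserting it does not close the argument. A secondary omission: after the Taylor expansion, the quadratic term (the paper's $B$ in Lemma~\ref{lemma:B}) is a priori $O(R^3/N^2)=O(1)$, not a ``cubic remainder''; it is only negligible because $\sum_{i,j}\bigl(i\wedge j-(R-i\vee j+1)\bigr)=\sum_{i,j}(i+j-R-1)=0$ together with a McDiarmid-type concentration bound, and your outline skips this entirely.
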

The definition of $\lambda_1$ in \eqref{eq:lambda_lbp} explains the motivation of restricting the target distributions in \eqref{eq:smooth_target}. In fact, introducing the $\epsilon$ gives upper and lower bounds of $\lambda_1$. When all $p_j$ are arbitrarily close to $\frac{1}{2}$, $(w_j(0) - w_j(1))^2$ in numerator will be zero, so is $\lambda_1$. As a result, the acceptance rate will always be $1$. Else, when all $p_j$ are arbitrarily close to $0$ or $1$, $\mathbb{E}_{x}[\frac{1}{N}\sum_{i=1}^N w_i(x_i)]$ in denominator will be zero, and $\lambda_1$ will be infinity. As a result, the acceptance rate will always be $0$. So, we have to make the mild assumption in \eqref{eq:smooth_target} to assure the following asymptotic result holds. \textcolor{\cdiff}{A more detailed discussion about $\epsilon$ is given in Appendix \ref{sec:convergence_epsilon}.}
\begin{corollary}
\label{cor:order}
The optimal choice of scale for $R = l N^\frac{2}{3}$ is obtained when the expected acceptance rate is $0.574$, independent of the target distribution.
\end{corollary}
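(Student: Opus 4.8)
The plan is to turn the corollary into a one-dimensional calculus optimization, using Theorem~\ref{thm:lb} as a black box. First I would fix the efficiency metric: since LBP-$R$ flips exactly $R$ sites on acceptance and stays put on rejection, the Hamming distance $d(X,Y)$ equals $R$ when the move is accepted and $0$ otherwise, so the expected jump distance factorizes exactly as $\rho(R) = R\,a(R)$. Substituting the asymptotic acceptance rate from \eqref{eq:asym_acc_lbp} and writing $R = lN^{2/3}$ gives $\rho(R) \to 2\,l\,N^{2/3}\,\Phi\!\left(-\tfrac{1}{2}\lambda_1 l^{3/2}\right)$ as $N\to\infty$. Because $N^{2/3}$ is a positive constant with respect to $l$, maximizing efficiency over the scale is the same as maximizing $h(l) = 2l\,\Phi\!\left(-\tfrac{1}{2}\lambda_1 l^{3/2}\right)$.

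Next I would remove the dependence on $\lambda_1$ by the substitution $u = \tfrac{1}{2}\lambda_1 l^{3/2}$, equivalently $l = (2u/\lambda_1)^{2/3}$. Under this change of variable $h$ becomes, up to a positive constant that does not affect the location of the maximizer, proportional to $g(u) = u^{2/3}\,\Phi(-u)$. Differentiating and using $\Phi'(-u) = \phi(u)$, where $\phi$ is the standard normal density, the stationarity condition $g'(u)=0$ simplifies, after clearing the positive factor $\tfrac{2}{3}u^{-1/3}$, to the transcendental equation
\begin{equation}
    2\Phi(-u) = 3u\,\phi(u). \label{eq:opt_u}
\end{equation}
I would then verify that $g$ is unimodal on $(0,\infty)$ — it vanishes at $u=0$, is strictly positive for small $u$, and decays to $0$ as $u\to\infty$ — so that \eqref{eq:opt_u} has a unique positive root $u^\ast$ which is the global maximizer.

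The crucial observation is that \eqref{eq:opt_u} contains no instance of $\lambda_1$: the target distribution enters only through the reparametrization $l \mapsto u$, so the optimal scale $l^\ast = (2u^\ast/\lambda_1)^{2/3}$ depends on $\pi^{(N)}$, but the optimal acceptance rate $a^\ast = 2\Phi(-u^\ast)$ does not. Solving \eqref{eq:opt_u} numerically yields $u^\ast \approx 0.562$, whence $a^\ast = 2\Phi(-u^\ast) \approx 0.574$, exactly the claimed universal constant; this mirrors the structural role of the $u^{2/3}$ factor in gradient-based optimal scaling in continuous spaces.

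I expect the only genuine subtlety to be the uniqueness and global-maximizer argument for $g$ rather than the algebra: one must rule out spurious stationary points and confirm that the interior critical point is indeed the maximum of the efficiency, so that \eqref{eq:opt_u} characterizes the optimum and not merely a stationary value. Everything else is a routine derivative computation followed by a one-line numerical root-find, and the independence from the target distribution is immediate once $\lambda_1$ has been scaled out of the stationarity condition.
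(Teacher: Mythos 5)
Your proof is correct and follows essentially the same route as the paper: both reduce $\rho(R)=Ra(R)$ to a one-dimensional optimization and scale out $\lambda_1$ by a change of variables (the paper uses $z=\lambda_1^{2/3}l$, you use $u=\tfrac{1}{2}\lambda_1 l^{3/2}=\tfrac{1}{2}z^{3/2}$, and your root $u^\ast\approx 0.562$ matches the paper's $z=1.081$), so that the optimal acceptance rate $2\Phi(-u^\ast)\approx 0.574$ is target-independent. Your explicit stationarity condition $2\Phi(-u)=3u\phi(u)$ and the unimodality check are extra detail the paper omits (it simply solves numerically), but they do not change the argument.
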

\begin{proof}
When $R = l N^\frac{2}{3}$, denote $z = \lambda_1^\frac{2}{3} l$, we have:

\begin{equation}
    \rho(R) = a(R) R = \textcolor{\cdiff}{ 2lN^\frac{2}{3}\left(\Phi\left(-\frac{1}{2}\lambda_1 l^\frac{3}{2}\right) + o(1)\right)
    = \Big(\frac{N}{\lambda_1}\Big)^{\frac{2}{3}} 2 z \Phi\Big(-\frac{1}{2}z^\frac{3}{2}\Big) + o\left(N^\frac{2}{3}\right)}
\end{equation}
It means the optimal value of $z$ is independent of the target distribution $\pi^{(N)}$. As $\Phi$ is known, we can numerically solve $z=1.081$, and the corresponding expected acceptance rate is $a = 0.574$.
\end{proof}

\subsection{Proof of Theorem \ref{thm:lb}}
Denote the current state as $x$ and a new state proposed in LBP-R as $y$. Consider the acceptance rate $A(x, y, u)$ in \eqref{eq:acc_lbp}. Using the fact that, if index $j$ is not flipped then $w_j(y) = w_j(x)$, we have:
\begin{align}
    \frac{\pi(y)}{\pi(x)} \frac{\prod_{r=1}^R w_{u_r}(y)}{\prod_{r=1}^R w_{u_r}(x)}
    = \frac{\pi(y)}{\pi(x)} \frac{\prod_{i=1}^N w_i(y)}{\prod_{i=1}^N w_i(x)}
    = \prod_{i=1}^N \frac{ {\pi_i(y_i)} / {\pi_i(x_i)} g(\pi_i(x_i) / \pi_i(y_i))}{g(\pi_i(y_i) / \pi_i(x_i))}  = 1 \label{eq:using_lb_property}
\end{align}
where \eqref{eq:using_lb_property} takes advantage of the property of a locally balanced function. Hence, the acceptance rate $A(x, y, u)$ can be simplified to:
\begin{equation}
    1 \land \exp\left(\sum_{r=1}^R \log\Big(\frac{1 + \sum_{i=r}^R w_{u_i}(x) / W(x, u)}{1 + \sum_{i=1}^r w_{u_i}(y) / W(y, u)}\Big) \right) \label{eq:acc_before_expectation}
\end{equation}
From the definition in \eqref{eq:acc_lbp}, we have $W(x, u) = W(y, u)$.
Denote $i\land j = \min\{i, j\}$ and $i \lor j = \max\{i, j\}$, we have the following approximation:
\begin{lemma}
\label{lemma:taylor_sumlog}
Define $W= \mathbb{E}_{x,u}[W(x,u)]$. \textcolor{\cdiff}{We have: $\lim_{N\rightarrow 0} \sum_{r=1}^R \log(\frac{1 + \sum_{i=r}^R w_{u_i}(x) / W(x, u)}{1 + \sum_{i=1}^r w_{u_i}(y) / W(y, u)}) - (A + B) = 0$}, where
\begin{align}
    A = & \frac{1}{W}\sum_{r=1}^R (R-r+1)w_{u_i}(x_{u_i}) - r w_{u_i}(y_{u_i}) \label{eq:A}\\
    B = & - \frac{1}{2} \frac{1}{W^2} \sum_{i, j=1}^R \Big[ i\land j\  w_{u_i}(x_{u_i})w_{u_j}(x_{u_j}) - (R - i \lor j + 1) w_{u_i}(y_{u_i})w_{u_j}(y_{u_j})\Big] \label{eq:B}
\end{align}
\end{lemma}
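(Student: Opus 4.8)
The plan is to read the summand of \eqref{eq:acc_before_expectation} as a difference of two logarithms and Taylor-expand each to second order. Writing $S_r(x) = \sum_{i=r}^R w_{u_i}(x)$ and $T_r(y) = \sum_{i=1}^r w_{u_i}(y)$, the $r$-th term is $\log(1 + S_r(x)/W(x,u)) - \log(1 + T_r(y)/W(y,u))$. The quantitative input that makes the expansion legitimate is the $\epsilon$-condition \eqref{eq:smooth_target}: it forces the locally balanced weights $w_j$ to lie between two positive constants depending only on $\epsilon$ and $g$, so that $W(x,u) = W(y,u) = \Theta(N)$ while $S_r(x), T_r(y) = O(R) = O(N^{2/3})$. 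Hence both arguments are $O(N^{-1/3})$ uniformly in $r$, and $\log(1+t) = t - \tfrac12 t^2 + O(t^3)$ may be applied with a uniformly vanishing argument.

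Next I would replace the common random normalizer $W(x,u) = W(y,u)$ by the constant $W = \mathbb{E}_{x,u}[W(x,u)]$ and resum the linear and quadratic contributions. Exchanging the order of summation collects the linear terms into the form $A$ of \eqref{eq:A}, since each $w_{u_i}$ is counted once for every index $r$ in which it appears. The quadratic terms are handled by the same device in two variables:
\begin{align*}
\sum_{r=1}^R S_r(x)^2 &= \sum_{i,j=1}^R (i\land j)\, w_{u_i}(x)\,w_{u_j}(x), \\
\sum_{r=1}^R T_r(y)^2 &= \sum_{i,j=1}^R (R - i\lor j + 1)\, w_{u_i}(y)\,w_{u_j}(y),
\end{align*}
because the pair $(i,j)$ is counted once for each $r\le i\land j$ in the first identity and once for each $r$ with $i\lor j\le r\le R$ in the second. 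Together these reproduce $B$ of \eqref{eq:B}.

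The remaining work, and the heart of the argument, is to show that the three discarded pieces vanish. The cubic Taylor remainder is controlled deterministically: using $|S_r(x)| = O(R-r+1)$ and $W=\Theta(N)$, one gets $\sum_{r=1}^R (S_r(x)/W)^3 = O\big(\sum_{k=1}^R k^3/N^3\big) = O(R^4/N^3) = O(N^{-1/3})$, and likewise for the $T_r(y)$ terms. The two normalizer-replacement errors need a concentration estimate: writing $W(x,u) = \sum_{i=1}^N w_i(x_i) - \sum_{r=1}^R w_{u_r}(x)$, the first sum has independent bounded coordinates and concentrates around its mean with fluctuations of order $\sqrt N$, while the second is $O(N^{2/3})$, so $|W(x,u) - W| = O_{\mathbb{P}}(\sqrt N)$. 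Substituting $\tfrac1{W(x,u)} - \tfrac1{W} = O_{\mathbb{P}}(N^{-3/2})$ against the linear sum (of size $O(N^{4/3})$) gives an error $O_{\mathbb{P}}(N^{-1/6})$, and $\tfrac1{W(x,u)^2} - \tfrac1{W^2} = O_{\mathbb{P}}(N^{-5/2})$ against the quadratic sum (of size $O(N^2)$) gives $O_{\mathbb{P}}(N^{-1/2})$; both vanish.

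I expect the normalizer replacement to be the main obstacle, since it is the only genuinely probabilistic step and the one that fixes the mode of convergence: the conclusion of the lemma must be read as convergence in probability (the printed ``$N\to 0$'' is a typo for $N\to\infty$), which is precisely what the later distributional analysis of $A+B$ requires. The $\epsilon$-condition \eqref{eq:smooth_target} enters here in an essential way, guaranteeing $W=\Theta(N)$ so that the $\Theta(\sqrt N)$ fluctuation of the normalizer is of strictly smaller order and the relative error is negligible; without two-sided control of the weights the denominators could degenerate and the expansion would break down.
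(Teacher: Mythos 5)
Your decomposition is the same as the paper's: a second-order Taylor expansion of each logarithm, the two resummation identities $\sum_{r}S_r(x)^2=\sum_{i,j}(i\land j)\,w_{u_i}(x)w_{u_j}(x)$ and $\sum_r T_r(y)^2=\sum_{i,j}(R-i\lor j+1)\,w_{u_i}(y)w_{u_j}(y)$, a deterministic $O(R^4/N^3)=O(N^{-1/3})$ bound on the cubic remainder, and a concentration argument to replace the random normalizer $W(x,u)$ by its mean $W$. You are also right that the displayed ``$N\to 0$'' is a typo for $N\to\infty$ and that the convergence is in probability.

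The one step that does not hold up as written is the concentration rate $|W(x,u)-W|=O_{\mathbb{P}}(\sqrt N)$. Your justification controls the first sum $\sum_{i=1}^N w_i(x_i)$ by independence (fluctuation $O(\sqrt N)$, fine), but for the subtracted term $\sum_{r=1}^R w_{u_r}(x)$ you only observe that it is $O(N^{2/3})$ in magnitude; that bounds its deviation from its mean by $O(N^{2/3})$, which dominates $\sqrt N$ and would only yield $|W(x,u)-W|=O_{\mathbb{P}}(N^{2/3})$. This is not a cosmetic loss in your own error accounting: you pair the resulting bound $\tfrac{1}{W(x,u)}-\tfrac1W=O_{\mathbb{P}}(N^{-3/2})$ with the crude size $O(N^{4/3})$ of the linear sum to get $O_{\mathbb{P}}(N^{-1/6})$, and with the weaker rate $O_{\mathbb{P}}(N^{2/3})$ the same product is $O_{\mathbb{P}}(1)$, which does not vanish. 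The missing ingredient is that the sequentially selected sum $\sum_{r=1}^R w_{u_r}(x)$ \emph{also} concentrates, to within $O(\sqrt R)=O(N^{1/3})$ of its mean: the paper's Lemma \ref{lemma:concentration_w(x,u)} does exactly this by building a single martingale over the $N$ independent coordinates followed by the $R$ selection steps, with uniformly bounded increments (this is where the $\epsilon$-condition enters), and applying Azuma--Hoeffding over $N+R$ steps. Supplying that martingale step (or, alternatively, replacing the crude $O(N^{4/3})$ bound on the linear sum by the cancellation $A=O_{\mathbb{P}}(1)$, under which even the $O_{\mathbb{P}}(N^{2/3})$ rate suffices) closes the gap; the rest of your argument is sound.
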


To analyze $A$ and $B$, we reverse the order of $x$ and $u$. In particular, instead of first sampling $x \sim \pi(x)$, then sampling $u \sim p(x|u)$, we use a reversed order where we first \textcolor{\cdiff}{determine} the indices $u$, then the values of $x_u$, and finally the values of $x_{-u}$.
\begin{lemma}
\label{lemma:posterior}
The joint distribution $p(x, u) = \pi(x) p(u|x)$ can be decomposed in the following form:
\begin{equation}
p(x, u) = \prod_{r=1}^R p(u_r|u_{1:r-1}) \ \prod_{r=1}^R p(x_{u_r}|u, x_{u_{1:r-1}}) \  p(x_{-u}|u, x_u)
\end{equation}
Denote $j \notin u_{1:r-1}$ represents $j \neq u_i$ for $i = 1, ..., r-1$, the conditional probabilities are
\begin{align}
    p(u_r = j|u_{1:r-1}) = \frac{p_j w_j(1) 1_{\{j \notin u_{1:r-1}\}}}{\sum_{i=1}^N p_i w_i(1) 1_{\{i \notin u_{1:r-1}\}}} + O(N^{-\frac{5}{2}})  \label{eq:posterior_u} \\
    p(x_j=1|u, x_{1:j-1}, u_r = j) = \frac{1}{2} + r \frac{w_j(0) - w_j(1)}{W} + O(N^{-\frac{2}{3}}) \label{eq:posterior_x}
\end{align}
\end{lemma}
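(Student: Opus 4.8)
The plan is to establish the reversed factorization by the chain rule and then pin down the two conditionals through a single mechanism: the local-balance identity
\[
(1-p_j)\,w_j(0) = p_j\,w_j(1), \qquad \text{equivalently} \qquad \mathbb{E}_{\pi_j}[w_j(x_j)] = 2\,p_j\,w_j(1),
\]
which follows directly from $g(t) = t\,g(1/t)$ evaluated at $t = p_j/(1-p_j)$. Writing the forward joint density as $p(x,u) = \pi(x)\prod_{r=1}^R w_{u_r}(x_{u_r})/Z_r(x)$ with normalizer $Z_r(x) = \sum_{j \notin u_{1:r-1}} w_j(x_j) = W(x,u) + \sum_{t=r}^R w_{u_t}(x)$, the decomposition in the statement is simply the chain rule carried out in the order $u$, then $x_u$, then $x_{-u}$; the substance is the asymptotic form of the two displayed conditionals.

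For \eqref{eq:posterior_u} I would compute the partial marginal $p(u_{1:r}) = \mathbb{E}_x[\prod_{t=1}^r w_{u_t}(x_{u_t})/Z_t(x)]$ and form the ratio $p(u_r=j \mid u_{1:r-1}) = p(u_{1:r})/p(u_{1:r-1})$. Each $Z_t(x)$ is a sum of $\Theta(N)$ bounded terms, so it concentrates around its mean $W$ with fluctuations $O(\sqrt N)$; replacing $1/Z_t(x)$ by $1/W$ and Taylor-expanding the remainder decouples the coordinates, since $\pi$ is a product measure and the distinct numerators $w_{u_t}(x_{u_t})$ each involve a single coordinate. Integrating out those coordinates replaces $w_j(x_j)$ by $\mathbb{E}_{\pi_j}[w_j(x_j)] = 2\,p_j\,w_j(1)$; after normalizing over the surviving indices the common factor $2$ cancels, leaving $p_j\,w_j(1)/\sum_{i \notin u_{1:r-1}} p_i\,w_i(1)$, with the discarded fluctuations collected into the error.

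For \eqref{eq:posterior_x} I would isolate the $x_j$-dependence of $p(x_j, u)$ when $j = u_r$. Coordinate $j$ enters only through $\pi_j(x_j)$, through the single numerator $w_j(x_j)$ at step $r$, and through the $r$ denominators $Z_1, \dots, Z_r$ in which $j$ is still a candidate (after selection it is removed); the conditioning on the earlier coordinates $x_{1:j-1}$ enters only through these shared normalizers and is subleading. Splitting $Z_s(x) = Z_s^{(-j)}(x) + w_j(x_j)$ and expanding $(1 + w_j(x_j)/Z_s^{(-j)})^{-1} \approx 1 - w_j(x_j)/W$, the $x_j$-dependent weight is proportional to $\pi_j(x_j)\,w_j(x_j)\,(1 - r\,w_j(x_j)/W)$. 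Forming the odds $p(x_j=1)/p(x_j=0)$, the prefactor $\pi_j(1)w_j(1)/(\pi_j(0)w_j(0))$ is exactly $1$ by local balance, so the baseline is $\tfrac12$; the accumulated denominator correction shifts the odds by $1 + r(w_j(0)-w_j(1))/W$, which translates into the stated linear-in-$r$ shift of the posterior mean.

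The main obstacle is controlling the random denominators $Z_t(x)$ uniformly and tracking the cross-step coupling induced by earlier selections. Because the indices $u_{1:r-1}$ reweight the posterior of the already-fixed coordinates, I must verify that removing $r-1 \le R = O(N^{2/3})$ coordinates perturbs $Z_t$ and the empirical weight average only negligibly relative to $W = \Theta(N)$, and that the second-order fluctuation terms dropped in the Taylor expansion remain summable to the claimed order. Concretely, I would bound the moments of $Z_t(x) - W$ using independence across coordinates together with the uniform bounds on $w_j$ guaranteed by $\mathcal{P}_\epsilon$, propagate these through the ratio, and check that the error accumulated over the $r \le R$ steps stays within $O(N^{-2/3})$ for the posterior mean and $O(N^{-5/2})$ for the selection probability.
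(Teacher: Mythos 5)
Your proposal follows essentially the same route as the paper: both obtain the decomposition by the chain rule, derive \eqref{eq:posterior_u} by concentrating the random normalizers around $W$ and invoking the local-balance identity $p_jw_j(1)=(1-p_j)w_j(0)$ to integrate out the coordinates, and derive \eqref{eq:posterior_x} by isolating the $x_j$-dependence as $\pi_j(x_j)w_j(x_j)\prod(1-w_j(x_j)/W)$, cancelling the prefactor by local balance, and Taylor-expanding the accumulated denominator corrections. The only cosmetic difference is that the paper pins down the normalization of \eqref{eq:posterior_u} via a pairwise-ratio plus $\sum_i\mathbb{P}(u_1=i)=1$ argument rather than normalizing directly, which is the same computation.
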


With the conditional distribution in Lemma \ref{lemma:posterior}, we are able to give a concentration property of the term $B$ and show it is safe to ignore:
\begin{lemma}
\label{lemma:B}
With a probability larger than $1 - O(\exp(-N^\frac{1}{2}))$, $B = O\big(N^{-\frac{1}{12}}\big)$.
\end{lemma}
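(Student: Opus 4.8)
The plan is to expose a hidden cancellation inside $B$ that collapses its naive $O(1)$ size down to $O(N^{-1/12})$, and then to control the surviving fluctuation by concentration. Write $a_r = w_{u_r}(x_{u_r})$ and $b_r = w_{u_r}(y_{u_r})$; by the $\epsilon$-restriction in \eqref{eq:smooth_target} these are bounded, $a_r, b_r \in [g_-, g_+]$ with $0 < g_- \le g_+ < \infty$, and $W = \Theta(N)$ with $1/W^2 = \Theta(N^{-2})$. Using the identities $i \land j = \sum_{k=1}^R \mathbf{1}\{k \le i\}\mathbf{1}\{k \le j\}$ and $R - i \lor j + 1 = \sum_{k=1}^R \mathbf{1}\{k \ge i\}\mathbf{1}\{k \ge j\}$, the double sum defining $B$ in \eqref{eq:B} becomes a difference of sums of squared partial sums; after reversing the summation index in the second one I would write
\begin{equation}
  -2W^2 B = \sum_{k=1}^R \left(A_k^2 - \tilde B_k^2\right), \qquad A_k = \sum_{i=k}^R a_i, \qquad \tilde B_k = \sum_{i=1}^{R-k+1} b_i .
\end{equation}
The point of this rewriting is that $A_k$ and $\tilde B_k$ are now sums of the \emph{same} number $R-k+1$ of bounded weights, so the factorization $A_k^2 - \tilde B_k^2 = (A_k - \tilde B_k)(A_k + \tilde B_k)$ isolates a large but common factor $A_k + \tilde B_k = \Theta(N^{2/3})$ from a small difference factor $A_k - \tilde B_k$.

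Next I would bound the two factors. Since $\mathbb{E}[a_i] = \mathbb{E}[b_i] = \tfrac12\left(w_{u_i}(0)+w_{u_i}(1)\right) + O(N^{-1/3})$ by \eqref{eq:posterior_x} (the $x_{u_r}$ being $\mathrm{Bernoulli}(\tfrac12 + O(N^{-1/3}))$), and since the index marginals are position-independent to leading order by \eqref{eq:posterior_u}, the means $\mu_k = \mathbb{E}A_k$ and $\tilde\mu_k = \mathbb{E}\tilde B_k$ satisfy $\mu_k - \tilde\mu_k = O(N^{1/3})$: the leading symmetric contribution $\tfrac12\sum\left(w_{u_i}(0)+w_{u_i}(1)\right)$ cancels up to the $O(k)=O(R)$ boundary terms, whose position-averaged contribution is $O(R\cdot N^{-1/3}) = O(N^{1/3})$, while the first-order biases from \eqref{eq:posterior_x} add a further $O(R\cdot R/W) = O(N^{1/3})$. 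For the fluctuation, conditionally on the selected indices the $a_i$ are independent bounded variables, and revealing the pairs $(u_r, x_{u_r})$ one at a time yields a Doob martingale for $A_k$ with $O(1)$ bounded differences (the without-replacement coupling perturbs each future conditional mean by only $O(1/N)$, summing to $O(N^{-1/3})$). Azuma--Hoeffding then gives $\Pr(|A_k - \mu_k| > t) \le 2\exp(-c\,t^2/R)$, and likewise for $\tilde B_k$; choosing $t = \Theta(N^{7/12})$ makes $t^2/R = \Theta(N^{1/2})$, so each deviation exceeds $N^{7/12}$ with probability only $O(\exp(-N^{1/2}))$.

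Combining these on the good event (a union bound over the $R = O(N^{2/3})$ indices $k$ together with the concentration of $W$, still of total failure probability $O(\exp(-N^{1/2}))$), one has $|A_k - \tilde B_k| \le |\mu_k - \tilde\mu_k| + |A_k - \mu_k| + |\tilde B_k - \tilde\mu_k| = O(N^{7/12})$, whence $|A_k^2 - \tilde B_k^2| = O(N^{2/3}\cdot N^{7/12}) = O(N^{5/4})$. Summing over the $O(N^{2/3})$ values of $k$ and dividing by $2W^2 = \Theta(N^2)$ yields $|B| = O(N^{2/3}\cdot N^{5/4}/N^2) = O(N^{-1/12})$, as claimed.

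The main obstacle is the cancellation step, not the concentration: a crude bound on $B$ only gives $O(1)$, and because each $A_k^2$ has mean of order $N^{4/3}$, concentration by itself cannot shrink it --- the entire gain comes from pairing $A_k^2$ with $\tilde B_k^2$ and showing their means agree to within $O(N^{1/3})$. Making this rigorous requires using both parts of Lemma~\ref{lemma:posterior} carefully: the $O(N^{-1/3})$ correction in \eqref{eq:posterior_x} must be tracked to confirm it does not spoil the mean cancellation, and the position-(near-)independence of the index law \eqref{eq:posterior_u} is what forces the boundary terms to average out rather than accumulate to $O(R)$. A secondary technical point is verifying that the without-replacement dependence in the index selection keeps the martingale differences at $O(1)$, which is exactly what lets Azuma--Hoeffding deliver deviations at the scale $N^{7/12}$ matching the target failure probability $O(\exp(-N^{1/2}))$.
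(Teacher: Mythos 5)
Your proof is correct, but it reaches the bound by a genuinely different route than the paper. The paper argues in two stages: first it averages $B$ over $x$ conditionally on $u$ using Lemma~\ref{lemma:posterior}, at which point the leading coefficient $i\land j - (R - i\lor j + 1) = i + j - R - 1$ has vanishing double sum over $i,j$, so $\mathbb{E}[B] = O(N^{-4/3})$; it then applies McDiarmid's bounded-differences inequality directly to the quadratic form $F(x,u)$, with per-coordinate oscillations $O(R^2/N^2)$, to get concentration at scale $R^{5/2}/N^{7/4} = N^{-1/12}$. You instead rewrite the double sum as $\sum_k (A_k^2 - \tilde B_k^2)$ via the indicator identities, factor each difference of squares, and extract the smallness from (i) the near-equality of the means of the equal-length partial sums $A_k$ and $\tilde B_k$ and (ii) Azuma--Hoeffding applied to each linear partial sum plus a union bound over $k$. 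The cancellation you exploit is the same one the paper uses, just in factored form (the symmetry between the $x$-weights and $y$-weights under the $\tfrac12 + O(r/W)$ conditional law \eqref{eq:posterior_x}), and the two concentration arguments land on exactly the same exponent and failure probability. What your version buys is that each concentration step is applied to a \emph{linear} sum of boundedly-dependent terms, which is more elementary than computing the single-coordinate oscillation of a quadratic form; the price is the union bound over $O(N^{2/3})$ values of $k$ and the need to handle small $R-k+1$ separately (there the trivial bound $|A_k^2 - \tilde B_k^2| \le (R-k+1)^2$ already suffices). One step to tighten: the bound $\mu_k - \tilde\mu_k = O(N^{1/3})$ is right, but not for the reason of "boundary terms" --- the two partial sums have identical length $R-k+1$, so the leading symmetric parts cancel up to the position-dependence of the index marginals (Proposition~\ref{prop:cond}), and the dominant residue is the accumulated $O(r/W)$ bias from \eqref{eq:posterior_x}, totalling $O(R^2/W) = O(N^{1/3})$; since this sits below the fluctuation scale $N^{7/12}$, your argument closes as written.
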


For term $A$, we use martingale central limit theorem with convergence rate \citep{haeusler1988rate} to bound the Kolmogorov-Smirnov statistic.
\begin{lemma}
\label{lemma:A_good}
When $R = l N^\frac{2}{3}$, $\lambda_1$ defined as \eqref{eq:lambda_lbp}, we have:
\begin{equation}
    |\mathbb{P}(\frac{A - \mu}{\sigma} \ge t) - \Phi(t)| = O\big( N^{-\frac{1}{12}}\big), \quad \mu = - \frac{1}{2} \lambda_1^2 l^3, \quad \sigma^2 = \lambda_1^2 l^3
    \label{eq:mclt_rate}
\end{equation}
\end{lemma}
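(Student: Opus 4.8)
The plan is to exhibit $A$ as the endpoint of a martingale adapted to the reversed sampling order of Lemma~\ref{lemma:posterior} and then quote the Berry--Esseen-type martingale central limit theorem of \citet{haeusler1988rate}. First I would fix the filtration $\mathcal{F}_r = \sigma(u_{1:r},\,x_{u_{1:r}})$, which at step $r$ reveals the $r$-th flipped index together with its value, and set $D_r = \mathbb{E}[A\mid\mathcal{F}_r]-\mathbb{E}[A\mid\mathcal{F}_{r-1}]$, so that $A-\mu=\sum_{r=1}^R D_r$ is a sum of martingale differences with $\mu=\mathbb{E}[A]$. Because each $w_j$ is bounded away from $0$ and $\infty$ on $\mathcal{P}_\epsilon$ and the largest coefficient in \eqref{eq:A} is $O(R)$ while $W=\Theta(N)$, every increment obeys $|D_r|=O(R/W)=O(N^{-1/3})$ uniformly in $r$; since there are $R=\lfloor lN^{2/3}\rfloor$ of them, this is consistent with a limiting variance of order one.

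Next I would compute the predictable characteristics. Using the conditional laws \eqref{eq:posterior_u}--\eqref{eq:posterior_x}, I would evaluate the predictable quadratic variation $\sum_{r=1}^R \mathbb{E}[D_r^2\mid\mathcal{F}_{r-1}]$ and show it converges to $\sigma^2=\lambda_1^2 l^3$, with $\lambda_1$ collapsing to exactly \eqref{eq:lambda_lbp} after substituting $R=\lfloor lN^{2/3}\rfloor$ and $W\approx N\,\mathbb{E}_x[\tfrac1N\sum_i w_i(x_i)]$. The $\Theta(1)$ contributions arising from the random flip values and from the random choice of flipped indices must be combined carefully here, and this algebraic reduction to \eqref{eq:lambda_lbp} is the most delicate bookkeeping of the mean/variance step. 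For the centering, rather than tracking every second-order term I would pin $\mu=-\tfrac12\sigma^2$ through the reversibility identity $\mathbb{E}[e^{A}]=1$ (a consequence of the detailed balance of LBP-$R$ together with $B\to0$ from Lemma~\ref{lemma:B}): a Gaussian limit of a log-acceptance ratio whose exponential has unit mean is necessarily centered at minus half its variance, giving $\mu=-\tfrac12\lambda_1^2 l^3$.

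With $\mu$ and $\sigma^2$ fixed, I would invoke \citet{haeusler1988rate}, which bounds the Kolmogorov--Smirnov distance of $(A-\mu)/\sigma$ to $\mathcal N(0,1)$ by a fixed positive power of the sum of a Lyapunov moment term $\sum_{r}\mathbb{E}|D_r/\sigma|^{3}$ and a fluctuation term $\mathbb{E}\big|\sum_r\mathbb{E}[D_r^2\mid\mathcal{F}_{r-1}]/\sigma^2-1\big|^{q}$. From $\sigma=\Theta(1)$ and $|D_r|=O(N^{-1/3})$ the Lyapunov sum is $O(R\,N^{-1})=O(N^{-1/3})$, and I expect the applicable exponent to be $\tfrac14$, which turns $O(N^{-1/3})$ into the advertised $O(N^{-1/12})$.

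The hard part will be the fluctuation term: I must show that $\sum_r\mathbb{E}[D_r^2\mid\mathcal{F}_{r-1}]$ concentrates around $\sigma^2$ tightly enough that $\mathbb{E}\big|\sum_r\mathbb{E}[D_r^2\mid\mathcal{F}_{r-1}]/\sigma^2-1\big|^{q}=O(N^{-1/3})$ and hence does not dominate the Lyapunov term. This is a concentration estimate for the predictable quadratic variation over the randomness of the selected index set $u_{1:R}$, and I would obtain it by reusing the high-probability machinery behind Lemma~\ref{lemma:B}, carefully propagating the weak dependence created by sampling without replacement (the $O(N^{-5/2})$ correction in \eqref{eq:posterior_u}) and the $r$-dependent value bias (the $O(N^{-2/3})$ correction in \eqref{eq:posterior_x}). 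A secondary technical obligation is to check that the contribution of future summands to each $D_r$ stays within the same $N^{-1/3}$ budget uniformly in $r$, so that the increments are genuinely governed by the local step-$r$ randomness.
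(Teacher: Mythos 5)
Your overall architecture coincides with the paper's: decompose $A-\mathbb{E}[A]$ into martingale increments of size $O(R/W)=O(N^{-1/3})$, verify that the predictable quadratic variation is $\Theta(1)$ and concentrates, and feed the Lyapunov sum $O(N^{-1/3})$ into the rate theorem of \cite{haeusler1988rate} with exponent $\tfrac{1}{4}$ (the paper takes $\delta=\tfrac12$ there) to land on $O(N^{-1/12})$. Two execution choices differ. First, you use the Doob martingale of $A$ with the interleaved filtration $\sigma(u_{1:r},x_{u_{1:r}})$, whereas the paper runs a two-phase martingale: it first reveals all of $u_{1:R}$ with $M_n\equiv 0$, and only then adds the conditionally centered summands of \eqref{eq:A} one at a time, so each increment is exactly one centered term and never involves conditional expectations of future summands. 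Your route makes $A-\mathbb{E}[A]=\sum_r D_r$ exact by construction (the paper instead has to argue that its random compensator concentrates, via Proposition about the conditional law of $u$), but it shifts the burden onto the "secondary technical obligation" you name at the end: each $D_r$ picks up the change in $\mathbb{E}[\sum_{r'>r}(\cdots)\mid\mathcal{F}_r]$, and you must show this stays $O(N^{-1/3})$ uniformly. That is doable with \eqref{eq:posterior_u}--\eqref{eq:posterior_x} but is real work you have only gestured at.

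The one genuine soft spot is your derivation of the centering. The paper computes $\mathbb{E}[A]$ directly from Lemma~\ref{lemma:posterior}, obtaining $\mu=-\tfrac{R(R+1)^2}{8W^2}\sigma_1^2+O(N^{-2/3})$, i.e.\ an explicit \emph{rate} for $|\mathbb{E}[A]+\tfrac12\sigma^2|$. Your shortcut via $\mathbb{E}[e^{A}]=1$ has two problems. (i) Detailed balance gives $\mathbb{E}[e^{A+B+\mathrm{err}}]=1$ for the full log-Hastings ratio of \eqref{eq:acc_lbp}; stripping off $B$ requires controlling $\mathbb{E}[e^{A}(e^{B+\mathrm{err}}-1)]$, which needs the high-probability bound of Lemma~\ref{lemma:B} together with a deterministic bound on the exceptional event, not merely $B\to 0$ in probability. (ii) Even granting uniform integrability (which does follow from Azuma--Hoeffding, since the increments are $O(N^{-1/3})$ over $O(N^{2/3})$ steps), the argument only yields $\mathbb{E}[A]\to-\tfrac12\sigma^2$ with no rate. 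But the lemma asserts a Kolmogorov--Smirnov bound of $O(N^{-1/12})$ with $\mu$ \emph{defined} as $-\tfrac12\lambda_1^2l^3$; recentering from $\mathbb{E}[A]$ to this $\mu$ costs an additive $O(|\mathbb{E}[A]+\tfrac12\sigma^2|/\sigma)$ in the KS distance, so you need that discrepancy to be $O(N^{-1/12})$ quantitatively. Without either a direct computation of $\mathbb{E}[A]$ (as in the paper) or a quantified version of the unit-mean argument, this step of your proof does not deliver the stated rate.
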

By \eqref{eq:mclt_rate}, the expectation w.r.t. $A$ asymptotically equals to the expectation on $\mathcal{N}(\mu, \sigma^2)$.
The final step to prove Theorem~\ref{thm:lb} is to exploit a property of the normal distribution.
\begin{lemma}
\label{lemma:normal}
If $Z \sim \mathcal{N}(\mu, \sigma^2)$, then we have:
\begin{equation}
    \mathbb{E}[1\land \exp(Z)] = \Phi\Big(\frac{\mu}{\sigma}\Big) + \exp\Big(\mu + \frac{\sigma^2}{2}\Big) \Phi\Big(-\sigma - \frac{\mu}{\sigma}\Big)
\end{equation}
where $\Phi$ is the c.d.f.\ of the standard normal distribution.
\end{lemma}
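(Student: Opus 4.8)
\textbf{Proof proposal for Lemma \ref{lemma:normal}.}
The plan is to evaluate the expectation by splitting on the sign of $Z$, since the clipping in $1 \land \exp(Z)$ is governed entirely by whether $\exp(Z) \le 1$, i.e.\ whether $Z \le 0$. Concretely, because $\exp$ is increasing and $\exp(0) = 1$, we have $1 \land \exp(Z) = \exp(Z)\,1_{\{Z \le 0\}} + 1_{\{Z > 0\}}$, so that
\begin{equation}
    \mathbb{E}[1 \land \exp(Z)] = \mathbb{E}[\exp(Z)\,1_{\{Z \le 0\}}] + \mathbb{P}(Z > 0).
\end{equation}
This reduces the problem to computing a tail probability and a truncated moment-generating integral, each of which can be handled in closed form.

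For the second term, I would standardize via $W = (Z - \mu)/\sigma \sim \mathcal{N}(0,1)$, giving $\mathbb{P}(Z > 0) = \mathbb{P}(W > -\mu/\sigma) = \Phi(\mu/\sigma)$, which already matches the first summand in the claimed identity. For the first term, I would again write $Z = \mu + \sigma W$ and express the expectation as the integral
\begin{equation}
    \mathbb{E}[\exp(Z)\,1_{\{Z \le 0\}}] = \int_{-\infty}^{-\mu/\sigma} \exp(\mu + \sigma w)\,\frac{1}{\sqrt{2\pi}}\exp\Big(-\tfrac{w^2}{2}\Big)\,dw,
\end{equation}
where the truncation limit $-\mu/\sigma$ comes from the condition $Z \le 0 \iff W \le -\mu/\sigma$.

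The key algebraic step is completing the square in the exponent: $\mu + \sigma w - \tfrac{1}{2}w^2 = \big(\mu + \tfrac{1}{2}\sigma^2\big) - \tfrac{1}{2}(w - \sigma)^2$. Pulling the constant $\exp(\mu + \sigma^2/2)$ outside the integral and substituting $v = w - \sigma$ turns the remaining integrand back into a standard normal density, while the truncation limit shifts to $-\mu/\sigma - \sigma$. This yields $\exp(\mu + \sigma^2/2)\,\Phi(-\sigma - \mu/\sigma)$, the second summand of the claimed expression. Adding the two contributions completes the proof. The computation is entirely routine; the only point demanding care is tracking how the truncation limit transforms under the shift $v = w - \sigma$, so that the clipping boundary is correctly carried into the argument of the second $\Phi$ factor.
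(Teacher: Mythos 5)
Your proposal is correct and follows essentially the same route as the paper's proof: split the expectation at $Z=0$, identify $\mathbb{P}(Z>0)=\Phi(\mu/\sigma)$, and complete the square in the truncated exponential integral to obtain $\exp(\mu+\sigma^2/2)\,\Phi(-\sigma-\mu/\sigma)$. The only cosmetic difference is that you standardize to $W=(Z-\mu)/\sigma$ before completing the square, whereas the paper works directly in the $z$ variable; the computations are equivalent.
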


By Lemma~\ref{lemma:A_good}, \ref{lemma:normal}, we have the expectation of \eqref{eq:acc_before_expectation}, which is the expected accept rate, equals to:
\begin{align}
    \mathbb{E}[a(R)] &= \Phi\Big(-\frac{1}{2}\lambda_1 l^\frac{3}{2}\Big) + \exp(0)\Phi\Big(-\frac{1}{2}\lambda_1 l^\frac{3}{2}\Big) = 2\Phi\Big(-\frac{1}{2}\lambda_1 l^\frac{3}{2}\Big)
\end{align}

\subsection{Optimal Scaling for Random Walk Metropolis}
We denote the Random Walk Metropolis in discrete space as RWM-$R$, where $R$ refers to flipping $R$ indices in each M-H step. Under the Bernoulli distribution, a site is more likely to stay at high probability position, so if we randomly flip a site, it is more likely to decrease its probability. That is, intuitively, the acceptance rate will decrease exponentially as the scale $R$ increases. Consequently, the optimal scale for RWM-$R$ should be $O(1)$. Though this is not a rigorous proof, the constant scaling indicates that it will be hard to directly prove an asymptotic theorem for RWM-$R$. To address this difficulty, we first restrict our target distribution to a smaller class of Bernoulli distributions $\mathcal{P}^{(\beta)}_\epsilon \subset \mathcal{P}_\epsilon$, which is formally defined as follows.
For a fixed $\epsilon \in (0, \frac{1}{4})$ and a fixed $\beta > 0$, define
\textcolor{\cdiff}{
\begin{align}
    \mathcal{P}^{(\beta)}_\epsilon := \left\{ \pi^{(N)}: \frac{1}{2} - \frac{1}{2N^\beta} + \frac{\epsilon}{N^\beta} < p_j \land (1 - p_j) < \frac{1}{2} - \frac{\epsilon}{N^\beta} \right\} \label{eq:p_beta}
\end{align}
}
When $N$ is large, each $p_j$ will be very close to $\frac{1}{2}$. In this way, the acceptance rate will not drop too fast when $R$ is increased, and a non-constant $R$ will be permitted. This enables us to prove:
\begin{theorem}
\label{thm:rw}
For arbitrary sequence of target distributions $\{\pi^{(N)}\}_{N=1}^\infty \subset \mathcal{P}_\epsilon^{(\beta)}$, the M-H sampler RWM-R obtains the following, if $R = l N^{2\beta}$,
\begin{align}
    \lim_{N\rightarrow \infty} a(R) - 2\Phi\left(-\frac{1}{2}\lambda_2 l^\frac{1}{2}\right) \label{eq:acc_rwm}
\end{align}
where $\Phi$ is the c.d.f.\ of the standard normal distribution and $\lambda_2$ only depends on $\pi^{(N)}$.
\begin{equation}
    \label{eq:lambda_rwm}
    \lambda_2^2 = \lambda_2^2(\pi^{(N)}) = \frac{2}{N}\sum_{i=1}^N N^{2\beta}(2p_i-1)\log \frac{p_i}{1 - p_i}
\end{equation}
\end{theorem}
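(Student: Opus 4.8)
The plan is to mirror the structure of the LBP argument but to exploit the fact that the RWM proposal is symmetric, which removes the need for the locally-balanced simplification and the second-order term $B$. Since RWM-$R$ flips a uniformly random set $u$ of $R$ indices, we have $q(x,y) = q(y,x)$, so the acceptance probability collapses to $A(x,y) = 1 \land \pi(y)/\pi(x) = 1 \land \exp(S)$, where $S = \log\frac{\pi(y)}{\pi(x)} = \sum_{j \in u} Z_j$ and $Z_j := \log\frac{\pi_j(1-x_j)}{\pi_j(x_j)} = (1-2x_j)\log\frac{p_j}{1-p_j}$ is the per-site log-increment. Here $x\sim\pi$ and $u$ is drawn uniformly, and independently of $x$, from the $R$-subsets of $\{1,\dots,N\}$.

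First I would compute the first two moments of $S$ over the joint law of $(x,u)$. Writing $\xi_j = 1_{\{j\in u\}}$, one has $E[Z_j] = (1-2p_j)\log\frac{p_j}{1-p_j}$ and $E[Z_j^2] = (\log\frac{p_j}{1-p_j})^2$, whence $\mu := E[S] = \frac{R}{N}\sum_j E[Z_j]$ and, since the $Z_j$ are independent across $j$ and $u$ is independent of $x$, a short sampling-without-replacement variance computation gives $\sigma^2 := \mathrm{Var}(S) = \frac{R}{N}\sum_j E[Z_j^2]$ up to terms that vanish as $N\to\infty$. The restriction to $\mathcal{P}^{(\beta)}_\epsilon$ forces $p_j = \tfrac12 + O(N^{-\beta})$, so a second-order Taylor expansion of $\log\frac{p_j}{1-p_j}$ around $\tfrac12$ yields the key identity $(\log\frac{p_j}{1-p_j})^2 = 2(2p_j-1)\log\frac{p_j}{1-p_j} + O(N^{-4\beta})$ per site; summing and inserting $R = lN^{2\beta}$ gives $\sigma^2 \to \lambda_2^2 l$ and $\mu \to -\tfrac12\lambda_2^2 l$, i.e. the asymptotic relation $\mu = -\tfrac12\sigma^2$.

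Next I would establish that $S$ is asymptotically Gaussian, playing the role of Lemma~\ref{lemma:A_good}. Conditioning on $u$, $S\mid u = \sum_{j\in u}Z_j$ is a sum of $R$ independent, bounded increments with $|Z_j| = O(N^{-\beta})$; the Berry--Esseen theorem then bounds the Kolmogorov distance of $(S-\mu_u)/\sigma_u$ to $\Phi$ by the Lyapunov ratio, which is $O(N^{-\beta})$ after substituting $R = lN^{2\beta}$. It remains to control the randomness of $u$: I would show that the conditional parameters $\mu_u = \sum_{j\in u}E[Z_j]$ and $\sigma_u^2 = \sum_{j\in u}\mathrm{Var}(Z_j)$ concentrate around $\mu$ and $\sigma^2$, since their fluctuations over $u$ are of order $RN^{-4\beta} = lN^{-2\beta}\to 0$, negligible compared with $\sigma^2 = O(l)$. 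Hence $S$ converges in distribution to $\mathcal{N}(-\tfrac12\lambda_2^2 l,\ \lambda_2^2 l)$ jointly over $(x,u)$.

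Finally, applying Lemma~\ref{lemma:normal} with $\mu = -\tfrac12\sigma^2$ and $\sigma = \lambda_2 l^{1/2}$ gives $\mu + \sigma^2/2 = 0$ and $\mu/\sigma = -\sigma - \mu/\sigma = -\tfrac12\lambda_2 l^{1/2}$, so $E[1\land\exp(S)] \to 2\Phi(-\tfrac12\lambda_2 l^{1/2})$, which is exactly \eqref{eq:acc_rwm}. I expect the main obstacle to be the CLT step under the double source of randomness: unlike the LBP proof, the flip set $u$ is not fixed, so one must simultaneously control the Berry--Esseen error conditionally on $u$ and argue that the conditional mean and variance concentrate. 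This is precisely where the shrinking window $\mathcal{P}^{(\beta)}_\epsilon$ is indispensable, since it keeps each $Z_j$ small enough for both the Lyapunov ratio and the parameter fluctuations to vanish, while preserving the exact asymptotic mean--variance identity $\mu=-\tfrac12\sigma^2$ that produces the clean factor of two.
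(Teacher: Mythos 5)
Your proposal is correct and reaches the theorem by the same overall skeleton as the paper: reduce the acceptance probability of the symmetric proposal to $1\land\exp(S)$ with $S=\sum_{j\in u}(1-2x_j)\log\frac{p_j}{1-p_j}$, compute $\mu$ and $\sigma^2$, use the shrinking window $p_j=\tfrac12+O(N^{-\beta})$ to obtain the asymptotic identity $\sigma^2=-2\mu=\lambda_2^2 l$, prove asymptotic normality of $S$, and finish with Lemma~\ref{lemma:normal}. The one place where you genuinely depart from the paper is the normality step. The paper builds a single martingale of length $2R$ over the filtration that first reveals the indices $u_1,\dots,u_R$ and then the spins $x_{u_1},\dots,x_{u_R}$, and invokes the martingale CLT together with the rate of \cite{haeusler1988rate} to get an explicit Kolmogorov bound $O(R^{1/4}/N^{5/4})$. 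You instead condition on $u$, note that $S\mid u$ is a sum of $R$ independent bounded increments of size $O(N^{-\beta})$, apply the classical Berry--Esseen theorem to get a rate $O(R^{-1/2})=O(N^{-\beta})$, and then separately show that the conditional parameters $\mu_u,\sigma_u^2$ concentrate (fluctuations $O(\sqrt{R}\,N^{-2\beta})=o(1)$ against $\sigma^2=\Theta(l)$). Your route is more elementary and exploits the product structure of $\pi$ more directly; the paper's martingale formulation handles both sources of randomness in one stroke and is the same machinery it needs anyway for the LBP case, where the index-selection probabilities depend on the state and a conditional-independence decomposition is no longer available. One small correction of emphasis: the Lyapunov/Berry--Esseen ratio $O(R^{-1/2})$ vanishes for any $R\to\infty$ regardless of how small the $Z_j$ are; what the window $\mathcal{P}^{(\beta)}_\epsilon$ is really buying is that $\sigma^2$ stays of constant order for a growing $R$ (so the acceptance rate does not collapse to zero) and that $4p_j(1-p_j)\to 1$ so the mean--variance identity $\mu=-\sigma^2/2$ holds asymptotically — which you do also note.
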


\begin{corollary}
\label{cor:rw_rate}
The optimal scale $R = l N^{2\beta}$ is obtained when the expected acceptance rate is 0.234, independent of the target distribution.
\end{corollary}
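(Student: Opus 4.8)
The plan is to follow the template of Corollary~\ref{cor:order}, since Theorem~\ref{thm:rw} already furnishes the asymptotic acceptance rate $a(R) = 2\Phi(-\tfrac{1}{2}\lambda_2 l^{1/2}) + o(1)$ under the scaling $R = l N^{2\beta}$. The efficiency to be maximized is the expected jump distance, and because an RWM-$R$ proposal flips exactly $R$ sites it contributes Hamming distance $R$ when accepted and $0$ otherwise; hence I would begin by writing $\rho(R) = a(R)\,R$, exactly as in the LBP case.

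First I would insert the asymptotic form of $a(R)$ and seek a reparametrization that cleanly separates a target-dependent prefactor from a universal profile. The analogue of $z = \lambda_1^{2/3} l$ here is $z = \lambda_2^2\, l$, which converts the argument of $\Phi$ into $-\tfrac{1}{2}z^{1/2}$ and gives
\begin{equation}
    \rho(R) = a(R)\, R = 2 l N^{2\beta}\Big(\Phi\big(-\tfrac{1}{2}\lambda_2 l^{1/2}\big) + o(1)\Big) = \frac{N^{2\beta}}{\lambda_2^2}\, 2z\,\Phi\big(-\tfrac{1}{2}z^{1/2}\big) + o\!\left(N^{2\beta}\right).
\end{equation}
For a fixed target sequence the prefactor $N^{2\beta}/\lambda_2^2$ is a positive constant (under $\mathcal{P}_\epsilon^{(\beta)}$ the quantity $\lambda_2^2$ stays bounded above and below away from $0$ because of $\epsilon$), so the location of the maximum in $z$ is dictated solely by the universal profile $h(z) = 2z\,\Phi(-\tfrac{1}{2}z^{1/2})$, which carries no dependence on $\pi^{(N)}$. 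This factorization is the heart of the ``independent of the target distribution'' assertion.

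Next I would maximize $h$. Differentiating yields the stationarity condition $2\Phi(-\tfrac{1}{2}z^{1/2}) = \tfrac{1}{2}z^{1/2}\phi(\tfrac{1}{2}z^{1/2})$, where $\phi$ is the standard normal density; substituting $s = \tfrac{1}{2}z^{1/2}$ reduces it to the transcendental equation $2\Phi(-s) = s\,\phi(s)$, whose unique positive root is $s \approx 1.19$. The corresponding acceptance rate is $a = 2\Phi(-s) \approx 0.234$, recovering the classical Roberts--Gelman--Gilks constant for random walk Metropolis. Since the maximizing $z$ (equivalently $s$) is determined by $h$ alone, the optimal acceptance rate does not depend on the target.

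The main obstacle I anticipate is not the optimization, which is routine once Theorem~\ref{thm:rw} is available, but justifying that maximizing the continuous profile $h(z)$ genuinely locates the optimal integer scale and that the $o(N^{2\beta})$ error is uniform enough over the relevant range of $l$ to leave the argmax undisturbed as $N\to\infty$. I would dispatch this exactly as in Corollary~\ref{cor:order}: confine attention to $l$ in a compact interval bounded away from $0$ and $\infty$ that contains the peak, use the continuity and strict concavity of $h$ near its maximum to control the perturbation, and fold the rounding $R = \lfloor l N^{2\beta}\rfloor$ into the $o(1)$ term.
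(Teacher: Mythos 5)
Your proposal is correct and follows essentially the same route as the paper: substitute the asymptotic acceptance rate from Theorem~\ref{thm:rw} into $\rho(R)=a(R)\,R$, reparametrize so that the target-dependent constant factors out of the universal profile $2z\,\Phi(-\tfrac{1}{2}z^{1/2})$, and solve the resulting stationarity condition numerically (your $s\approx 1.19$, i.e.\ $z\approx 5.67$, matches the paper's value $z=5.673$ and yields $2\Phi(-s)\approx 0.234$). The only piece the paper includes that you omit is an explicit concentration argument ruling out the regime $R=\omega(N^{2\beta})$, where the acceptance rate decays exponentially and the expected jump distance is $o(1)$; your restriction to $l$ in a compact interval leaves that super-$N^{2\beta}$ regime unaddressed, though this is a minor point given the corollary's parametrization $R=lN^{2\beta}$.
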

The rate in Corollary~\ref{cor:rw_rate} is proved for arbitrary $\beta>0$. If we let $\beta$ decrease to $0$, at $\beta = 0$ the optimal scale for RWM-$R$ is $O(1)$ while the optimal acceptance rate is $0.234$.
\textcolor{\cdiff}{
Also, we can notice that $\mathcal{P}^{(\beta)}_\epsilon$ converges to $\mathcal{P}_\epsilon$ when $\beta$ decrease to $0$ and we are able to show the optimal scale of RWM in $\mathcal{P}_\epsilon$ is $O(1)$, see details in Appendix \ref{sec:rwm_Peps}.
}
However, this limit is not mathematically rigorous, because Theorem~\ref{thm:rw} and Corollary~\ref{cor:rw_rate} only hold asymptotically, such that a smaller $\beta$ requires a larger $N$. Hence, when $\beta$ decreases to $0$,  $N$ must approach infinity to satisfy the asymptotic theorem. Although there is this minor gap in the analysis, the conclusion nevertheless aligns very well with different target distributions in the  experiment section. 

\section{Adaptive Algorithm}
Given knowledge of the optimal acceptance rate, one can design an adaptive algorithm that automatically tunes the scale of the M-H samplers. For this purpose, we use stochastic optimization \cite{andrieu2008tutorial, robbins1951stochastic} to adjust the scaling parameter $R_t$ to ensure that the statistic $A_t = a_t - \delta$ approaches $0$, where $a_t$ is the acceptance probability for iteration $t$ and $\delta$ is the target acceptance rate ($0.574$ for LBP and $0.234$ for RWM). According to Theorem~\ref{thm:lb} and Theorem~\ref{thm:rw}, the acceptance rate is a decreasing function of the scaling $R_t$. Hence, we use the update rule:
\begin{equation}
    R_{t+1} \leftarrow R_t + \eta_t A_t \label{eq:adaptive}
\end{equation}
with step size $\eta_t = 1$. We follow common practice and adapt the tunable MCMC parameters during a warmup phase before freezing them thereafter \citet{gelman2013bayesian}. The computational cost for \eqref{eq:adaptive} is ignorable comparing the total cost of a M-H step. The algorithm boxes for ALBP and ARWM are given in Appendix~\ref{appendix:algorithm}. More advanced implementations are possible, but it is out of the focus in the paper. We observe below that this simple approach is able to maintain the sampler robustly near the optimal acceptance rate.

\section{Related Work}
Informed proposals for Metropolis-Hastings (M-H) algorithms have been extensively studied for continuous spaces \citep{robert2013monte}. The most famous algorithms are the Metropolis-adjusted Langevin algorithm (MALA) \citep{rossky1978brownian} and Hamiltonian Monte Carlo (HMC) \citep{neal2011mcmc}. MALA, HMC, and their variants \citep{girolami2011riemann, hoffman2014no, welling2011bayesian, titsias2019gradient, hirt2021entropy, hoffman2021adaptive, hird2020fresh, livingstone2019barker} use the gradient of the target distribution to guide the proposal distribution toward high probability regions, which brings substantial improvements in  sampling efficiency compared to uninformed methods, such as random walk Metropolis (RWM) \citep{metropolis1953equation}.

Informed proposals have also demonstrated recent success in  discrete spaces. \cite{zanella2020informed} first gives a formal definition of the pointwise informed proposal (PIP) for discrete spaces, then proves that locally balanced proposals (LBP), using a family of locally balanced functions as the weight function in PIP, are asymptotically optimal for PIP. Following this work, \cite{power2019accelerated} extended the framework to Markov jump processes and introduced non-reversible heuristics to accelerate sampling. 
\citet{sansone2021lsb} parameterize the locally balanced function and tune it by minimizing a mutual information. \citet{grathwohl2021oops} give a more scalable version of LBP for differentiable target distributions by estimating the probability change through the gradient. Despite strong empirical results, the LBP method of \cite{zanella2020informed} only flips one bit per M-H step, since PIP has to restrict the proposal distribution to a small neighborhood, e.g.\ a 1-Hamming ball, due to its computational cost. \cite{sun2021path} generalize LBP to flip multiple bits in a single M-H step, gaining significant improvement in sampling efficiency. However, the scaling of the proposal distribution in \cite{sun2021path} was manually tuned and the optimal scaling problem was left open.

For continuous spaces, the optimal scaling problem for informed proposals has been well studied. A significant literature has already shown that the scale can be tuned with respect to the optimal acceptance rate \citep{roberts2001optimal}, e.g.\ 0.234 for RWM \citep{gelman1997weak}, 0.574 for MALA \cite{roberts1998optimal}, 0.651 for HMC \cite{beskos2013optimal}, and 0.574 for Barker \citep{vogrinc2022optimal}, by decreasing the scale so that the Markov chain converges to a diffusion process. However, such a technique is not directly applicable to LBP given its discrete nature. \cite{roberts1998cube} make an initial attempt on discrete space, however it assumes all dimensions satisfy independent, identical Bernoulli distribution. In this work, we have established for the first time the optimal scale for LBP and RWM in discrete spaces.

\section{Experiments}
The effectiveness of LBP has been extensively demonstrated in previous work, e.g. \cite{zanella2020informed, grathwohl2021oops, sun2021path}, in comparison to other M-H samplers for discrete spaces, such as RWM, Gibbs sampling, the Hamming Ball sampler \citep{titsias2017hamming}, and continuous relaxation based methods \cite{zhang2012continuous, pakman2013auxiliary, nishimura2017discontinuous, han2020stein}. Therefore, we focus on simulating LBP-$R$, with weight function $g(t) = \frac{t}{t+1}$, and RWM-$R$ to validate our theoretical findings. More experiments, including different weight functions and comparison between "with" and "without" replacement versions of LBP are given in Appendix~\ref{appendix:exp}.

Throughout the experiment section, we will use the gradient approximation \citep{grathwohl2021oops}. That is to say, we estimate the change in probability of flipping index $i$ is estimated by:
$\tilde{d} x_i = \exp((1 - 2 x_i) (\nabla \log \pi(x))_i)$
For the Bernoulli distribution, this is still exact and does not hinder the justification of the theoretical results. For more complex models, this approximation makes the algorithms significantly more efficient. In particular,  the gradient approximation only requires two calls of the probability function and two calls of the gradient function. 
Consequently, LBP with gradient approximation will take about twice time per update compared to RWM. In our experiments, we observe that LBP and GWG takes $1.2 \pm 0.2$ and $1.1 \pm 0.1$ more time per update, respectively, than RWM, across all target distributions. We therefore omit reporting the detailed run time for each method.

\subsection{Sampling from different target distributions}
We consider four target distributions: the Bernoulli distribution, the Ising model, the factorial hidden Markov model (FHMM), and the restricted Boltzmann machine (RBM). For each model, we consider three configurations: C1, C2, and C3 for smooth, moderate, and sharp target distributions. To obtain performance curves, we first simulate LBP-$1$ and RWM-$1$ for an initial acceptance rate $a_{\max}$. Then, we adopt $a_{\max} - 0.02$, ..., $a_{\max} - 0.02k$, ... as a target acceptance rate. For each rate, we use the adaptive sampler to obtain an estimated scale $R$, with which we simulate 100 chains and calculate the final real acceptance rate and efficiency. In this way, we collect abundant data points to characterize the relationship between acceptance rate and efficiency to facilitate the following analyses. 
\begin{figure}[!htb]
    \centering
    \begin{minipage}{.5\textwidth}
        \centering
        \includegraphics[width=0.99\textwidth]{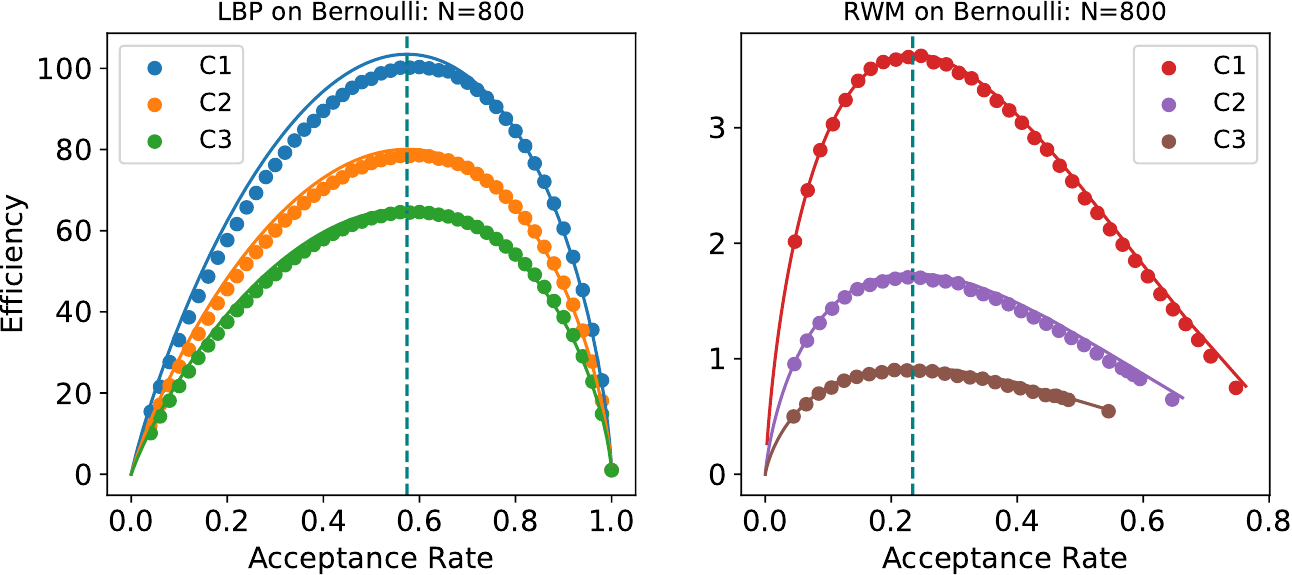}
        \caption{Efficiency Curves on Bernoulli}
        \label{fig:bernoulli-800}
    \end{minipage}%
    \begin{minipage}{0.5\textwidth}
        \centering
        \includegraphics[width=0.99\textwidth]{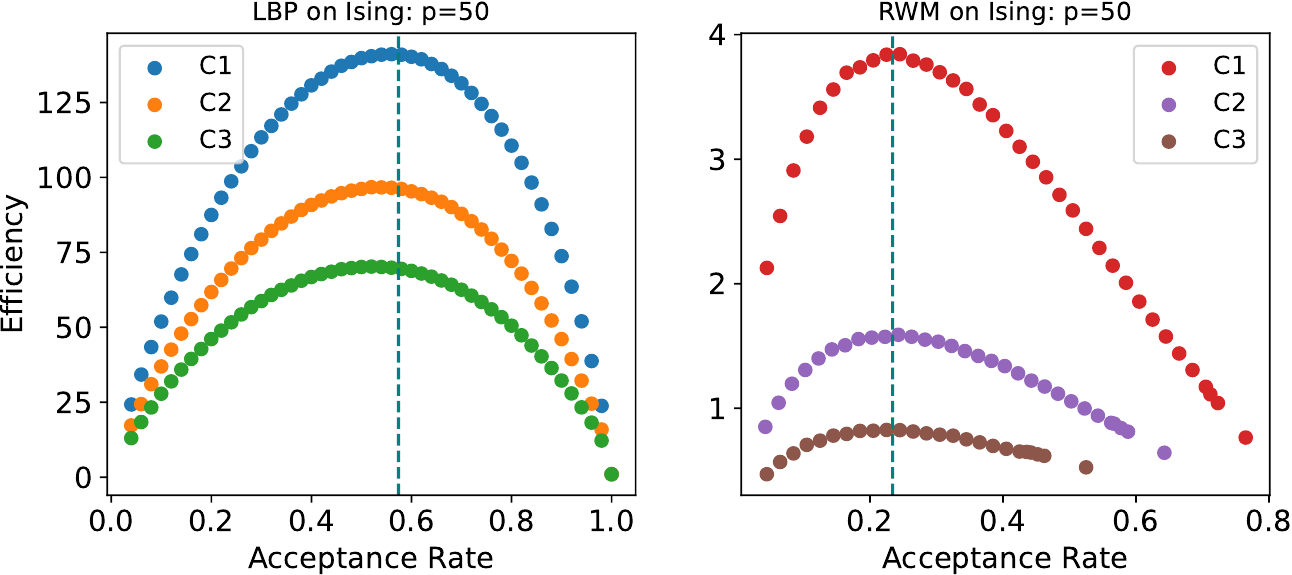}
    \caption{Efficiency Curves on Ising}
    \label{fig:ising-50}
    \end{minipage}
\end{figure}


{\bf Bernoulli Distribution}.
We validate our theoretical results on Bernoulli distribution. The probability mass function is given in \eqref{eq:bernoulli}. For each configuration, we simulate on domains with three dimensionalities: $N=100$, $800$, $6400$. The scatter plot for $N=800$ is reported in Figure~\ref{fig:bernoulli-800}. We also estimate $\lambda$ in \eqref{eq:lambda_lbp} and \eqref{eq:lambda_rwm} and plot the theoretical efficiency curve in \eqref{eq:acc_lbp} and \eqref{eq:acc_rwm}.
From Figure~\ref{fig:bernoulli-800}, we can see that the simulation results align well with the theoretically predicted curves, and the optimal efficiencies were achieved at $0.574$ for LBP and $0.234$ for RWM for all configurations. 

{\bf Ising Model}.
The Ising model \citep{ising1924beitrag} is a classical model in physics defined on a $p\times p$ square lattice graph $(V_p, E_p)$ (details in Appendix~\ref{sec:ising}).
For each configuration, we simulate on three sizes $p = 20$, $50$, $100$. We report the results for $p=50$ in Figure~\ref{fig:ising-50}.
For LBP, the optimal efficiencies are achieved at around $0.5$, which is slightly less than $0.574$, although these values are close. Thus we can say that the asymptotically optimal acceptance rate for LBP still applies to the Ising model. For RWM, $0.234$ perfectly matches the acceptance rate where the optimal efficiencies are obtained.

{\bf Factorial Hidden Markov Model}.
The FHMM \citep{ghahramani1995factorial} uses latent variables $x \in \mathcal{X} = \{0, 1\}^{L\times K}$ to characterize time series data $y \in \mathbb{R}^L$ (details in Appendix~\ref{sec:fhmm}).
Given $y$, we sample the hidden variables $x$ from the posterior $\pi(x) = p(x|y)$. For each configuration, we simulate in three sizes $L=200$, $1000$, $4000$. We report the results for $L=1000$ in Figure \ref{fig:fhmm-1000}. One can observe that these results match the theoretical predictions very well.
\begin{figure}[!htb]
    \centering
    \begin{minipage}{.5\textwidth}
        \centering
        \includegraphics[width=0.99\textwidth]{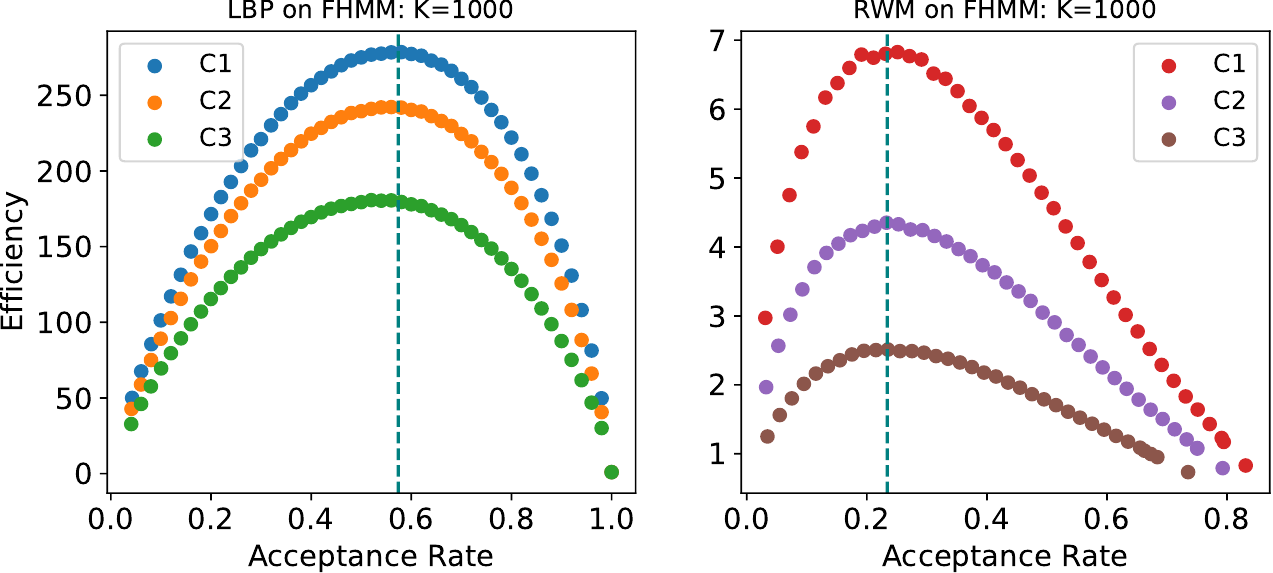}
    \caption{Efficiency Curves on FHMM}
    \label{fig:fhmm-1000}
    \end{minipage}%
    \begin{minipage}{0.5\textwidth}
        \centering
        \includegraphics[width=0.99\textwidth]{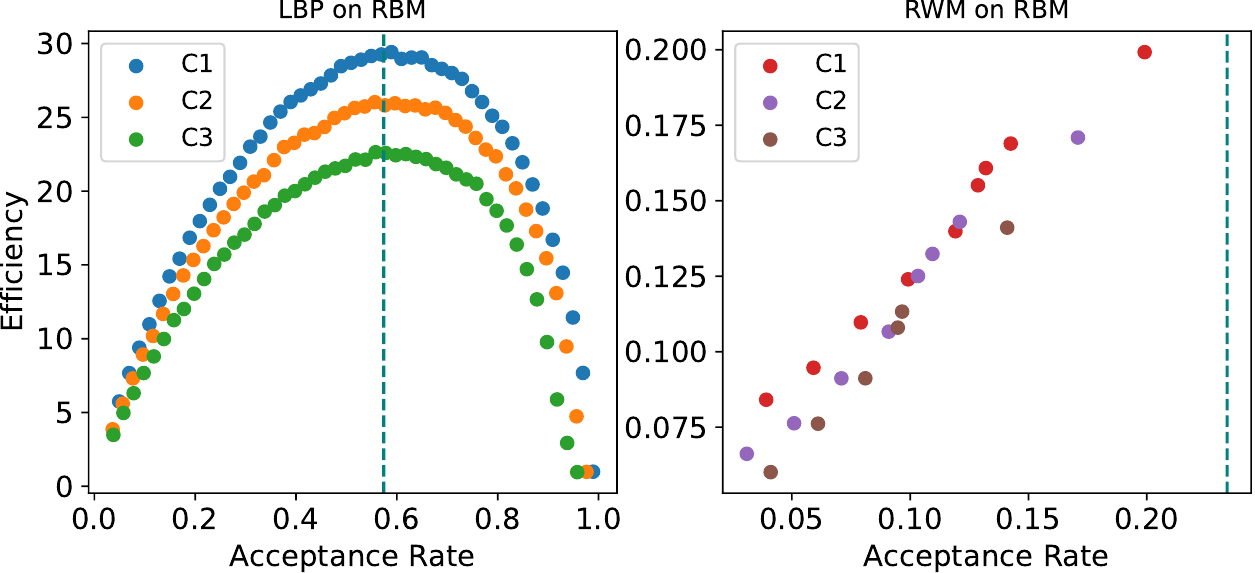}
    \caption{Efficiency Curves on RBM}
    \label{fig:rbm}
    \end{minipage}
\end{figure}


{\bf Restricted Boltzmann Machine}. 
A RBM \citep{smolensky1986information} is a bipartite latent-variable model that defines a distribution over binary data $x \in \{0, 1\}^N$ and latent data $z \in \{0, 1\}^h$ (details in Appendix~\ref{sec:rbm}).
We train an RBM on the MNIST dataset using contrastive divergence \citep{hinton2002training} and sample observable variables $x$. We report the results in Figure~\ref{fig:rbm}.
For LBP, although RBM is much more complex than a product distribution, its efficiency versus acceptance rate curves still match the theoretical predictions very well. For RWM, even using $R=1$ will result in acceptance rates less than $0.234$ for all configurations. Although we cannot check what the optimal value is, we still observe that  efficiency is an increasing function of the acceptance rate when the acceptance rate is less than $0.234$, as predicted by the theory.



{\bf Optimal Scaling and Efficiency}. 
We examine how optimal scaling $R$ for LBP, RWM 
and their relative efficiency ratio grow w.r.t. the model dimension $N$.
In figure~\ref{fig:order}, we can see that both the optimal scaling and efficiency ratio are linear in log-log plot and the slopes are close to $\frac{2}{3}$ across Bernoulli, Ising, and FHMM. The results matches the theories that the optimal scaling $R=O(N^\frac{2}{3})$ for LBP, $R=O(1)$ for RWM, and the relative efficiency ratio LBP over RWM is $O(N^\frac{2}{3})$.

\begin{figure}
    \centering
    \includegraphics[width=.75\textwidth]{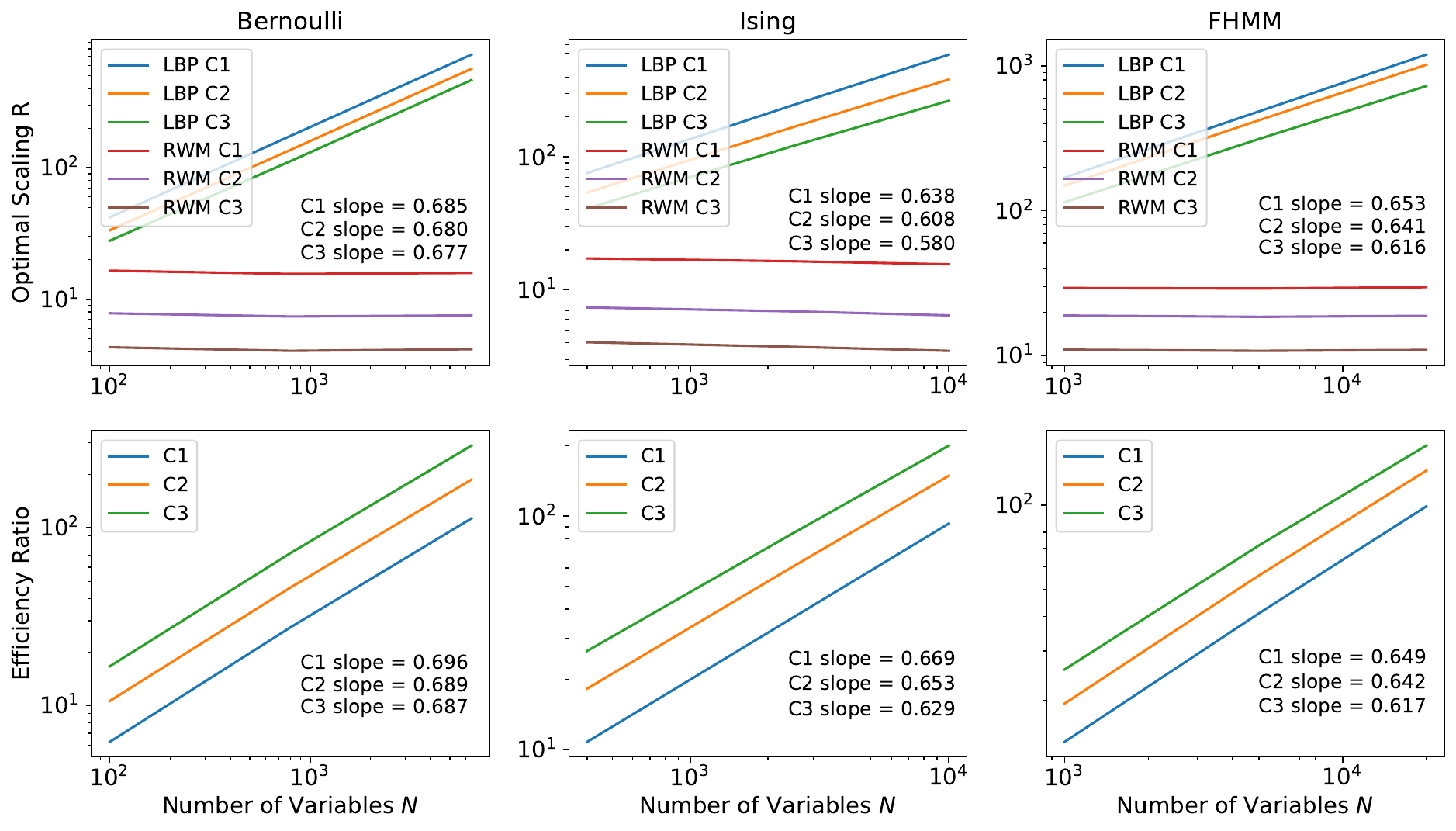}
    \caption{Optimal Scaling $R$ and Efficiency Ratio with respect to model dimension N}
    \label{fig:order}
\end{figure}

\begin{table*}[htb]
    \scriptsize
    \centering
    \caption{Performance of the Samplers on Various Distributions}
    \begin{tabular}{crrrrrrrrrrrr}
    \toprule
    Size & \multicolumn{3}{c}{Bernoulli} & \multicolumn{3}{c}{Ising} & \multicolumn{3}{c}{FHMM} & \multicolumn{3}{c}{RBM} \\
    \cmidrule(lr){1-1} \cmidrule(lr){2-4} \cmidrule(lr){5-7} \cmidrule(lr){8-10} \cmidrule(lr){11-13}
    Sampler & EJD & ESS & Time & EJD & ESS & Time & EJD & ESS & Time & EJD & ESS & Time \\
    \midrule
    RWM-1 & 0.65 & 10.02 & 15.44 & 0.64 & 12.14 & 74.28 & 0.79 & 7.26 & 58.03 & 0.17 & 10.76 & 59.54 \\
    ARWM & 1.70 & 18.44 & 14.90 & 1.58 & 19.60 & 77.45 & 4.32 & 13.32 & 60.02 & 0.17 & 11.13 & 61.24\\
    GRWM & 1.70 & 18.67 & 18.01 & 1.59 & 20.16 & 76.89 &  4.35 & 15.22 & 61.19 & 0.17 & 10.76 & 59.54\\
    LBP-1 & 1.00 & 13.39 & 24.36 & 1.00 & 14.11 & 111.19 & 1.00 & 6.91 & 134.42 & 0.98 & 13.38 & 116.04\\
    ALBP & 78.63 & 622.35 & 28.07 & 96.23 & 821.06 & 124.37 & 242.01 & 129.28 & 487.63 & 26.07 & 25.59 & 144.03 \\
    GLBP & 78.83 & 644.43 & 25.42 & 96.68 & 809.12 & 129.28 & 242.52 & 140.43 & 508.27 & 25.86 & 25.83 &119.38 \\
    \bottomrule
    \end{tabular}
    \label{tab:selected_results}
\end{table*}

\subsection{Adaptive Sampling}
We have validated the theoretical findings regarding the optimal acceptance rates on various distributions. In this section, we examine the performance of the adaptive sampler. In addition to the expected jump distance (EJD), we also report the effective sample size (ESS) \footnote{Computed using \href{https://www.tensorflow.org/probability/api_docs/python/tfp/mcmc/effective_sample_size}{Tensorflow Probability}}. We compare the adaptive sampler ALBP, ARWM with their single step version LBP-1, RWM-1, and grid search version GLBP, GRWM, where we tune the scaling $R$ by grid search.
We give the sampling results on Bernoulli model, Ising model, FHMM, and RBM with medium size and configuration C2 in table~\ref{tab:selected_results}. More results are given in Appendix~\ref{appendix:exp}.
We can see that the adaptive samplers are significantly more efficient than single step samplers, especially for LBP. Also, the adaptive samplers can robustly achieve almost the same performance comparing to using grid search to find the optimal scaling.

\subsection{Training Deep Energy Based Models}
Learning an EBM is a challenge task. Given data sampled from a true distribution $\pi$, we maximize the likelihood of the target distribution $\pi_\theta(x) \propto e^{-f_\theta(x)}$ parameterized by $\theta$. The gradient estimation requires samples from the current model, which is typically obtained via MCMC. The speed of training an EBM is determined by how fast a MCMC algorithm can obtain a good estimate of the second expectation. 

We evaluate adaptive samplers by learning deep EBMs. Following the setting in \citet{grathwohl2021oops}, we train deep EBMs parameterized by Residual Networks \citep{he2016deep} on small binary image datasets using PCD \citep{tieleman2009using} with a replay buffer \citep{du2019implicit}. We compare two single step samplers and two adaptive samplers, where $\text{LBP}_b$ uses $g(t) = \frac{t}{t+1}$ as \textcolor{\cdiff}{weight} function and $\text{LBP}_s$ uses $g(t) = \sqrt{t}$ as weight function. When we allow them to run enough iterations in PCD, they are able to train EBMs in same good quality. To measure the efficiency of these samplers, we compare the minimum number of M-H steps needed in PCD in table~\ref{tab:deepEBM}. We can see that adaptive samplers only need one half or even one fifth iterations compare to single step samplers.
We also present long-run samples from our trained models via $\text{ALBP}_s$ in Figure \ref{fig:deep_samples}.

\begin{table*}[htb]
    \centering
    \caption{Minimum M-H Steps Needed for PCD}
    \begin{tabular}{ccccc}
    \toprule
    Dataset   & $\text{LBP}_b$-1 & $\text{ALBP}_b$ & $\text{LBP}_s$-1 & $\text{ALBP}_s$ \\
    \cmidrule(lr){1-1} \cmidrule(lr){2-3} \cmidrule(lr){4-5} 
    Static MNIST & 90 & 20 & 40 & 15 \\ 
    Dynamic MNIST & 100 & 20 & 40 & 15 \\
    Omniglot & 100 & 60 & 30 & 5 \\
    Caltech & 100 & 60 & 80 & 30 \\
    \bottomrule
    \end{tabular}
\label{tab:deepEBM}
\end{table*}%

\begin{figure}
    \centering
\begin{tabular}{@{}c@{\hskip 1mm}c@{\hskip 1mm}c}
    \includegraphics[width=0.25\textwidth]{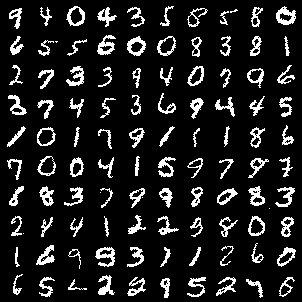} & 
    \includegraphics[width=0.25\textwidth]{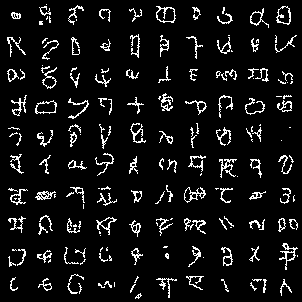} &
    \includegraphics[width=0.25\textwidth]{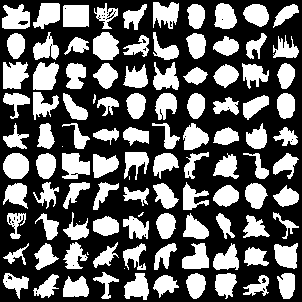} \\
    MNIST & Omniglot & Caltech \\
\end{tabular}
    \caption{Samples from deep EBMs trained by $\text{ALBP}_s$ sampler. \label{fig:deep_samples}}
\end{figure}

\section{Discussion}
\label{sec:discussion}
In this paper, we have addressed the optimal scaling problem for the locally balanced proposal (LBP) in \citep{sun2021path}. We verified, both theoretically and empirically, that the asymptotically optimal acceptance rate for LBP is $0.574$, independent of the target distribution. Moreover, knowledge of the optimal acceptance rate allows one to adaptively tune the neighborhood size for a proposal distribution in a discrete space. We verified the theoretical findings on a diverse set of distributions, and demonstrated that adaptive LBP can improve sampling efficiency for learning  deep EBMs. 

We believe there is considerable room for future work that builds on these results. For theoretical investigation, the theory established under a strong assumption that the target distribution is a product distribution, despite the results applies very well to more complicated distributions. We believe the results still hold under a weaker assumption that the target distribution has no phase transition.
We also believe it is possible to design a HMC style sampler for discrete spaces in the framework of \cite{sun2021path} by using LBP as a block for the auxiliary path. 
For empirical investigation, many real-world problems involve probability models of discrete structured data, such as syntax trees for natural language processing~\citep{tai2015improved}, program synthesis~\citep{dai2020learning}, and graphical models for molecules~\citep{gilmer2017neural}. Efficient discrete samplers should be able to accelerate both learning and inference with such models.

\section*{Acknowledgement}
We thank Vladimir Koltchinskii, Pengcheng Yin, Matthew D. Hoffman, and three anonymous reviewers for their helpful comments to improve the manuscript. Dale Schuurmans gratefully acknowledges the support of a Canada CIFAR AI Chair, NSERC and Amii.

{
\bibliography{reference}
}
\bibliographystyle{icml2022}

\section*{Checklist}

The checklist follows the references.  Please
read the checklist guidelines carefully for information on how to answer these
questions.  For each question, change the default \answerTODO{} to \answerYes{},
\answerNo{}, or \answerNA{}.  You are strongly encouraged to include a {\bf
justification to your answer}, either by referencing the appropriate section of
your paper or providing a brief inline description.  For example:
\begin{itemize}
  \item Did you include the license to the code and datasets? \answerYes{See \url{https://github.com/ha0ransun/LBP_Scale.git}.}
  \item Did you include the license to the code and datasets? \answerYes{}
\end{itemize}
Please do not modify the questions and only use the provided macros for your
answers.  Note that the Checklist section does not count towards the page
limit.  In your paper, please delete this instructions block and only keep the
Checklist section heading above along with the questions/answers below.

\begin{enumerate}

\item For all authors...
\begin{enumerate}
  \item Do the main claims made in the abstract and introduction accurately reflect the paper's contributions and scope?
    \answerYes{}
  \item Did you describe the limitations of your work?
    \answerYes{See Section \ref{sec:discussion}}
  \item Did you discuss any potential negative societal impacts of your work?
    \answerNo{}
  \item Have you read the ethics review guidelines and ensured that your paper conforms to them?
    \answerYes{}
\end{enumerate}

\item If you are including theoretical results...
\begin{enumerate}
  \item Did you state the full set of assumptions of all theoretical results?
    \answerYes{}
        \item Did you include complete proofs of all theoretical results?
    \answerYes{See Appendix \ref{app:proof}}
\end{enumerate}

\item If you ran experiments...
\begin{enumerate}
  \item Did you include the code, data, and instructions needed to reproduce the main experimental results (either in the supplemental material or as a URL)?
    \answerYes{See supplemental material}
  \item Did you specify all the training details (e.g., data splits, hyperparameters, how they were chosen)?
    \answerYes{}
        \item Did you report error bars (e.g., with respect to the random seed after running experiments multiple times)?
    \answerNA{}
        \item Did you include the total amount of compute and the type of resources used (e.g., type of GPUs, internal cluster, or cloud provider)?
    \answerYes{}
\end{enumerate}

\item If you are using existing assets (e.g., code, data, models) or curating/releasing new assets...
\begin{enumerate}
  \item If your work uses existing assets, did you cite the creators?
    \answerNA{The work does not use existing assets.}
  \item Did you mention the license of the assets?
    \answerNA{}
  \item Did you include any new assets either in the supplemental material or as a URL?
    \answerYes{Codebase is included in supplemental materials.}
  \item Did you discuss whether and how consent was obtained from people whose data you're using/curating?
    \answerNA{}
  \item Did you discuss whether the data you are using/curating contains personally identifiable information or offensive content?
    \answerNA{}
\end{enumerate}

\item If you used crowdsourcing or conducted research with human subjects...
\begin{enumerate}
  \item Did you include the full text of instructions given to participants and screenshots, if applicable?
    \answerNA{}
  \item Did you describe any potential participant risks, with links to Institutional Review Board (IRB) approvals, if applicable?
    \answerNA{}
  \item Did you include the estimated hourly wage paid to participants and the total amount spent on participant compensation?
    \answerNA{}
\end{enumerate}

\end{enumerate}

\newpage
\appendix

\newpage
\appendix
\section{Complete Proof}
\label{app:proof}
\subsection{A concentration of $W(x, u)$}
\begin{lemma}
\label{lemma:concentration_w(x,u)}
Define $W = \mathbb{E}_{x, u} [W(x, u)]$.  We have:
\begin{equation}
    \mathbb{P}(|W(x, u) - W| > N^{\frac{1}{2}} t) \le 2 e^{-C_2 t^2}
\end{equation}
where $C_2$ is an absolute constant that only depends on the scalar $\epsilon$ in \eqref{eq:smooth_target}.
\end{lemma}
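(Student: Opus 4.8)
The plan is to isolate the randomness in $W(x,u)$ into a state part and a selection part, control each by a sub-Gaussian tail bound, and combine. First I would record the basic fact that makes everything work: on $\mathcal{P}_\epsilon$ the ratios $\pi_j(1-x_j)/\pi_j(x_j)$ lie in a fixed compact subinterval of $(0,\infty)$ determined by $\epsilon$, so by positivity and continuity of the (fixed) locally balanced function $g$ there are constants $0 < c_1 \le c_2 < \infty$, depending only on $\epsilon$ and $g$, with $c_1 \le w_j(x_j) \le c_2$ for all $j$ and all $x$. Every bound below will be expressed through $c_1, c_2$, so the final constant $C_2$ depends only on $\epsilon$ (and the fixed $g, l$).

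Next I would split $W(x,u) = S(x) - T(x,u)$ with $S(x) = \sum_{i=1}^N w_i(x_i)$ and $T(x,u) = \sum_{r=1}^R w_{u_r}(x_{u_r})$, and write
\[
W(x,u) - W = \big(S - \mathbb{E}_x S\big) \;-\; \big(T - \mathbb{E}[T\mid x]\big) \;-\; \big(\mathbb{E}[T\mid x] - \mathbb{E}_{x,u} T\big).
\]
The first term is a sum of $N$ independent bounded contributions (the $x_i$ are independent under the product target $\pi^{(N)}$), so Hoeffding's inequality gives a sub-Gaussian tail at scale $\sqrt N$, namely $\mathbb{P}(|S - \mathbb{E}_x S| > \tfrac13\sqrt N t)\le 2e^{-c t^2}$ with $c$ a function of $c_2-c_1$. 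The second term I would control conditionally on $x$: the indices $u_1,\dots,u_R$ are drawn sequentially, so $M_r = \mathbb{E}[T \mid x, u_{1:r}]$ is a Doob martingale whose increments are bounded by $c_2$ (each revealed index contributes one bounded weight); Azuma's inequality then gives concentration at scale $\sqrt R = \sqrt{l}\,N^{1/3} = o(\sqrt N)$, so after setting the deviation to $\tfrac13\sqrt N t$ this tail is $\le 2 e^{-c' N^{1/3} t^2}$, dominated by $e^{-c t^2}$ for large $N$.

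The remaining term $\mathbb{E}[T\mid x] - \mathbb{E}_{x,u}T$ is a function of $x$ alone, and I would bound it by a bounded-difference (McDiarmid) argument over the independent coordinates $x_1,\dots,x_N$. Writing $\mathbb{E}[T\mid x] = \sum_{j} w_j(x_j)\,\mathbb{P}(j \in u_{1:R}\mid x)$ and using that the total inclusion mass $\sum_j \mathbb{P}(j\in u_{1:R}\mid x) = R$ is the same for every $x$, I would show that flipping a single coordinate $x_i$ perturbs $\mathbb{E}[T\mid x]$ by at most a constant: the direct change is $|\Delta w_i|\,\mathbb{P}(i\in u)\le (c_2-c_1)$, and the collateral change $\sum_j w_j \,\Delta \mathbb{P}(j\in u)$ is controlled because one weight among a total weight $\sum_j w_j = \Theta(N)$ moves only by $O(1)$, so the per-step selection law shifts by $O(1/N)$ and the aggregate effect stays $O(1)$. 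McDiarmid then yields a sub-Gaussian tail at scale at most $\sqrt N$. Finally I would union-bound the three pieces and reconcile constants to obtain $\mathbb{P}(|W(x,u)-W|>\sqrt N t)\le 2e^{-C_2 t^2}$.

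I expect the genuine obstacle to be the sensitivity estimate in the last paragraph: quantifying how the inclusion probabilities of the weighted sampling \emph{without} replacement react to changing one weight. The sequential renormalization couples all coordinates, so the bound $\sum_j |\Delta \mathbb{P}(j\in u\mid x)| = O(1)$ is not immediate and must be extracted from the fact that the total weight is of order $N$ while $R = O(N^{2/3})$ indices are drawn; a clean way to make this rigorous is to couple the two sampling processes (for $x_i$ and its flip) and bound the expected number of steps at which they diverge. Everything else is a routine application of Hoeffding, Azuma, and McDiarmid once the weight bounds $c_1,c_2$ are in hand.
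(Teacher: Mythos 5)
Your proposal is correct, but it takes a different route from the paper. The paper builds a \emph{single} $(N+R)$-step martingale: the first $N$ increments reveal the independent coordinates $x_1,\dots,x_N$ (contributing $w_i(x_i)-\mathbb{E}[w_i(x_i)]$), and the last $R$ increments reveal the selected indices $u_1,\dots,u_R$ (contributing $-w_{u_r}(x_{u_r})$ plus its one-step conditional expectation $\sum_{i\notin u_{1:r-1}}w_i^2/\sum_{i\notin u_{1:r-1}}w_i$); all increments are uniformly bounded by a constant depending only on $\epsilon$ and $g$, and one application of Azuma--Hoeffding over $N+R=O(N)$ steps gives the $\sqrt{N}$-scale tail. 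Your three-term decomposition $(S-\mathbb{E}S)-(T-\mathbb{E}[T\mid x])-(\mathbb{E}[T\mid x]-\mathbb{E}T)$ reaches the same conclusion with Hoeffding, Azuma, and McDiarmid respectively, at the cost of the extra sensitivity analysis you correctly flag as the main burden; what it buys is that the fluctuation of $\mathbb{E}[T\mid x]$ around $\mathbb{E}T$ is isolated and controlled explicitly, whereas the paper's single-martingale argument absorbs this term only implicitly (its $M_{N+R}$ differs from $W(x,u)-W$ by the random quantity $\sum_r\mathbb{E}[w_{u_r}\mid x,u_{1:r-1}]-\mathbb{E}T$, which is never separately bounded). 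Two small remarks on your sketch: the Doob increments in your second term are not literally bounded by $c_2$, since revealing $u_r$ also perturbs the conditional law of the $R-r$ future draws, but that collateral effect is $O(R/N)$ per step and does not change the conclusion; and in the McDiarmid step you can avoid the delicate coupling entirely, because the direct term is already only $(c_2-c_1)\,\mathbb{P}(i\in u_{1:R}\mid x)=O(R/N)$ and the compensating shifts in the other inclusion probabilities have a common sign and sum to the same $O(R/N)$, so the bounded difference is in fact $O(R/N)$ rather than $O(1)$ --- though even the crude $O(1)$ bound you propose is enough to obtain the stated $2e^{-C_2t^2}$ tail at scale $\sqrt{N}$.
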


\begin{proof}
Define a martingale $M_n$, $n = 0, 1, ..., N + R$. \textcolor{\cdiff}{We let $M_0 = 0$}. When $n \le N$, it has independent increment
\begin{equation}
    M_n = \sum_{i=1}^n w_i(x) - \mathbb{E}[w_i(x)], \quad n = 1, ..., N
\end{equation}
For $n > N$, it is defined as
\begin{align}
    M_{N+r} 
    & = M_{N+r-1} - w_{u_r}(x) + \mathbb{E}[w_{u_r}(x)|M_1, ..., M_{N+r-1}] \\
    & =  M_{N+r-1} - w_{u_r}(x) + \frac{\sum_{i\notin u_{1:r-1}} w_i^2(x)}{\sum_{i\notin u_{1:r-1}} w_i(x)}
\end{align}
where $i \notin u_{1:r-1}$ means $i \neq u_j$ for $j = 1, ..., r-1$. Since $p_i$ are controlled by $\epsilon$ in \eqref{eq:smooth_target}, we can find a uniform bound
\begin{equation}
    \frac{1}{4C_1} = 2 \sup_{\epsilon < p < 1 - \epsilon} g(\frac{1- p}{p})
\end{equation}
For $1\le n \le N$, we have
\begin{equation}
    |M_n - M_{n-1}| = \textcolor{\cdiff}{\left|w_i(x) - \mathbb{E}[w_i(x)]\right|} \le 2 \max_{x, u} |w_i(x)| \le \frac{1}{4C_1}
\end{equation}
For $1 \le r \le R$, we have
\begin{equation}
    |M_{N+r} - M_{N+r-1}| = \textcolor{\cdiff}{\left|- w_{u_r}(x_{u_r}) + \frac{\sum_{i\neq u_{1:r-1}} w_i^2(x_i)}{\sum_{i\neq u_{1:r-1}} w_i(x_i)}\right|} \le \frac{1}{4C_1}
\end{equation}
Hence, we can apply the Azuma-Hoeffding inequality:
\begin{equation}
    \mathbb{P}(|W(x, u) - W| > \textcolor{\cdiff}{t}N^{\frac{1}{2}}) = \mathbb{P}(|M_n - M_0| > t N^{\frac{1}{2}}) \le 2 e^{\frac{-t^2 N}{2\frac{1}{4C_1}(N+R)}} = 2 e^{-C_1 t^2}. 
\end{equation}
Thus we prove the lemma.
\end{proof}
The lemma indicates with high probability, for arbitrary $\delta > 0$
\begin{equation}
    W(x, u) - W = o(N^{\frac{1}{2} + \delta})
\end{equation}
One observation of the proof is that, the concentration holds for arbitrary $0 \le R \le N$. For example, when $R = N$, $W(x, u) \equiv W \equiv 0$, the concentration is still valid.

\subsection{Lemma~\ref{lemma:taylor_sumlog}}
\begin{proof}
Using Taylor's series, we have
\begin{flalign}
    \log (1 + \sum_{i=r}^R w_{u_i}(x) / W(x, u)) = \frac{\sum_{i=r}^R w_{u_i}(x)}{W(x, u)} - \frac{1}{2} (\frac{\sum_{i=r}^R w_{u_i}(x)}{W(x, u)})^2 + O(\frac{R^3}{N^{3}})
\end{flalign}
\begin{flalign}
    \log (1 + \sum_{i=1}^r w_{u_i}(y) / W(x, u)) = \frac{\sum_{i=1}^r w_{u_i}(y)}{W(x, u)} - \frac{1}{2} (\frac{\sum_{i=1}^r w_{u_i}(y)}{W(x, u)})^2 + O(\frac{R^3}{N^3})
\end{flalign}

Using Lemma \ref{lemma:concentration_w(x,u)} and the property $W(x, u) = W(y, u)$, with high probability, the first order term becomes to:
\begin{align}
    \sum_{r=1}^R \frac{\sum_{i=r}^R w_{u_i}(x)}{W(x, u)} - \frac{\sum_{i=1}^r w_{u_i}(y)}{W(x, u)} 
    & = \sum_{r=1}^R \frac{(R-r+1)w_{u_i}(x) - r w_{u_i}(y)}{W(x, u)} \\
    & = \sum_{r=1}^R \frac{(R-r+1)w_{u_i}(x) - r w_{u_i}(y)}{W} + O(\frac{R^2}{N^{\frac{3}{2}-\delta}})
\end{align}

Similarly, with high probability, the second order term becomes to:
\begin{align}
    & \sum_{r=1}^R (\frac{\sum_{i=r}^R w_{u_i}(x)}{W(x, u)})^2 - (\frac{\sum_{i=1}^r w_{u_i}(y)}{W(x, u)})^2 \\
    = & \frac{1}{W(x, u)^2} \sum_{r=r}^R \Big( \sum_{i, j=r}^R w_{u_i}(x) w_{u_j}(x) - \sum_{i, j=1}^r w_{u_i}(y) w_{u_j}(y) \Big) \\
    = & \frac{1}{W(x, u)^2} \sum_{i=1}^R \sum_{j=1}^R \min\{i, j\} w_{u_i}(x) w_{u_j}(x) - (R - \max\{i, j\} + 1) w_{u_i}(y) w_{u_j}(y) \\
    = & \frac{1}{W^2} \sum_{i=1}^R \sum_{j=1}^R \min\{i, j\} w_{u_i}(x) w_{u_j}(x) - (R - \max\{i, j\} + 1) w_{u_i}(y) w_{u_j}(y) + o(\frac{R^3}{N^{\frac{5}{2}-\delta}})
\end{align}

Since $R = l N^\frac{2}{3}$, denote $i\land j = \min\{i, j\}, i\lor j = \max\{i, j\}$, with high probability, we have
\begin{flalign}
    & \sum_{r=1}^R \log \frac{1 + \sum_{i=r}^R w_{u_i}(x_{u_i}) / W(x, u)}{1 + \sum_{i=1}^r w_{u_i}(y_{u_i})/W(x,u)} \\
    =& \frac{1}{W}\sum_{r=1}^R (R-r+1)w_{u_i}(x) - r w_{u_i}(y) + o(N^{\frac{1}{12}-\delta}) \notag \\
    & \ \  - \frac{1}{2W^2} \sum_{i=1}^R \sum_{j=1}^R i\land j w_{u_i}(x)w_{u_j}(x) - (R - i \lor j + 1) w_{u_i}(y)w_{u_j}(y)
\end{flalign}
\textcolor{\cdiff}{Select $0 < \delta < \frac{1}{12}$, and the corresponding $t = N^\delta$}, we have, for large enough $N$, the above equation does not hold with probability exponentially small, and the term $o(N^{\frac{1}{12}-\delta})$ can be ignored. Hence we prove the weak convergence.
\end{proof}

\subsection{Proof for Lemma~\ref{lemma:posterior}}
\begin{proof}
The distribution $p(u_r|u_{1:r-1})$ can be approximated using the following tricks. First, using lemma \ref{lemma:concentration_w(x,u)}, with high probability, we have:
\begin{align}
    \mathbb{P}(u_r = i|u_{1:r-1}) 
    &= \mathbb{E}_{x \notin u_{1:r}} \left[ \frac{\mathbb{P}(x_i=1) w_i(1)}{W(x_{-i}, x_i = 1, u_{1:r-1})} + \frac{\mathbb{P}(x_i=0) w_i(0)}{W(x_{-i}, x_i=0, u_{1:r-1})} \right] \\
    &= \frac{p_i w_i(1) + (1 - p_i) w_i(0)}{W} + O(N^{-\frac{3}{2}})
\end{align}
Derive the similar result for $\mathbb{P}(u_r = j|u_{1:r-1})$. Since we have $R = l N^\frac{3}{2}$, for arbitrary $1 \le r \le R$, we have $W$ has the same order as $N$. Using the property of locally balanced function, where $p_i w_i(1) = (1 - p_i) w_i(0)$, we have
\begin{align}
    \frac{\mathbb{P}(u_1 = i)}{\mathbb{P}(u_1 = j)} 
    & = \frac{p_i w_i(1)}{p_j w_j(1)} + O(N^{-\frac{5}{2}})
\end{align}
Then, we use the identity:
\begin{align}
    1 
    &= \sum_{i=1}^N \mathbb{P}(u_1 = i) \\
    &= \sum_{j=1}^N \left(\frac{p_i w_i(1)}{p_j w_j(1)} + O(N^{-\frac{5}{2}})\right) \mathbb{P}(u_1 = j) \\
    &= \left(\frac{\sum_{i=1}^N p_i w_i(1)}{p_j w_j(1)} + O(N^{-\frac{3}{2}})\right) \mathbb{P}(u_1 = j)
\end{align}
hence, we have for the first step $u_1$:
\begin{equation}
    \mathbb{P}(u_1 = j) = \frac{p_j w_j(1)}{\sum_{i=1}^N p_i w_i(1)} + O(N^{-\frac{5}{2}})
\end{equation}
Recursively use this trick, for $1 \le r \le R = l N^\frac{2}{3}$ we have:
\begin{equation}
    \mathbb{P}(u_r = j|u_{1:r-1}) = \frac{p_j w_j(1) 1_{\{j \notin u_{1:r-1}\}}}{\sum_{i=1}^N p_i w_i(1) 1_{\{i \notin u_{1:r-1}\}}} + O(N^{-\frac{5}{2}}) 
\end{equation}

Next, we calculate the conditional probability for $x$. To simplify the notation, we denote $\mathbb{P}(x_j=1|u, u_r=j, x_{u_{1:j-1}})$ to represented index $j$ is selected at step $u_r$, and not been selected in all previous steps $u_1, ..., u_{r-1}$. Also, we denote
\begin{equation}
    W(x, u, s, t) = W(x, u) + \sum_{k=s}^t w_{u_k}(x) 
\end{equation}
In this way, the conditional probability for $x$ can be written as
\begin{align}
    &\mathbb{P}(x_j=1|u, u_r=j, x_{u_{1:j-1}}) \\
    = & \mathbb{E}[\frac{\pi_j(1)\prod_{l=1}^{r-1} (1 - \frac{w_j(1)}{W(x_{-j}, x_j=1, u, l, R)}) \frac{w_j(1)}{W(x_{-j}, x_j=1, u, r, R)} }{\sum_{v=0}^1 \pi_j(v)\prod_{l=1}^{r-1} (1 - \frac{w_j(1)}{W(x_{-j}, x_j=v, u, l, R)}) \frac{w_j(1)}{W(x_{-j}, x_j=v, u, r, R)} } |u, u_r=j, x_{u_{1:j-1}}] \\
    = & \mathbb{E}[\frac{\prod_{l=1}^{r-1} (1 - \frac{w_j(1)}{W(x_{-j}, x_j=1, u, l, R)}) \frac{1}{W(x_{-j}, x_j=1, u, r, R)} }{\sum_{v=0}^1 \prod_{l=1}^{r-1} (1 - \frac{w_j(1)}{W(x_{-j}, x_j=v, u, l, R)}) \frac{1}{W(x_{-j}, x_j=v, u, r, R)} } |u, u_r=j, x_{u_{1:j-1}}]
\end{align}
Since $R = l N^\frac{2}{3}$, according to lemma \ref{lemma:concentration_w(x,u)}, with high probability we have:
\begin{align}
    \frac{w_j(1)}{W(x_{-j}, x_j=v, u, l, R)} = \frac{w_j(1)}{W + O(N^\frac{1}{2}) + O(R)} =  \frac{w_j(1)}{W} + O(N^{-\frac{4}{3}})
\end{align}
Using this approximation, we have:
\begin{align}
    &\mathbb{P}(x_j=1|u, u_r=j, x_{u_{1:j-1}}) \\
    = & \mathbb{E}[\frac{\prod_{l=1}^{r-1} (1 - \frac{w_j(1)}{W} + O(N^{-\frac{4}{3}}) ) (\frac{1}{W} + O(N^{-\frac{4}{3}})) }{\sum_{v=0}^1 \prod_{l=1}^{r-1} (1 - \frac{w_j(v)}{W} + O(N^{-\frac{4}{3}}) ) (\frac{1}{W} + O(N^{-\frac{4}{3}}))} |u, u_r=j, x_{u_{1:j-1}}] \\
    = & \mathbb{E}[\frac{\prod_{l=1}^{r-1} (1 - \frac{w_j(1)}{W} + O(N^{-\frac{4}{3}}) )}{\sum_{v=0}^1 \prod_{l=1}^{r-1} (1 - \frac{w_j(v)}{W} + O(N^{-\frac{4}{3}}) )} \textcolor{\cdiff}{(1 + O(N^{-\frac{2}{3}}))} |u, u_r=j, x_{u_{1:j-1}}] \\
    = & \mathbb{E}[\frac{1 - (r-1)\frac{w_j(1)}{W} + (r-1)O(N^{-\frac{4}{3}})}{(1 - (r-1)\frac{w_j(0)}{W}) + (1 - (r-1)\frac{w_j(1)}{W}) + (r-1) O(N^{-\frac{4}{3}}) }|u, u_r=j, x_{u_{1:j-1}}] \\
    = & \mathbb{E}[\frac{1 - (r-1)\frac{w_j(1)}{W}}{(1 - (r-1)\frac{w_j(0)}{W}) + (1 - (r-1)\frac{w_j(1)}{W})} + (r-1) O(N^{-\frac{4}{3}})|u, u_r=j, x_{u_{1:j-1}}] \\
    %
    = & \frac{1}{2} + (r-1) \frac{w_j(0) - w_j(1) }{ 4W} + (r-1) O(N^{-\frac{4}{3}})
\end{align}
Thus we prove the lemma.
\end{proof}


\subsection{A Property for the conditional distribution of $u$}
The following result shows that marginal distribution for $u_1$ is a good approximation of the conditional distribution.
\begin{proposition}
\label{prop:cond}
For $N$ large enough, the conditional distribution for $u_r = j$ given $u_{1:r-1}$ can be approximated by the marginal distribution of $u_1$
\begin{align}
    & p(u_r = j| u_{1:r-1}, j \notin u_{1:r-1}) \\
    = & \mathbb{E}_{u_{1:r-1}}\Big[ \frac{p_j w_j(1)}
    {\sum_{i\notin u_{1:r-1}} p_i w_i(1)} \Big] + O(N^{-\frac{5}{2}})\\
    = & \mathbb{E}_{u_{1:r-1}}\Big[ \frac{p_j w_j(1)}{\sum_{i=1}^N p_i w_i(1)} + 
    \frac{p_j w_j(1) \sum_{i=1}^N p_i w_i(1) (1 - 1_{\{i \notin u_{1:r-1}\}})} 
    {(\sum_{i \notin u_{1:r-1}} p_i w_i(1)) (\sum_{i=1}^N p_i w_i(1))} \Big] + O(N^{-\frac{5}{2}})\\
    = & p(u_1 = j) + O(\frac{r}{N^2}) 
\end{align}
\end{proposition}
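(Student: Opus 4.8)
The plan is to start from the conditional formula \eqref{eq:posterior_u} already established in Lemma~\ref{lemma:posterior} and then reduce the partial sum in its denominator to the full sum, tracking the resulting error. Since we condition on the event $j \notin u_{1:r-1}$, the indicator $1_{\{j \notin u_{1:r-1}\}}$ equals $1$, so \eqref{eq:posterior_u} gives $p(u_r = j \mid u_{1:r-1}, j \notin u_{1:r-1}) = p_j w_j(1) / \sum_{i \notin u_{1:r-1}} p_i w_i(1) + O(N^{-5/2})$; taking the expectation over the random prefix $u_{1:r-1}$ produces the first displayed line.

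For the second line I would rewrite the partial-sum denominator in terms of the full sum. Set $S := \sum_{i=1}^N p_i w_i(1)$ and $S_r := \sum_{i \notin u_{1:r-1}} p_i w_i(1)$, so that $S - S_r = \sum_{i=1}^N p_i w_i(1)\,(1 - 1_{\{i \notin u_{1:r-1}\}})$ is the mass of the $r-1$ already-selected indices. The elementary identity $\tfrac{1}{S_r} = \tfrac{1}{S} + \tfrac{S - S_r}{S\,S_r}$ then turns $p_j w_j(1)/S_r$ into $p_j w_j(1)/S$ plus exactly the correction term displayed in the proposition, and taking expectations yields the second line.

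The last step is the order bookkeeping. Here the restriction \eqref{eq:smooth_target} is essential: the $\epsilon$-bounds force every $p_i w_i(1) = \Theta(1)$, uniformly in $i$ and $N$, so that both $S$ and the partial sum $S_r$ are $\Theta(N)$ (only $r-1 \le R = O(N^{2/3}) \ll N$ terms are dropped, so the denominator cannot lose its order), while the removed mass satisfies $S - S_r = O(r)$. Consequently the correction term is $\tfrac{O(1)\cdot O(r)}{\Theta(N)\cdot \Theta(N)} = O(r/N^2)$, and crucially this bound is deterministic over every realization of $u_{1:r-1}$, so the outer expectation passes straight through. Finally, the leading term $p_j w_j(1)/S$ is constant in $u_{1:r-1}$ and equals $p(u_1 = j) + O(N^{-5/2})$ by the $u_1$ case of Lemma~\ref{lemma:posterior}; since $O(r/N^2)$ dominates $O(N^{-5/2})$ already for $r=1$, the additive error collapses to $O(r/N^2)$, giving the third line.

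I expect the only genuine obstacle to be the uniform lower bound on the denominator $S_r$: one must argue that deleting the $r-1$ selected coordinates does not degrade the $\Theta(N)$ order of the normalizing sum. This is precisely what the two-sided $\epsilon$-bound of \eqref{eq:smooth_target} guarantees, and it is also what makes the error deterministic rather than merely high-probability, allowing the expectation over $u_{1:r-1}$ to be taken term by term. Everything else is routine algebra and order counting.
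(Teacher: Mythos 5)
Your proposal is correct and follows essentially the same route as the paper: the paper's own justification is precisely the displayed chain of equalities, namely (i) invoking \eqref{eq:posterior_u} from Lemma~\ref{lemma:posterior} and averaging over the prefix, (ii) the identity $\frac{1}{S_r}=\frac{1}{S}+\frac{S-S_r}{S\,S_r}$ to expose the correction term, and (iii) the order count $S,S_r=\Theta(N)$, $S-S_r=O(r)$ guaranteed by the $\epsilon$-bounds of \eqref{eq:smooth_target}. Your additional remarks — that the bound on the correction is deterministic over realizations of $u_{1:r-1}$ and that $O(r/N^2)$ absorbs the $O(N^{-\frac{5}{2}})$ term — are accurate and only make explicit what the paper leaves implicit.
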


\subsection{Proof for Lemma~\ref{lemma:B}}
\begin{proof}
We first calculate its expectation using the conditional distribution derived in lemma~\ref{lemma:posterior}. To simplify the notation, we denote $\delta_w(i) = w_{u_i}(0) - w_{u_i}(1)$ for $i = 1, ..., R$ and 
\begin{align}
    S(i, j, k, l) &= i \land j w_{u_i}(k)w_{u_j}(l) - (R - i \lor j + 1) w_{u_i}(1-k)w_{u_j}(1-l) \\
    P(i, k) &= \frac{1}{2} - (-1)^k (i - 1) \frac{\delta_w(i)}{4W} + (i-1) O(N^{-\frac{4}{3}})
\end{align}
for $i, j = 1, ..., R$, and $k, l = 0, 1$. Then we have
\begin{align}
    & -\frac{1}{2W^{2}} \sum_{i=1}^{R} \sum_{j=1}^{R}[ i\land j w_{u_{i}}(x_{u_{i}}) w_{u_{j}}(x_{u_{j}}) - (R - i\lor j + 1) w_{u_i}(y_{u_i})w_{u_j}(y_{u_j}) | u]\\
    = & -\frac{1}{2W^2} \sum_{i, j=1}^R \sum_{k=0}^1 \sum_{l=0}^1  S(i, j, k, l)P(i, k)P(j, l) \label{eq:3rd}\\
    = & - \frac{1}{2W^2} \sum_{i, j=1}^R (R - (i + j) + 1)(w_{u_i}(0) + w_{u_i}(1))(w_{u_j}(0) + w_{u_j}(1)) + O(\frac{R^2}{N}) \label{eq:2nd} \\
    = & - \frac{1}{2W^2} \sum_{i, j=1}^R (R - (i + j) + 1)(w_{u_i}(0) + w_{u_i}(1))(w_{u_j}(0) + w_{u_j}(1)) + O(\frac{R^4}{N^3})
\end{align}
The remaining expectation is with respect to $u$. From  proposition \ref{prop:cond}, we know that the conditional expectation of $u_i$ can be estimated via the marginal distribution of $u_1$. In fact, when $R = l N^\frac{2}{3}$, we have:
\begin{align}
    & \mathbb{E}[w_{u_r}(0) + w_{u_r}(1)|u_{1:r-1}] \\
    = & \mathbb{E}[\sum_{j=1}^N (w_j(1) + w_j(0)) (\frac{p_jw_j(1)}{\sum_{i=1}^N p_iw_i(1)} + O(\frac{R}{N^2}) | u_{1:r-1}] \\
    = & \mathbb{E}[w_{u_1}(0) + w_{u_1}(1)] + O(N^{-\frac{4}{3}})
\end{align}
and similarly, we have:
\begin{equation}
    \mathbb{E}[(w_{u_r}(0) + w_{u_r}(1))^2|u_{1:r-1}] = \mathbb{E}[(w_{u_1}(0) + w_{u_1}(1))^2] + O(N^{-\frac{4}{3}})
\end{equation}
Using these properties, we have
\begin{align}
    & \mathbb{E}[\sum_{i, j=1}^R (R - (i + j) + 1)(w_{u_i}(0) + w_{u_i}(1))(w_{u_j}(0) + w_{u_j}(1))] \\
    = & \mathbb{E}[ \mathbb{E}[\cdots \mathbb{E}[2\sum_{i=1}^R \sum_{j>i}^R (R - (i + j) + 1)(w_{u_i}(0) + w_{u_i}(1))(w_{u_j}(0) + w_{u_j}(1)) \notag \\
    & \quad + \sum_{r=1}^R (R-2r+1) (w_{u_r}(0) + w_{u_r}(1))^2|u_{1:R-1}] \cdots |u_1] ] \\
    = & \mathbb{E} [2\sum_{i=1}^R \sum_{j>i}^R (R - (i + j) + 1)(w_{u_1}(0) + w_{u_1}(1))(w_{u_1}(0) + w_{u_1}(1))] + O(N^\frac{2}{3}) \notag \\
    & \quad + \mathbb{E}[\sum_{r=1}^R (R-2r+1) (w_{u_1}(0) + w_{u_1}(1))^2] + O(1) \\
    = & (w_{u_1}(0) + w_{u_1}(1))^2 \sum_{i, j=1}^N (R - (i+j) + 1)  + O(N^\frac{2}{3}) \\
    = & O(N^\frac{2}{3})
\end{align}
Hence, we prove that
\begin{equation}
    \mathbb{E}[-\frac{1}{2W^{2}} \sum_{i=1}^{R} \sum_{j=1}^{R}[i \land j w_{u_{i}}(x_{u_{i}}) w_{u_{j}}(x_{u_{j}}) - (R - i \lor j + 1) w_{u_i}(y_{u_i})w_{u_j}(y_{u_j}) ] = O(N^{-\frac{4}{3}})
\end{equation}
The expectation of the $B$ \eqref{eq:B} is small. To show it is save to ignore, we will prove the concentration property. Consider a function of $x$ and $u$:
\begin{equation}
    F(x, u) = -\frac{1}{2} \frac{1}{W^{2}} \sum_{i=1}^{R} \sum_{j=1}^{R}[i \land j w_{u_{i}}(x_{u_{i}}) w_{u_{j}}(x_{u_{j}}) - (R - i\lor j + 1) w_{u_i}(y_{u_i})w_{u_j}(y_{u_j})
\end{equation}
where $y$ is obtained by flipping indices $u$ of $x$. For changing $x$, we have:
\begin{equation}
    |F(x_1, ..., x_j, ..., x_N, u_1, ..., u_R) - F(x_1, ..., x'_j, ..., x_N, u_1, ..., u_R)| \le c_j
\end{equation}
where $c_j = 0$ if $j \notin u$ or $c_j = O(\frac{R^2}{N^2})$ if there exists $r$ and $u_r = j$. For chaning $u$, we have
\begin{equation}
    |F(x_1, ..., \textcolor{\cdiff}{x_N}, u_1, ..., u_i, ... u_R) - F(x_1, ..., x_N, u_1, ..., \textcolor{\cdiff}{u'_i, ..., u_R})| \le d_i
\end{equation}
where $d_i = O(\frac{R^2}{N^2})$ for $i = 1, ..., R$. By McDiarmid's inequality, we have:
\begin{equation}
    \mathbb{P}(|F(x, u) - \mathbb{E}[F(x, u)] \ge t \frac{R^\frac{5}{2}}{N^\frac{7}{4}}) \le 2 \exp(-\frac{2t^2 R^5 / \textcolor{\cdiff}{N^\frac{7}{2}}}{\sum_{j=1}^N c^2_j + \sum_{i=1}^R d^2_i}) \lesssim \exp(-2t^2N^\frac{1}{2})
\end{equation}
Hence, $F(x, u)$ will concentrate to its expectation at scale $O(R^\frac{5}{2} / N^\frac{7}{4})$. Since $R = l N^\frac{2}{3}$, with probability larger than $1 - O(\exp(-N^\frac{1}{2}))$, $B = O(N^{-\frac{1}{12}})$. 
\end{proof}

\subsection{Lemma~\ref{lemma:A_good}}
\begin{proof}
To show that $A$ weakly converges to a normal distribution, we use martingale central limit theorem. Define a martingale $M_n$, for $n=0, 1, ..., 2R$. When $n \le R$, we let the process $M_n = 0$ and the filter $F_n$ as the $\sigma$-algebra determined by $u_1, ..., u_n$. For $R + 1 \le R + n \le 2R$, define
\begin{align}
    M_{R+n} = M_{R + n-1} + \frac{1}{W}\Big( &(R - r + 1) w_{u_n}(x_n) - rw_{u_n}(1 - x_{u_n}) \notag \\
    &- \mathbb{E}[(R - r + 1) w_{u_n}(x_n) - rw_{u_n}(1 - x_{u_n})]\Big)
\end{align}
We first estimate the mean of the increment using the conditional probability derived in lemma~\ref{lemma:posterior}. If $n \le  R$, the mean is obviously $0$, else
\begin{align}
    & \mathbb{E}[\frac{(R - r + 1)w_{u_r}(x_{u_r}) - r w_{u_r}(y_{u_r})}{W}|u_r = j] \\
    =& \frac{(R - r + 1)w_j(1) - r w_j(0)}{W} (\frac{1}{2} + r \frac{w_j(0) - w_j(1)}{W} + O(\frac{R}{N^\frac{3}{2}} + \frac{R^2}{N^2})) \\
    & + \frac{(R - r + 1)w_j(0) - r w_j(1)}{W} (\frac{1}{2} - r \frac{w_j(0) - w_j(1)}{W} + O(\frac{R}{N^\frac{3}{2}} + \frac{R^2}{N^2})) \\
    =& \frac{1}{2}\frac{R-2r+1}{W}(w_j(1) + w_j(0)) - \frac{r(R+1)}{4W^2}(w_j(0) - w_j(1))^2 + O(\frac{R^2}{N^\frac{5}{2}} + \frac{R^3}{N^3})
\end{align}
Then we estimate the variance of $M_n - M_{n-1}$. We start with estimating the 2nd moment.
\begin{align}
    & \mathbb{E}[(\frac{(R - r + 1)w_{u_r}(x_{u_r}) - r w_{u_r}(y_{u_r})}{W})^2|u_r=j] \\
    =& (\frac{(R - r + 1)w_j(1) - r w_j(0)}{W})^2 (\frac{1}{2} + r \frac{w_j(0) - w_j(1)}{W}) + O(\frac{R}{N^\frac{3}{2}} + \frac{R^2}{N^2})) \\
    & + \frac{((R - r + 1)w_j(0) - r w_j(1)}{W})^2 (\frac{1}{2} - r \frac{w_j(0) - w_j(1)}{W}) + O(\frac{R}{N^\frac{3}{2}} + \frac{R^2}{N^2})) \\
    =& \frac{1}{2}((R-r+1)^2 + r^2)\frac{w_j^2(0) + w_j^2(1)}{W^2} - 2r(R-r+1)\frac{w_j(0)w_j(1)}{W^2} + O(\frac{R^3}{N^\frac{7}{2}} + \frac{R^4}{N^4})
\end{align}
Then, we are able to calculate the variance:
\begin{align}
    & \text{var} [\frac{(R - r + 1)w_{u_r}(x_{u_r}) - r w_{u_r}(y_{u_r})}{W}|u_r=j] \\
    = & \mathbb{E}[(\frac{(R - r + 1)w_{u_r}(x_{u_r}) - r w_{u_r}(y_{u_r})}{W})^2|u_r=j] \notag \\
    & - \mathbb{E}^2[\frac{(R - r + 1)w_{u_r}(x_{u_r}) - r w_{u_r}(y_{u_r})}{W}|u_r=j] \\
    = & \frac{(R+1)^2}{4}\frac{w_j^2(0) + w_j^2(1)}{W^2} - \frac{(R+1)^2}{2}\frac{w_j(0)w_j(1)}{W^2} + O(\frac{R^2}{N^\frac{5}{2}} + \frac{R^3}{N^3}) \\
    = & \frac{(R+1)^2}{4W^2}(w_j(0)- w_j(1))^2 + O(\frac{R^2}{N^\frac{5}{2}} + \frac{R^3}{N^3})
\end{align}

We calculate the value of its mean $\mu$ and variance $\sigma^2$. 
\begin{align}
    \mu 
    & = \mathbb{E}[\sum_{r=1}^R \frac{(R - r + 1)w_{u_r}(x_{u_r}) - r w_{u_r}(y_{u_r})}{W}|u] \\
    & = \sum_{r=1}^R \frac{1}{2}\frac{R-2r+1}{W}(w_{u_r}(1) + w_{u_r}(0)) - \frac{r(R+1)}{4W^2}(w_{u_r}(0) - w_{u_r}(1))^2 \\
    \sigma^2
    & = \sum_{r=1}^R \text{var} [\frac{(R - r + 1)w_{u_r}(x_{u_r}) - r w_{u_r}(y_{u_r})}{W}|u] \\
    & = \sum_{r=1}^R \frac{(R+1)^2}{4W^2}(w_{u_r}(0)- w_{u_r}(1))^2
\end{align}

Define $\mu_1 = \mathbb{E}[w_{u_1}(1) + w_{u_1}(0)]$. For the first part in $\mu$, using proposition \ref{prop:cond}, we have
\begin{align}
    & \mathbb{E}[\sum_{r=1}^R \frac{R-2r+1}{W} (w_{u_r}(1) + w_{u_r}(0))] \\
    = & \mathbb{E}[\sum_{r=1}^R \frac{R-2r+1}{W} \mu_1 + O(N^{-\frac{5}{3}})] \\
    = & O(N^{-\frac{2}{3}})
\end{align}

Define $\sigma_1^2 = \mathbb{E}[(w_{u_1}(0) - w_{u_1}(1))^2]$, From lemma \ref{lemma:posterior}, we have
\begin{equation}
    \mathbb{E}[(w_{u_r}(0) - w_{u_r}(1))^2] = \sigma_1^2 + O(N^{-\frac{4}{3}}), \quad \forall r = 1, ..., R
\end{equation}
for the second term in $\mu$, we have
\begin{align}
    \sum_{r=1}^R  - \frac{r(R+1)}{4W^2}(w_{u_r}(0) - w_{u_r}(1))^2 = -\frac{R(R+1)^2}{8W^2}\sigma_1^2 + O(N^{-\frac{4}{3}})
\end{align}
for the variance $\sigma^2$, we have:
\begin{align}
    \sum_{r=1}^R  \frac{(R+1)^2}{4W^2}(w_{u_r}(0) - w_{u_r}(1))^2 = \frac{R(R+1)^2}{4W^2}\sigma_1^2 + O(N^{-\frac{4}{3}})
\end{align}
Finally, we will decouple $R$ with $W$. Specifically:
\begin{align}
    \frac{1}{W^2} = \frac{1}{\mathbb{E}^2[\sum_{k\notin u} w_k(x_k)]} = \frac{1}{\mathbb{E}^2[\sum_{k=1}^N w_k(x_k)]} + O(N^{-\frac{8}{3}})
\end{align}

Combine everything together\textcolor{\cdiff}{,} we have
\begin{align}
    \mu &= -\frac{R(R+1)^2}{8\mathbb{E}^2[\sum_{k=1}^N w_k(x_k)]}\sigma_1^2 + O(N^{-\frac{2}{3}}) \\
    \sigma^2 &= \frac{R(R+1)^2}{4\mathbb{E}^2[\sum_{k=1}^N w_k(x_k)]}\sigma_1^2 + O(N^{-\frac{4}{3}}) 
\end{align}
Since $R = l N^\frac{2}{3}$, we have the sum of the conditional variance is $O(1)$ and the reminder is $o(1)$. For a martingale, we need to check one more step. We know $|M_n - M_{n-1}| = 0$ for $n \le R$. For $n + R > R$, we have:
\begin{align}
    |M_{R+n} - M_{R+n-1}| 
    & = \frac{1}{W}\big((R - r + 1) w_{u_r}(x) - rw_{u_r}(y) - \mathbb{E}[(R - r + 1) w_{u_r}(x) - rw_{u_r}(y)]\big) \\
    & = O(\frac{R}{N}) = O(N^{-\frac{1}{3}})
\end{align}
is uniformly bounded by a constant independent of $N$ and $R$. 
We denote
\begin{equation}
    \lambda_1^2 = \frac{\sum_{j=1}^N p_j w_j(1) (w_j(0) - w_j(1))^2}{4 \mathbb{E}^2[\frac{1}{N}\sum_{k=1}^N w_k(x_k)] \sum_{i=1}^N p_i w_i(1)}
\end{equation}
Then we can rewrite:
\begin{align}
    \mu = -\frac{1}{2}\lambda_1^2 l^3 \\
    \sigma^2 = \lambda_1^2 l^3
\end{align}
By martingale central limit theorem, we have that:
\begin{equation}
    \frac{A - \mu}{\sigma} \longrightarrow_\text{dist.} \mathcal{N}(0, 1)
\end{equation}
Furthermore, we use the convergence rate in \cite{haeusler1988rate}, we have:
\begin{align}
    L_{R, 2 \delta} & \equiv \sum_{r=1}^{2R} E\left(\left|M_r - M_{r-1}\right|^{2+2 \delta}\right) = O(\frac{R^{3 + 2\delta}}{N^{2 + 2\delta}}) \\
    M_{R, 2 \delta} & \equiv \mathbb{E}[|\sum_{r=1}^{2R} \mathbb{E}[(M_r - M_{r-1})^2|F_{r-1}] - 1 |^{1 + \delta}] = O(\frac{R^{4+4\delta}}{N^{4+4\delta}})
\end{align}
Then we have the probability
\begin{equation}
    |\mathbb{P}(\frac{A-\mu}{\sigma} \le t) - \Phi(t)| \le D_R
\end{equation}
where
\begin{equation}
    D_R \le C_\delta (L_{R, 2\delta} + M_{R, 2\delta})^\frac{1}{3 + 2\delta} = O(R / N^\frac{2 + 2\delta}{3 + 2\delta}), \quad \forall \delta > 0
\end{equation}
where $C_\delta$ is an absolute constant that only depends on $\delta$. We select $\delta = \frac{1}{2}$, we have:
\begin{equation}
    |\mathbb{P}(\frac{A-\mu}{\sigma} \le t) - \Phi(t)| \le O(R / N^\frac{3}{4})
\end{equation}
Since we consider $R= l N^\frac{2}{3}$, we prove the lemma.
\end{proof}


\subsection{Proof of Lemma~\ref{lemma:normal}}
\begin{proof}
Assume $Z \sim \mathcal{N}(\mu, \sigma^2)$, then we have:
\begin{align}
    \mathbb{E}\min\{1, e^{Z}\} 
    & = \int_{-\infty}^0 e^z \frac{1}{\sqrt{2\pi}\sigma} e^{-\frac{(z - \mu)^2}{2\sigma^2}} dz + \int_0^\infty \frac{1}{\sqrt{2\pi}\sigma} e^{-\frac{(z - \mu)^2}{2\sigma^2}} dz \\
    &= \int_{-\infty}^0 \frac{1}{\sqrt{2\pi}\sigma} e^{-\frac{z^2 - 2\mu z + \mu^2 - 2 \sigma^2 z}{2\sigma^2}} dz + \int_{-\mu}^\infty \frac{1}{\sqrt{2\pi}\sigma} e^{-\frac{z^2}{2\sigma^2}} dz \\
    &= \exp(\mu + \frac{\sigma^2}{2}) \int_{-\infty}^0 \frac{1}{\sqrt{2\pi}\sigma} e^{-\frac{(z - (\mu + \sigma^2))^2}{2\sigma^2}} dz + \int_{-\mu}^\infty \frac{1}{\sqrt{2\pi}\sigma} e^{-\frac{z^2}{2\sigma^2}} dz \\
    &= \exp(\mu + \frac{\sigma^2}{2}) \int_{-\infty}^{-\mu -\sigma^2} \frac{1}{\sqrt{2\pi}\sigma} e^{-\frac{z^2}{2\sigma^2}} dz + \int_{-\infty}^\mu \frac{1}{\sqrt{2\pi}\sigma} e^{-\frac{z^2}{2\sigma^2}} dz \\
    &= \exp(\mu + \frac{\sigma^2}{2}) \Phi(-\frac{\mu}{\sigma} - \sigma) + \Phi(\frac{\mu}{\sigma}) 
\end{align}
Specially, when $\mu = -\frac{1}{2} \sigma^2$, we have:
\begin{equation}
    \mathbb{E}\min\{1, e^Z\} = 2\Phi(-\frac{1}{2}\sigma)
\end{equation}
\end{proof}

\subsection{Proof for Theorem~\ref{thm:rw}}
\begin{proof}
In RWM-R, the proposal distribution is uniform, hence we only need to consider the probability ratio in the acceptance rate. Given current state $x$ and the picked indices $u$, the proposed state $y$ is obtained by flipping indices $u$ of $x$. The acceptance rate is:
\begin{align}
    A(x, y, u) 
    & = 1 \land \frac{\pi(y)}{\pi(x)} \\
    & = 1 \land \prod_{r=1}^R \frac{\pi_{u_r}(y)}{\pi_{u_r}(x)} \\
    & = 1 \land \prod_{r=1}^R \frac{p_{u_r}^{y_{u_r}} (1 - p_{u_r})^{1 - y_{u_r}}}{p_{u_r}^{x_{u_r}} (1 - p_{u_r})^{1 - x_{u_r}}} \\
    & = 1 \land \prod_{r=1}^R p_{u_r}^{1 - 2 x_{u_r}} (1 - p_{u_r})^{2 x_{u_r} - 1} \\
    & = 1 \land \exp(\sum_{r=1}^R (1 - 2 x_{u_r})\log \frac{p_{u_r}}{1 - p_{u_r}})
\end{align}

Define the martingale $M_n$, $n=1, ..., 2R$. For $r=1, ..., R$, we have $M_r = 0$ and the filtration $F_r$ is determined by the $\sigma$-algebra of $u_1, ..., u_R$. For $ R+1 \le R + n \le 2R$, we have:
\begin{equation}
    M_{R+n} = M_{R + n-1} + (1 - 2x_{u_n}) \log \frac{p_{u_n}}{1 - p_{u_n}} -  \mathbb{E}[(1 - 2x_{u_n}) \log \frac{p_{u_n}}{1 - p_{u_n}}]
\end{equation}
Hence, for $n \le R$, the increment is $0$. For $n + R > R$, denote the mean of the increment is :
\begin{align}
    \mathbb{E}[(1 - 2x_{u_n}) \log \frac{p_{u_n}}{1 - p_{u_n}}] = (1 - 2p_{u_n}) \log \frac{p_{u_n}}{1 - p_{u_n}}
\end{align}
the variance of the increment is:
\begin{align}
    & \mathbb{E}[(M_{R+j} - M_{R+j-1})^2|u, x_{1:j-1}] \\
    = & \mathbb{E}[((1 - 2x_{u_n}) \log \frac{p_{u_n}}{1 - p_{u_n}} -  \mathbb{E}[(1 - 2x_{u_n}) \log \frac{p_{u_n}}{1 - p_{u_n}}])^2 ] \\
    = & \mathbb{E}[((1 - 2x_{u_n}) \log \frac{p_{u_n}}{1 - p_{u_n}})^2] - \mathbb{E}^2[(1 - 2x_{u_n}) \log \frac{p_{u_n}}{1 - p_{u_n}}] \\
    = & (\log \frac{p_{u_n}}{1 - p_{u_n}})^2 - (1 - 2 p_{u_n})^2 (\log \frac{p_{u_n}}{1 - p_{u_n}})^2 \\
    = & 4p_j(1-p_{u_n}) (\log \frac{p_{u_n}}{1 - p_{u_n}})^2
\end{align}
 When $N$ is large, we have $p_{u_n} - \frac{1}{2} = O(N^{-\beta})$, hence
\begin{align}
    & 4p_{u_n}(1-p_{u_n}) (\log \frac{p_{u_n}}{1 - p_{u_n}})^2 \\
    = & 4p_{u_n}(1-p_{u_n}) \log(1 + \frac{2p_{u_n}-1}{p_{u_n}}) \log(\frac{p_{u_n}}{1 - p_{u_n}}) \\
    = & 4(\frac{1}{2} + O(N^{-\beta})) (1-p_{u_n}) (\frac{2p_{u_n}-1}{1 - p_{u_n}} + O(N^{-2\beta}) ) \log(\frac{p_{u_n}}{1 - p_{u_n}}) \\
    = & 2(2p_{u_n} - 1) \log(\frac{p_{u_n}}{1 - p_{u_n}})(1 + O(N^{-\beta}))
\end{align}
is negative twice of the corresponding mean. Since the indices $u$ are uniformly picked, the conditional distribution of $u_r$ is:
\begin{equation}
    \mathbb{P}(u_r = j | u_{1:r-1}) = \frac{1_{\{j \notin u_{1:r-1}\}}}{\sum_{i=1}^N 1_{\{i \notin u_{1:r-1}\}}} = \frac{1}{N} + O(\frac{R}{N^2})
\end{equation}
Hence, we have the mean is
\begin{align}
    \mu &= \mathbb{E}[\sum_{r=1}^R (1 - 2 x_{u_n}) \log \frac{p_{u_n}}{1 - p_{u_n}}] \\
    & = \mathbb{E}[R (1 - 2 x_{u_1}) \log \frac{p_{u_1}}{1 - p_{u_1}} + O(\frac{R^2}{N^2})] \\
    & = \frac{R}{N^{2\beta}} \frac{1}{N} \sum_{i=1}^N N^{2\beta}(1 - 2 p_i) \log \frac{p_i}{1 - p_i} + O(\frac{R^2}{N^2}) 
\end{align}
Similarly, we have the variance is:
\begin{equation}
    \sigma^2 = \mathbb{E}[\sum_{r=1}^R 2(2 x_{u_n} - 1) \log \frac{p_{u_n}}{1 - p_{u_n}}] = \frac{R}{N^{2\beta}} \frac{2}{N} \sum_{i=1}^N N^{2\beta}(2 p_i - 1) \log \frac{p_i}{1 - p_i} + O(\frac{R^2}{N^2}) 
\end{equation}
When $R = O(N^{2\beta}$), the variance is at a constant order. For a martingale, we also need to check the increments are uniformly bounded. When $n \le R$, the increment is always $0$. When $R + 1 \le R + n \le 2R$, we have:
\begin{equation}
    |M_{R+n} - M_{R+n-1}| = |(1 - 2 x_{u_n})\log\frac{p_{u_n}}{1 - p_{u_n}} - \mathbb{E}[(1 - 2 x_{u_n})\log\frac{p_{u_n}}{1 - p_{u_n}}]| \le C(\epsilon)
\end{equation}
where $C(\epsilon)$ is a constant only determined by $\epsilon$. Hence, by martingale central limit theorem, we have the distribution of $M_{2R}$ converges to a normal distribution. Denote
\begin{equation}
    \lambda_2^2 = \frac{2}{N} \sum_{i=1}^N N^{2\beta}(2 p_i - 1) \log \frac{p_i}{1 - p_i}
\end{equation}
Then we can rewrite:
\begin{align}
    \mu &= -\frac{1}{2}\lambda_2^2 \frac{R}{N^{2\beta}} \\
    \sigma^2 &= \lambda_2^2 \frac{R}{N^{2\beta}}
\end{align}
Denote $Z = \sum_{r=1}^R (1 - 2 x_{u_r}) \log \frac{p_{u_r}}{1 - p_{u_r}}$. By martingale central limit theorem, we have
\begin{equation}
    \frac{Z - \mu}{\sigma} \longrightarrow_\text{dist.} \mathcal{N}(0, 1)
\end{equation}
Furthermore, using the convergence rate in \cite{haeusler1988rate}, we have:
\begin{align}
    L_{R, 2 \delta} & \equiv \sum_{r=1}^{2R} E\left(\left|M_r - M_{r-1}\right|^{2+2 \delta}\right) = O(\frac{R}{N^{(4 + 4\delta)\beta}}) \\
    M_{R, 2 \delta} & \equiv \mathbb{E}[|\sum_{r=1}^{2R} \mathbb{E}[(M_r - M_{r-1})^2|F_{r-1}] - 1 |^{1 + \delta}] = O(\frac{R^{2+2\delta}}{N^{2+2\delta}})
\end{align}
Then we have the probability
\begin{equation}
    |\mathbb{P}(\frac{A-\mu}{\sigma} \le t) - \Phi(t)| \le D_R
\end{equation}
where
\begin{equation}
    D_R \le C_\delta (L_{R, 2\delta} + M_{R, 2\delta})^\frac{1}{3 + 2\delta} = O(R^\frac{1}{3 + 2\delta} / N^\frac{4 + 4\delta}{3 + 2\delta}), \quad \forall \delta > 0
\end{equation}
where $C_\delta$ is an absolute constant that only depends on $\delta$. We select $\delta = \frac{1}{2}$, we have:
\begin{equation}
    |\mathbb{P}(\frac{A-\mu}{\sigma} \le t) - \Phi(t)| \le O(R^\frac{1}{4} / N^\frac{5}{4})
\end{equation}

Hence, the expectation w.r.t. $\sum_{r=1}^R (1 - 2 x_{u_r}) \log \frac{p_{u_r}}{1 - p_{u_r}}$ converges to the expectation w.r.t.
\begin{equation}
    \mathcal{N}(-\frac{1}{2}\lambda_2^2 \frac{R}{N^{2\beta}}, \lambda_2^2 \frac{R}{N^{2\beta}}) 
\end{equation}
Using lemma~\ref{lemma:normal}, we have the acceptance rate converges to:
\begin{equation}
    a(R) = 2\Phi(-\frac{1}{2} \lambda_2 \frac{R^\frac{1}{2}}{N^\beta})
\end{equation}
In RWM-R, the distance between the current state $x$ and the proposed state $y$ is always $d(x, y) = R$, hence we have:
\begin{equation}
    \rho(R) = R a(R) = 2R \Phi(-\frac{1}{2} \lambda_2 \frac{R^\frac{1}{2}}{N^\beta})
\end{equation}

When $R = \omega(N^{2\beta})$, we can give a concentration property. Since the selection of $u_r$ is a martingale, we can apply Azuma-Hoeffding inequality:
\begin{align}
    \mathbb{P}(|M_{2R} - \mu| > t \lambda_2 R^\frac{3}{4} / N^{\frac{3}{2}\beta})
    \lesssim 2\exp(- \frac{2t^2 R^\frac{3}{2} /N^{3\beta}}{R N^{-2\beta}}) = 2\exp(-2t^2 R^\frac{1}{2} / N^\beta)
\end{align}
Hence, When $N$ is sufficiently large, with probability larger than $1 - O(\exp(-2t^2 R^\frac{1}{2} / N^\beta))$, we have:
\begin{equation}
    \sum_{r=1}^R (1 - 2 x_{u_r}) \log \frac{p_{u_r}}{1 - p_{u_r}} = -\frac{1}{2}\lambda_2^2 \frac{R}{N^{2\beta}} + \textcolor{\cdiff}{O(\frac{t R^\frac{3}{4}}{N^{\frac{3}{2}\beta}})} = -\frac{C}{2} \lambda_2^2 \frac{R}{N^{2\beta}}
\end{equation}
For $C > 0$ independent with $N, R$.
\end{proof}

\subsection{Proof for Corollary \ref{cor:rw_rate}}
\begin{proof}
When $R = O(N^{2\beta})$, denote $z = R \lambda_2^2 / N^{2\beta}$
\begin{align}
    & \rho(R) = 2 R \Phi(-\frac{1}{2} \lambda_2 \frac{R^\frac{1}{2}}{N^\beta}) \\
    = & 2 (N^{2\beta}/R) (R \lambda_2^2 / N^{2\beta}) \Phi(-\frac{1}{2} ( (R \lambda_2^2 / N^{2\beta})^\frac{1}{2}) \\
    = & 2 (N^{2\beta}/R) z \Phi(-\frac{1}{2} z^\frac{1}{2})
\end{align}
which means the optimal value of $z$ is independent of the target distribution. As $\Phi$ is known, we can numerically solve $z = 5.673$. Hence the corresponding expected acceptance rate $a = 0.234$, independent with the target distribution, and the efficiency is $\Theta(N^{2\beta})$. When $R = \omega(N^{2\beta})$, with probability $1 - O(\exp(-2R / N^\beta))$, the acceptance rate decrease exponentially fast, rendering $o(1)$ jump distance. For the remaining probability $O(\exp(-2R / N^\beta))$, assuming all proposals are accepted, the efficiency is still bounded by:
\begin{equation}
    R \exp(-2R / N^\beta) = o(1)
\end{equation}
Hence, optimal efficiency is achieved when $R = O(N^{2\beta})$.
\end{proof}

\newpage
\section{Discussion}

\subsection{Expected Jump Distance as the Metric to Tune the Scale}
\label{sec:why_ejd}
\textcolor{\cdiff}{
In this section, we want to convince the reader that the expected jump distance (EJD) is the correct metric to evaluate the efficiency for samplers in discrete space. To simplify the derivation, we consider the distribution
\begin{equation}
    \pi^{(N)}(x) = \prod_{i=1}^N \pi_i(x_i) = \prod_{i=1}^N p^{x_i}(1-p)^{1-x_i}
\end{equation}
We can notice that, compared to the target distributions considered in the main text \eqref{eq:bernoulli}, we assume the target distribution is identical in each dimension.
}

\textcolor{\cdiff}{
Let the LBP chain, with $R = lN^\frac{2}{3}$, being denoted as $\{x(1), x(2), ... \}$. Since all dimensions are identical, we only need to focus on the first dimension. Denote $w_1 = g(\frac{\pi_1(x_1=0)}{\pi_1(x_1=1)})$ and $w_0 = g(\frac{\pi_1(x_1=1)}{\pi_1(x_1=0)})$. From Lemma. \ref{lemma:concentration_w(x,u)}, we can see that:
\begin{align}
    \lim_{N\rightarrow \infty} \frac{\mathbb{P}(u, \exists u_j = 1| x_1 = 0)}{\mathbb{P}(u, \exists u_j = 1| x_1 = 1)} = \frac{w_0}{w_1}
\end{align}
That's to say, the probability ratio of $x_1=0$ and $x_1=1$ being flipped equals to their weight ratio. Then we compare the acceptance rate in M-H test. From the proof of the main theorem \ref{thm:lb}, we know the acceptance rate is determined by the term $A$ defined in \eqref{eq:A}
\begin{align}
    A = & \frac{1}{W}\sum_{r=1}^R (R-r+1)w_{u_i}(x_{u_i}) - r w_{u_i}(y_{u_i}) 
\end{align}
We can see that, when the first dimension is flipped in proposal, the difference of $A$ is $O(N^{-\frac{1}{3}})$ for $x_1=0$ and $x_1=1$. As a result, we have:
\begin{equation}
    \lim_{N\rightarrow \infty} \frac{\mathbb{P}(\text{accept }|u, \exists u_j=1,  x_1 = 0)}{\mathbb{P}(\text{accept }|u, \exists u_j=1, x_1 = 1)} = 1
\end{equation}
Now, we consider the one-dimensional process $Z^N_t = x_1(\lfloor tN^\frac{1}{3}\rfloor)$.
The identical assumption implies that, the frequency for a site, for example the first dimension, being selected is $l N^{-\frac{1}{3}}$.
We can easily see that when $N$ is large enough, $Z^N_t$ converges to a jump process $Z_t$, whose generator we denote.
\begin{equation}
    Q = \left[
    \begin{array}{rr}
    -Q_{01} & Q_{01} \\
    Q_{10}   & -Q_{10}
    \end{array}
    \right]
\end{equation}
From the derivation above, we know that
\begin{equation}
    \frac{Q_{01}}{Q_{10}} = \lim_{N\rightarrow \infty} \frac{\sum_u \mathbb{E}_{x_{2:N}}[\mathbb{P}(u, \exists u_j=1| x_1 = 0) \mathbb{P}(\text{accept }|u, \exists u_j=1,  x_1 = 0)]}{\sum_u \mathbb{E}_{x_{2:N}}[\mathbb{P}(u, \exists u_j=1| x_1 = 1) \mathbb{P}(\text{accept }|u, \exists u_j=1,  x_1 = 1)]} = \frac{w_0}{w_1}
\end{equation}
Since the sketch of proof above shows that the ratio is independent with the parameter $l$, we have the following important decomposition
\begin{equation}
    Q = \lambda(l) Q(p)
\end{equation}
where $Q(p)$ is a matrix only depends on $p$ and the locally balanced function $g$ selected, and $\lambda(l)$ is a scalar only depends on the parameter $l$.
}

\textcolor{\cdiff}{
Since $Q(p)$ only depends on the target distribution, for any test functions $f(\cdot)$, the inverse auto-correlation of the jump process is proportional to $\lambda(l)$. When we tune $l$, the coefficient $\lambda(l) = l \cdot 2 \Phi(- \lambda_1 l^\frac{3}{2})$ is the multiplication of the proposal frequency and the acceptance rate. The value $\lambda_1$ is defined in \eqref{eq:lambda_lbp}. As a jump process, we don't have to analytically compute the value of $\lambda(l)$, as $\lambda(l)$ is proportional to the expected jump distance (EJD). So, we can tune $l$ by maximizing the EJD, without having to know the formulation of the target distribution.
}

\textcolor{\cdiff}{Remark 1: The jump process in discrete space is different from the diffusion process in continuous space. For diffusion process, its velocity is characterized by the ESJD. But for jump process, its velocity is characterized by the EJD. That's why Langevin algorithms tunes the step size via ESJD \citep{roberts1998optimal}, but our LBP tunes the path length via EJD.}

\textcolor{\cdiff}{Remark 2: To simplify the derivation, we assume that the target distributions have identical marginals. For target distributions with non-identical marginal distribution, different dimensions $i = 1, ..., N$ can have different velocity $\lambda_i(l)$. But the sampling process will still converge to jump process, and we shall still use EJD to measure the efficiency.
}

\subsection{The Choice of $\epsilon$ and the Optimal Acceptance Rate}
\label{sec:convergence_epsilon}
\textcolor{\cdiff}{
The convergence of \eqref{eq:asym_acc_lbp} does not depend on the value of $\epsilon$ in \eqref{eq:smooth_target}. 
Based on the proof above, we can know \eqref{eq:asym_acc_lbp} converges at the rate $O(N^{-\frac{1}{12}})$. But the convergence of the optimal acceptance rate depends on the $\epsilon$. We can first consider two extreme cases for intuition. When all $p_i$ are close to $\frac{1}{2}$, $\lambda_1$ in \eqref{eq:lambda_lbp} will be close to $0$ and the optimal acceptance rate will be close to $1$; when all $p_i$ are close to $0$ or $1$, $\lambda_1$ in \eqref{eq:lambda_lbp} will be close to $\infty$ and the optimal acceptance rate will be close to $0$. Hence, the main purpose to use fixed $\epsilon$ is to give upper and lower bounds for $\lambda_1$ in \eqref{eq:lambda_lbp}, such that the optimal acceptance rate can converge to $0.574$ as in Corollary \ref{cor:order}.
}

\textcolor{\cdiff}{
Next, we discuss how does the model dimension $N$ in \eqref{eq:bernoulli} needed in terms of $\epsilon$ to make sure the optimal convergence to $0.574$. When all $p_i$ have the extreme value determined by $\epsilon$, using locally balanced function $g(t) = \sqrt{t}$, we can consider the following two situations:
\begin{itemize}
\item 
All $|p_i - 0.5| = \epsilon \rightarrow 0$. Then we have:
\begin{equation}
\lambda_1^2 = \frac{\sum_{i=1}^N \sqrt{\epsilon(1 - \epsilon)} (\sqrt{\frac{\epsilon}{1 - \epsilon}} - \sqrt{\frac{1 - \epsilon}{ \epsilon}})^2}{4 \epsilon (1 - \epsilon) \sum_{i=1}^N \sqrt{\epsilon(1 - \epsilon)}} \approx \frac{\sum_{i=1}^N 0.5 \cdot 4 \epsilon^2}{4 \cdot \sum_{i=1}^N 0.5} = \epsilon^2    
\end{equation}
When the expected acceptance is $0.574$, we need $\lambda_1 l^\frac{3}{2} = O(\epsilon l^\frac{3}{2})$ equals to a constant, which means $l$ has the same order as $\epsilon^{-\frac{2}{3}}$. Since we have $R = l N^\frac{2}{3} \le N$, we need $\epsilon^{-\frac{2}{3}} = O(N^\frac{1}{3})$. As a result, we requires $\epsilon^{-1} = O(N^{\frac{1}{2}})$, which basically means we need $N \ge \epsilon^{-2}$ to have the optimal acceptance rate converges to $0.574$.
\item
All $0.5 - |p_i - 0.5| = \epsilon \rightarrow 0$. We have:
\begin{equation}
     \lambda_1^2 = \frac{\sum_{j=1}^N \epsilon \sqrt{\frac{1 - \epsilon}{\epsilon}} (\sqrt{\frac{1 - \epsilon}{\epsilon}} - \sqrt{\frac{\epsilon}{1 - \epsilon}})^2 }{4 (\sqrt{\epsilon(1-\epsilon)})^2 \sum_{i=1}^N \sqrt{\epsilon (1 - \epsilon)} } \approx \frac{\sum_{j=1}^N \epsilon^\frac{1}{2} \epsilon^{-1} }{4 \epsilon \sum_{j=1}^N \epsilon^{-\frac{1}{2}}} = \frac{1}{4}\epsilon^{-2}
\end{equation}
When the expected acceptance is $0.574$, we need $\lambda_1 l^\frac{3}{2} = O(\epsilon^{-1} l^\frac{3}{2})$ equals to a constant, which means $l$ has the same order as $\epsilon^{\frac{2}{3}}$. Since we have $R = l N^\frac{2}{3} \ge 1$, we have $l^{-1} = O(N^\frac{2}{3})$.  As a result, we requires $\epsilon^{-1} = O(N^{-1})$, which basically means we need $N \ge \epsilon^{-1}$ to have the optimal acceptance rate converges to $0.574$.
\end{itemize}
So, both situations show that we need $N$ increase with $\epsilon$ to make sure the optimal acceptance rate converges. In the main text, we assume $\epsilon$ is a constant and it guarantees Corollary \ref{cor:order} holds.
}

\textcolor{\cdiff}{
We conduct extra numerical simulations to verify our results. To simplify the experiments, we assume all dimensions have the same configurations: $p_i = p$. We report the size of $N$ needed to guarantee that the optimal acceptance rate is $0.574$ in Table \ref{tab:situ1} and Table \ref{tab:situ2}.
}

\begin{table}
\begin{minipage}{.5\linewidth}
    \begin{center}
        \begin{tabular}{c|cccc}
\toprule
$\epsilon$ & $0.1$ & $0.05$ & $0.025$ & $0.0125$ \\
\midrule
$N$     & 10 & 40 & 160 & 640 \\
\bottomrule
\end{tabular}
        \captionof{table}{When $p = 0.5 - \epsilon$ \label{tab:situ1}}
    \end{center}
\end{minipage}
\begin{minipage}{.5\linewidth}
    \begin{center}
        \begin{tabular}{c|cccc}
\toprule
$\epsilon$ & $0.01$ & $0.005$ & $0.0025$ & $0.00125$ \\
\midrule
$N$     & 50 & 100 & 200 & 400 \\
\bottomrule
\end{tabular}
        \captionof{table}{When $p = \epsilon$ \label{tab:situ2}}
    \end{center}
\end{minipage} 
\end{table}


\subsection{Optimal Scale of RWM}
\label{sec:rwm_Peps}
\textcolor{\cdiff}{
When we assume the target distribution belongs to \eqref{eq:smooth_target}, the derivation of the optimal acceptance rate $0.234$ is no longer valid. But we can still show the optimal scale is $R = O(1)$ by proving the acceptance rate decreasing exponentially fast.}

\textcolor{\cdiff}{
In particular, assume we use $R = l N^\beta$ in RWM. Then the acceptance rate can be written as:
\begin{equation}
    A = \min\{1, A' = \frac{\pi(y)}{\pi(x)} = \frac{\prod_{j=1}^R \pi_{u_j}(y)}{\prod_{j=1}^R \pi_{u_j}(x)}\}
\end{equation}
Consider the martingale $M_j, j=0, 1, ..., R$, such that $M_0 = 0$ and 
\begin{equation}
    M_j - M_{j-1} =  \log \frac{\pi_{u_j}(y)}{\pi_{u_j}(x)} - \mathbb{E}[\log \frac{\pi_{u_j}(y)}{\pi_{u_j}(x)}|u_{1:j-1}] = (1 - 2x_{u_j}) \log \frac{p_{u_j}}{1 - p_{u_j}}
\end{equation}
By assumption in \eqref{eq:smooth_target}, we know that 
\begin{align}
    \mathbb{E}[\log \frac{\pi_{u_j}(y)}{\pi_{u_j}(x)}|u_{1:j-1}] 
    & = \mathbb{E} [(1 - 2x_{u_j}) \log \frac{p_{u_j}}{1 - p_{u_j}}|u_{1:j-1}] = (1 - 2p_{u_j}) \log \frac{p_{u_j}}{1 - p_{u_j}} \\
    & \le 2\epsilon \log \frac{1 - 2 \epsilon}{1 + 2 \epsilon} < 0
\end{align}
And we have 
\begin{align}
    |M_j - M_{j-1}| \le 2 \left|(1 - 2 \epsilon) \log \frac{\epsilon}{1 - \epsilon}\right| := K
\end{align}
By Azuma-Hoeffding lemma, we have
\begin{align}
    \mathbb{P}(|\sum_{j=1}^R \log \frac{\pi_{u_j}(y)}{\pi_{u_j}(x)} - \mathbb{E} [\log \frac{\pi_{u_j}(y)}{\pi_{u_j}(x)}] | \ge R^\frac{3}{4} t) \le 2 e^{\frac{-R^frac{1}{2} t^2}{K^2}}
\end{align}
For $\beta > 0$, $R$ increases to infinity when $N \rightarrow \infty$. In this case, $\log A'$ concentrates to a value $T \le R\cdot 2 \epsilon \log \frac{1 - 2\epsilon}{1 + 2\epsilon}$ and $A'$ decreases exponentially fast. Hence, the optimal scaling of RWM is $O(1)$.
}

\newpage
\section{Adaptive Algorithm}
\label{appendix:algorithm}
We give the algorithm box for ALBP:
\begin{algorithm}
\caption{Adaptive Locally Balanced Proposal}
\label{alg:albp}
\begin{algorithmic}[1]
  \STATE Initialize current state $x^{(1)}$.
  \STATE Initialize scale $R_1 = 1$.
  \FOR{t=1, ..., T}
    \STATE Initialize candidate set $\mathcal{C} = \{1, .., N\}$.
    \STATE $R \gets$ probabilistic rounding of $R_t$
    \FOR{r=1, ..., R}
      \STATE Sample $u_r$ with $\mathbb{P}(u_r=j) \propto w_j(x^{(t)}) 1_{\{j \in \mathcal{C}\}}$.
      \STATE $\mathcal{C} \gets \mathcal{C} \backslash \{u_r\}$. \label{eq:pop}
    \ENDFOR
    \STATE Obtain $y$ by flipping indices $u_1, ..., u_R$ of $x^{(t)}$.
    \STATE Compute acceptance rate $A = A(x^{(t)}, y, u)$.
    \IF{rand(0,1) $< A$}
      \STATE $x^{(t+1)} = y$
    \ELSE
      \STATE $x^{(t+1)} = x^{(t)}$
    \ENDIF
    \IF{$t < T_\text{warmup}$}
      \STATE $R_{t+1} \gets R_t + (A - 0.574)$.
    \ENDIF
  \ENDFOR
\end{algorithmic}
\end{algorithm}

We give the algorithm box for ARWM:
\begin{algorithm}
\caption{Adaptive Random Walk Metropolis}
\label{alg:arwm}
\begin{algorithmic}[1]
  \STATE Initialize current state $x^{(1)}$.
  \STATE Initialize scale $R_1 = 1$.
  \FOR{t=1, ..., T}
    \STATE Initialize candidate set $\mathcal{C} = \{1, .., N\}$.
    \STATE $R \gets$ probabilistic rounding of $R_t$
    \STATE Uniformly sample $u_1, ..., u_R$.
    \STATE Obtain $y$ by flipping indices $u_1, ..., u_R$ of $x^{(t)}$.
    \STATE Compute acceptance rate $A = A(x^{(t)}, y, u)$.
    \IF{rand(0,1) $< A$}
      \STATE $x^{(t+1)} = y$
    \ELSE
      \STATE $x^{(t+1)} = x^{(t)}$
    \ENDIF
    \IF{$t < T_\text{warmup}$}
      \STATE $R_{t+1} \gets R_t + (A - 0.234)$.
    \ENDIF
  \ENDFOR
\end{algorithmic}
\end{algorithm}

\newpage
\section{Experiment Details}
\label{appendix:exp}
We consider five samplers:
\begin{itemize}
    \item RWM: random walk Metropolis
    \item GWG($\sqrt{t}$): LBP with replacement, same as algorithm \ref{alg:lbp} except for skipping line 5, weight function $g(t) = \sqrt{t}$
    \item LBP($\sqrt{t}$): LBP given in algorithm \ref{alg:lbp}, weight function $g(t) = \sqrt{t}$
    \item GWG($\frac{t}{t+1}$): LBP with replacement, same as algorithm \ref{alg:lbp} except for skipping line 5, weight function $g(t) = \frac{t}{t+1}$
    \item LBP($\frac{t}{t+1}$): LBP given in algorithm \ref{alg:lbp}, weight function $g(t) = \frac{t}{t+1}$
\end{itemize}
For each sampler, we first start simulating with $R = 1$ to get an initial acceptance rate $a_{\max}$. Then we adopt $a_{\max} - 0.02, a_{\max} - 0.04, ..., a_{\max} - 0.02k$ as the target acceptance rate, until $a_{\max} - 0.02k < 0.03$. For each target acceptance rate $a_\text{target}$, we use our adaptive sampler to get an estimated scaling $R_\text{target}$. Then we simulate 100 chains with scaling $R_\text{target}$ to get the expected acceptance rate, expected jump distance, effective sample size $(a, d, e)$.

To measure the performance of the adaptive sampler, we compare three versions for each sampler above. In particular, for sampler X we have
\begin{itemize}
    \item X-1, represents fixed scaling $R=1$ version of the sampler.
    \item AX, represents the adaptive version of the sampler, whose target acceptance rate is selected as $0.234$ for RWM, and $0.574$ for else.
    \item GX, represents the grid search version of the sampler, where we always use the best results among all simulations for different target acceptance rates we mentioned above.
\end{itemize}

\subsection{Simulation on Bernoulli Model}
\label{sec:bernoulli}
The density function for Bernoulli distribution is:
\begin{equation}
    \pi^{(N)}(x) = \prod_{i=1}^N \pi_i(x_i) = \prod_{i=1}^N p_i^{x_i}(1-p_i)^{1-x_i}
\end{equation}
We consider three configurations
\begin{itemize}
    \item C1: $p_i$ is independently, uniformly sampled from $[0.25, 0.75]$.
    \item C2: $p_i$ is independently, uniformly sampled from $[0.15, 0.85]$.
    \item C3: $p_i$ is independently, uniformly sampled from $[0.05, 0.95]$.
\end{itemize}
For each configuration, we simulate on three sizes:
\begin{itemize}
    \item $N=100$, sample Markov chain $x_{1:10000}$, use $x_{1:5000}$ for burn in, use $x_{5001:10000}$ to estimate expected acceptance rate, expected jump distance, effective sample size.
    \item $N=800$, sample Markov chain $x_{1:40000}$, use $x_{1:20000}$ for burn in, use $x_{20001:40000}$ to estimate expected acceptance rate, expected jump distance, effective sample size.
    \item $N=6400$, sample Markov chain $x_{1:100000}$, use $x_{1:50000}$ for burn in, use $x_{50001:100000}$ to estimate expected acceptance rate, expected jump distance, effective sample size.
\end{itemize}

We give the scatter plot of $(a, d)$ and $(a, r)$ in figure~\ref{fig:bernoulli_all}. And we examine the performance of our adaptive algorithm in table~\ref{tab:bernoulli_d_all} and table~\ref{tab:bernoulli_e_all}.
\begin{figure}
    \centering
    \includegraphics[width=\textwidth]{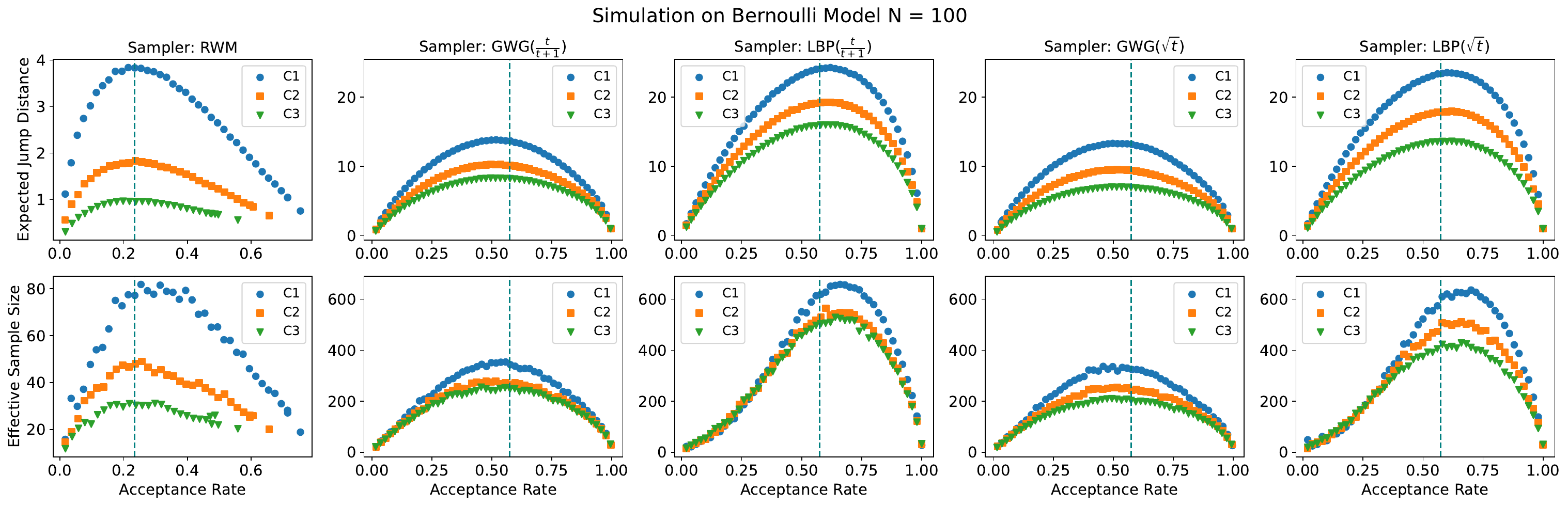}
    \includegraphics[width=\textwidth]{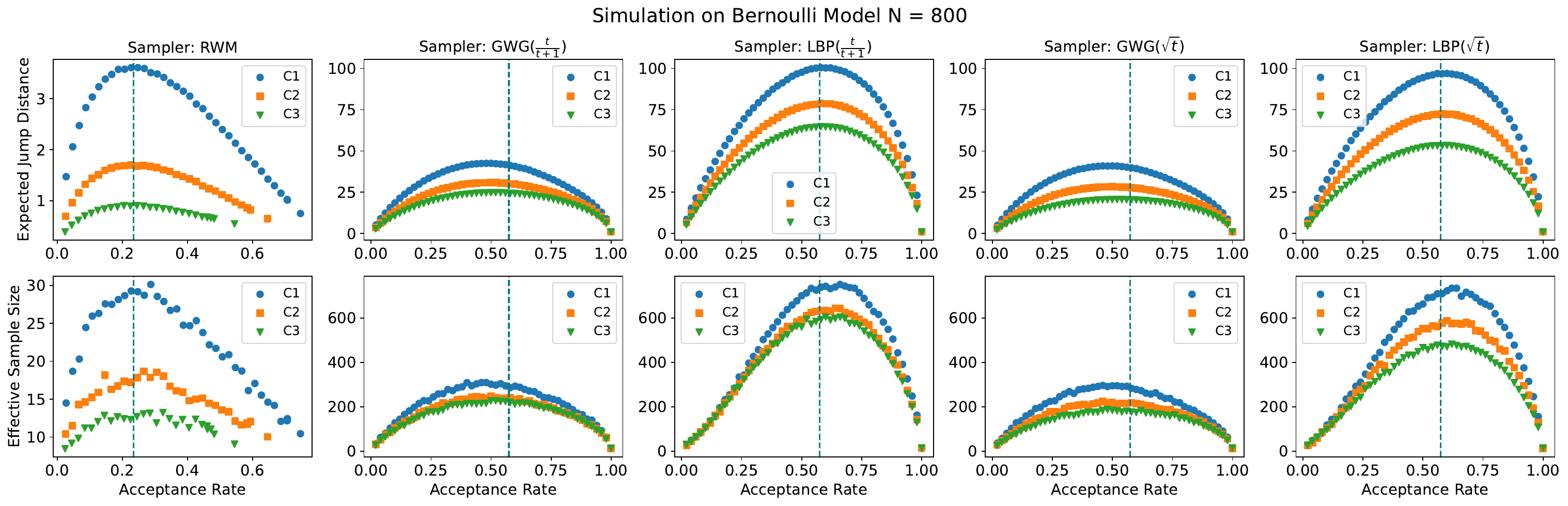}
    \includegraphics[width=\textwidth]{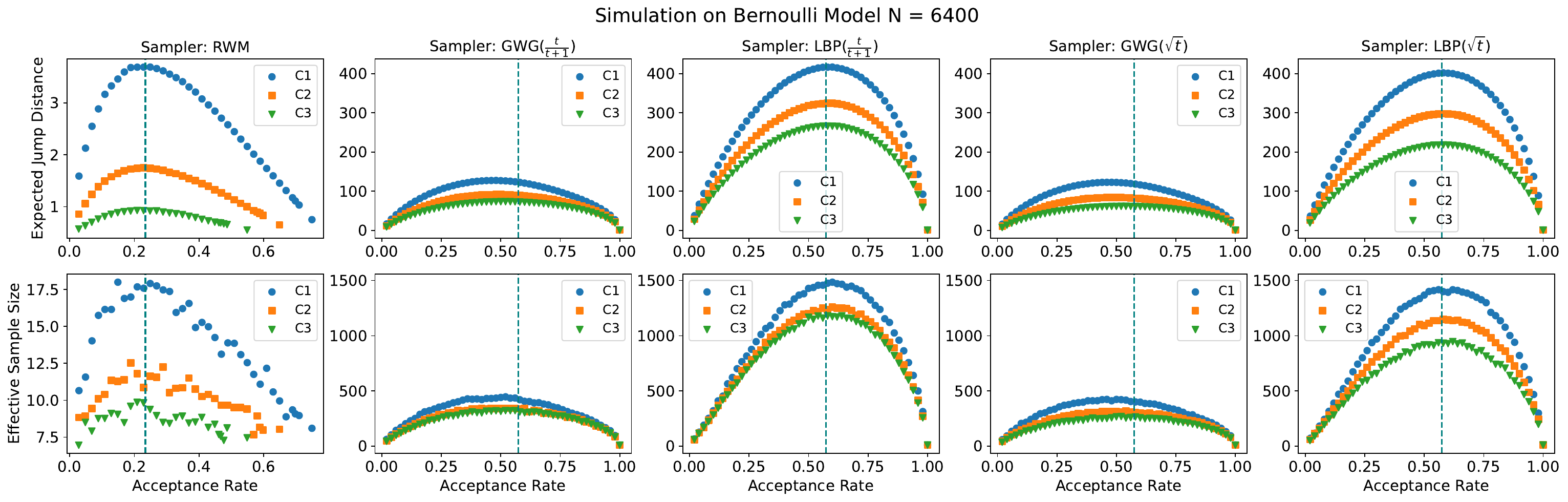}
    \caption{Simulation Results on Bernoulli Model}
    \label{fig:bernoulli_all}
\end{figure}

\begin{table}[htb]
\footnotesize
\centering
\begin{tabular}{crrrrrrrrr}
\toprule
Size  & \multicolumn{3}{c}{$N=100$} & \multicolumn{3}{c}{$N=800$} & \multicolumn{3}{c}{$N=6400$} \\
\cmidrule(lr){1-1} \cmidrule(lr){2-4} \cmidrule(lr){5-7}  \cmidrule(lr){8-10}
Sampler & C1 & C2 & C3 & C1 & C2 & C3 & C1 & C2 & C3 \\
\midrule
RWM-1 & 0.75 & 0.66 & 0.56 & 0.75 & 0.65 & 0.55 & 0.75 & 0.65 & 0.55 \\
ARWM & 3.85 & 1.81 & 0.96 & 3.63 & 1.70 & 0.89 & 3.69 & 1.74 & 0.92 \\
GRWM & 3.84 & 1.83 & 0.96 & 3.61 & 1.70 & 0.90 & 3.69 & 1.75 & 0.93 \\
GWG($\frac{t}{t+1}$)-1 & 1.00 & 1.00 & 1.00 & 1.00 & 1.00 & 1.00 & 1.00 & 1.00 & 1.00 \\
AGWG($\frac{t}{t+1}$) & 13.60 & 10.14 & 8.26 & 41.45 & 30.23 & 24.39 & 123.08 & 89.48 & 72.56 \\
GGWG($\frac{t}{t+1}$) & 13.84 & 10.30 & 8.31 & 42.50 & 30.88 & 24.74 & 127.55 & 91.68 & 73.33 \\
LBP($\frac{t}{t+1}$)-1 & 1.00 & 1.00 & 1.00 & 1.00 & 1.00 & 1.00 & 1.00 & 1.00 & 1.00 \\
ALBP($\frac{t}{t+1}$) & 24.05 & 19.16 & 15.96 & 100.25 & 78.63 & 64.47 & 416.55 & 324.59 & 266.73 \\
GLBP($\frac{t}{t+1}$) & 24.26 & 19.26 & 15.98 & 100.49 & 78.83 & 64.69 & 416.67 & 324.67 & 266.21 \\
GWG($\sqrt{t}$)-1 & 1.00 & 1.00 & 0.99 & 1.00 & 1.00 & 1.00 & 1.00 & 1.00 & 1.00 \\
AGWG($\sqrt{t}$) & 13.14 & 9.41 & 6.92 & 39.88 & 27.81 & 20.46 & 118.44 & 82.52 & 61.27 \\
GGWG($\sqrt{t}$) & 13.31 & 9.52 & 7.04 & 40.92 & 28.34 & 20.60 & 122.74 & 84.08 & 61.52 \\
LBP($\sqrt{t}$)-1 & 1.00 & 1.00 & 1.00 & 1.00 & 1.00 & 1.00 & 1.00 & 1.00 & 1.00 \\
ALBP($\sqrt{t}$) & 23.40 & 17.88 & 13.59 & 96.96 & 72.39 & 53.28 & 401.94 & 297.95 & 218.09 \\
GLBP($\sqrt{t}$) & 23.53 & 17.95 & 13.61 & 96.93 & 72.41 & 53.36 & 401.89 & 297.72 & 218.11 \\
\bottomrule
\end{tabular}
\caption{Expected Jump Distance on Bernoulli Model}
\label{tab:bernoulli_d_all}
\end{table}

\begin{table}[htb]
\footnotesize
\centering
\begin{tabular}{crrrrrrrrr}
\toprule
Size  & \multicolumn{3}{c}{$N=100$} & \multicolumn{3}{c}{$N=800$} & \multicolumn{3}{c}{$N=6400$} \\
\cmidrule(lr){1-1} \cmidrule(lr){2-4} \cmidrule(lr){5-7}  \cmidrule(lr){8-10}
Sampler & C1 & C2 & C3 & C1 & C2 & C3 & C1 & C2 & C3 \\
\midrule
RWM-1 & 18.85 & 20.09 & 20.27 & 10.44 & 10.02 & 9.06 & 8.11 & 8.04 & 7.46 \\
ARWM & 80.86 & 47.95 & 30.49 & 28.54 & 18.44 & 12.97 & 17.97 & 11.25 & 8.66 \\
GRWM & 81.82 & 49.00 & 31.11 & 30.13 & 18.67 & 13.22 & 17.99 & 12.53 & 9.86 \\
GWG($\frac{t}{t+1}$)-1 & 28.11 & 27.89 & 31.56 & 10.80 & 12.75 & 13.61 & 7.93 & 9.17 & 9.00 \\
AGWG($\frac{t}{t+1}$) & 343.74 & 270.00 & 253.53 & 302.65 & 234.94 & 215.58 & 423.86 & 334.06 & 307.75 \\
GGWG($\frac{t}{t+1}$) & 353.97 & 278.47 & 255.11 & 309.04 & 247.38 & 227.08 & 446.20 & 343.49 & 320.93 \\
LBP($\frac{t}{t+1}$)-1 & 27.37 & 30.62 & 33.31 & 12.11 & 13.39 & 14.19 & 8.81 & 9.06 & 9.57 \\
ALBP($\frac{t}{t+1}$) & 604.07 & 528.66 & 511.27 & 754.69 & 622.35 & 594.59 & 1472.86 & 1247.31 & 1185.65 \\
GLBP($\frac{t}{t+1}$) & 658.24 & 564.55 & 529.03 & 751.22 & 644.43 & 604.47 & 1484.93 & 1259.07 & 1179.93 \\
GWG($\sqrt{t}$)-1 & 26.19 & 30.17 & 30.92 & 12.35 & 12.41 & 14.29 & 8.97 & 8.97 & 9.00 \\
AGWG($\sqrt{t}$) & 335.66 & 254.36 & 205.81 & 284.31 & 206.75 & 175.17 & 406.81 & 303.40 & 261.13 \\
GGWG($\sqrt{t}$) & 336.70 & 254.30 & 209.78 & 296.11 & 224.29 & 187.23 & 422.89 & 318.11 & 267.72 \\
LBP($\sqrt{t}$)-1 & 28.36 & 27.88 & 30.93 & 12.11 & 12.50 & 14.12 & 8.48 & 10.07 & 9.36 \\
ALBP($\sqrt{t}$) & 598.66 & 488.24 & 412.74 & 702.91 & 570.50 & 482.58 & 1411.88 & 1135.84 & 935.89 \\
GLBP($\sqrt{t}$) & 636.34 & 510.63 & 428.40 & 734.46 & 588.45 & 482.09 & 1417.15 & 1147.37 & 946.22 \\
\bottomrule
\end{tabular}
\caption{Effective Sample Size on Bernoulli Model}
\label{tab:bernoulli_e_all}
\end{table}

\begin{table}[htb]
\footnotesize
\centering
\begin{tabular}{crrrrrrrrr}
\toprule
Size  & \multicolumn{3}{c}{$N=100$} & \multicolumn{3}{c}{$N=800$} & \multicolumn{3}{c}{$N=6400$} \\
\cmidrule(lr){1-1} \cmidrule(lr){2-4} \cmidrule(lr){5-7}  \cmidrule(lr){8-10}
Sampler & C1 & C2 & C3 & C1 & C2 & C3 & C1 & C2 & C3 \\
\midrule
RWM-1 & 6.26 & 8.85 & 5.15 & 14.45 & 15.44 & 12.48 & 39.10 & 42.70 & 41.07 \\
ARWM & 7.10 & 6.32 & 5.27 & 15.84 & 14.90 & 14.74 & 42.18 & 43.27 & 44.44 \\
GRWM & 7.33 & 6.41 & 5.90 & 13.66 & 18.01 & 14.56 & 43.64 & 44.58 & 42.24 \\
GWG($\frac{t}{t+1}$)-1 & 7.63 & 9.17 & 7.45 & 13.61 & 13.13 & 12.73 & 60.10 & 66.24 & 54.25 \\
AGWG($\frac{t}{t+1}$) & 7.49 & 9.61 & 7.48 & 14.18 & 17.95 & 15.59 & 57.39 & 72.89 & 69.93 \\
GGWG($\frac{t}{t+1}$) & 9.07 & 9.04 & 8.15 & 16.89 & 20.09 & 10.99 & 70.31 & 68.87 & 63.86 \\
LBP($\frac{t}{t+1}$)-1 & 10.60 & 12.68 & 10.35 & 23.58 & 24.36 & 19.74 & 70.11 & 94.35 & 86.53 \\
ALBP($\frac{t}{t+1}$) & 11.30 & 10.83 & 11.83 & 24.81 & 28.07 & 21.57 & 129.27 & 108.42 & 108.88 \\
GLBP($\frac{t}{t+1}$) & 10.76 & 11.57 & 10.97 & 30.71 & 25.42 & 19.33 & 92.47 & 100.16 & 100.06 \\
GWG($\sqrt{t}$)-1 & 10.80 & 11.22 & 7.19 & 17.70 & 26.87 & 19.60 & 58.95 & 59.23 & 61.19 \\
AGWG($\sqrt{t}$) & 9.57 & 13.00 & 7.23 & 18.22 & 19.46 & 20.27 & 78.60 & 67.59 & 65.13 \\
GGWG($\sqrt{t}$) & 9.44 & 7.36 & 8.64 & 18.83 & 20.72 & 20.11 & 65.64 & 50.45 & 65.91 \\
LBP($\sqrt{t}$)-1 & 13.18 & 13.82 & 11.71 & 25.62 & 28.37 & 28.94 & 86.08 & 81.86 & 90.00 \\
ALBP($\sqrt{t}$) & 12.51 & 12.21 & 10.59 & 23.16 & 22.51 & 21.02 & 102.78 & 107.33 & 103.70 \\
GLBP($\sqrt{t}$) & 14.38 & 13.48 & 10.34 & 23.97 & 30.38 & 21.48 & 120.14 & 100.18 & 117.96 \\
\bottomrule
\end{tabular}
\caption{Running Time on Bernoulli Model}
\label{tab:bernoulli_t_all}
\end{table}

\subsection{Simulation on Ising Model}
\label{sec:ising}
Ising model is a classic model in physics defined on a $p\times p$ square lattice graph $(V_p, E_p)$. That's to say, the nodes are indexed by $\{1, ..., p\}^2$ and an edge $((i, j), (k, l))$ exists if and only if one of the following condition holds:
\begin{multicols}{2}
\begin{itemize}
    \item $i = k, \ j = l + 1$
    \item $i = k, \ j = l - 1$
    \item $i = k + 1, \ j = l$
    \item $i = k - 1, \ j = l$
\end{itemize}
\end{multicols}
The state space is $\mathcal{X} = \{-1, 1\}^{V_p}$ and the target distribution is defined as:
\begin{equation}
    \pi(x) \propto \exp\Big(\sum_{i \in V_p} \alpha_i x_i - \lambda \sum_{(i, j) \in E_p} x_i x_j\Big)
\end{equation}
Following \cite{zanella2020informed}, we consider three configurations
\begin{itemize}
    \item C1: $\alpha_v$ is independently and uniformly sampled from $(-0.2, 0.4)$ if $(v_1 - \frac{p}{2})^2 + (v_2 - \frac{p}{2})^2 \le \frac{p^2}{2}$, else $\alpha_v$ is independently and uniformly sampled from $(-0.4, 0.2)$; $\lambda = 0.1$.
    \item C2: $\alpha_v$ is independently and uniformly sampled from $(-0.3, 0.6)$ if $(v_1 - \frac{p}{2})^2 + (v_2 - \frac{p}{2})^2 \le \frac{p^2}{2}$, else $\alpha_v$ is independently and uniformly sampled from $(-0.6, 0.3)$; $\lambda = 0.15$.
    \item C3: $\alpha_v$ is independently and uniformly sampled from $(-0.4, 0.8)$ if $(v_1 - \frac{p}{2})^2 + (v_2 - \frac{p}{2})^2 \le \frac{p^2}{2}$, else $\alpha_v$ is independently and uniformly sampled from $(-0.8, 0.4)$; $\lambda = 0.2$.
\end{itemize}
For each configuration, we simulate on three sizes:
\begin{itemize}
    \item $p=20$, sample Markov chain $x_{1:10000}$, use $x_{1:5000}$ for burn in, use $x_{5001:10000}$ to estimate expected acceptance rate, expected jump distance, effective sample size.
    \item $p=50$, sample Markov chain $x_{1:40000}$, use $x_{1:20000}$ for burn in, use $x_{20001:40000}$ to estimate expected acceptance rate, expected jump distance, effective sample size.
    \item $p=100$, sample Markov chain $x_{1:100000}$, use $x_{1:50000}$ for burn in, use $x_{50001:100000}$ to estimate expected acceptance rate, expected jump distance, effective sample size.
\end{itemize}

We give the scatter plot of $(a, d)$ and $(a, r)$ in figure~\ref{fig:ising_all}. And we examine the performance of our adaptive algorithm in table~\ref{tab:ising_d_all} and table~\ref{tab:ising_e_all}.
\begin{figure}
    \centering
    \includegraphics[width=\textwidth]{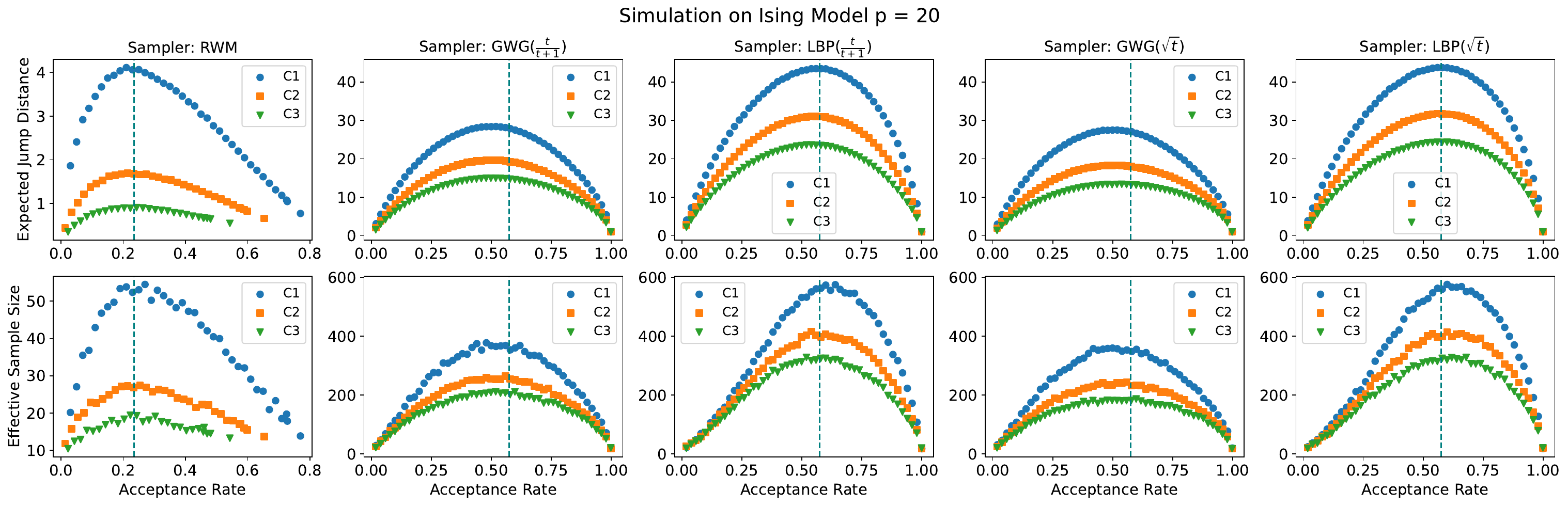}
    \includegraphics[width=\textwidth]{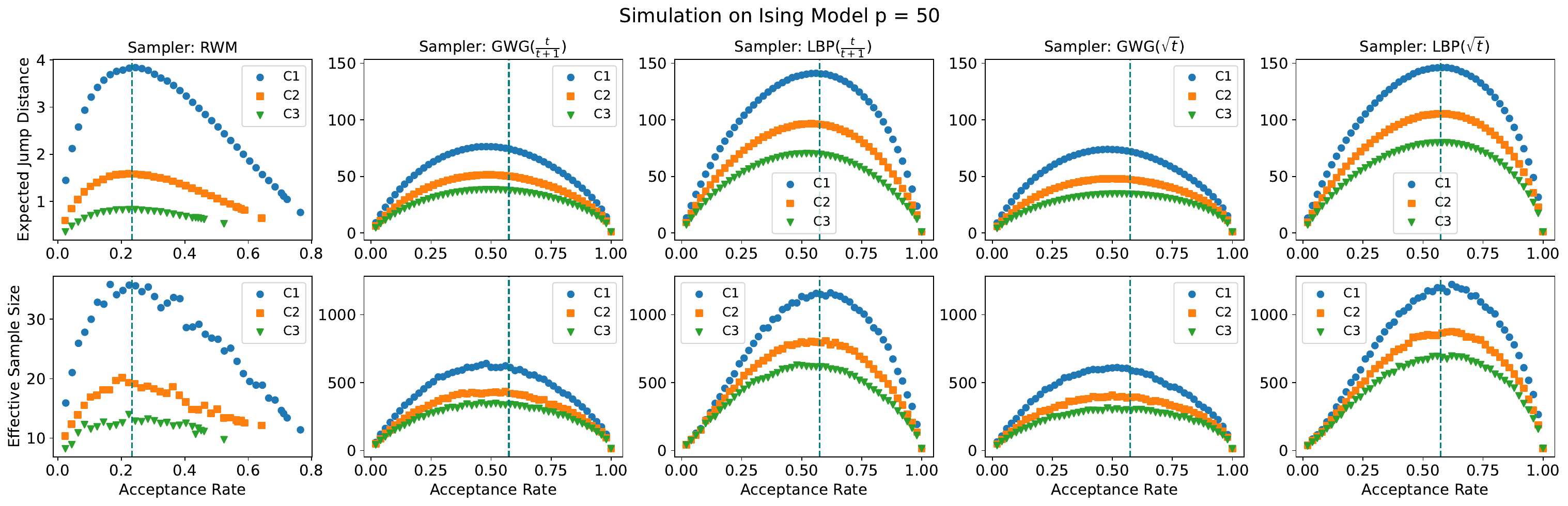}
    \includegraphics[width=\textwidth]{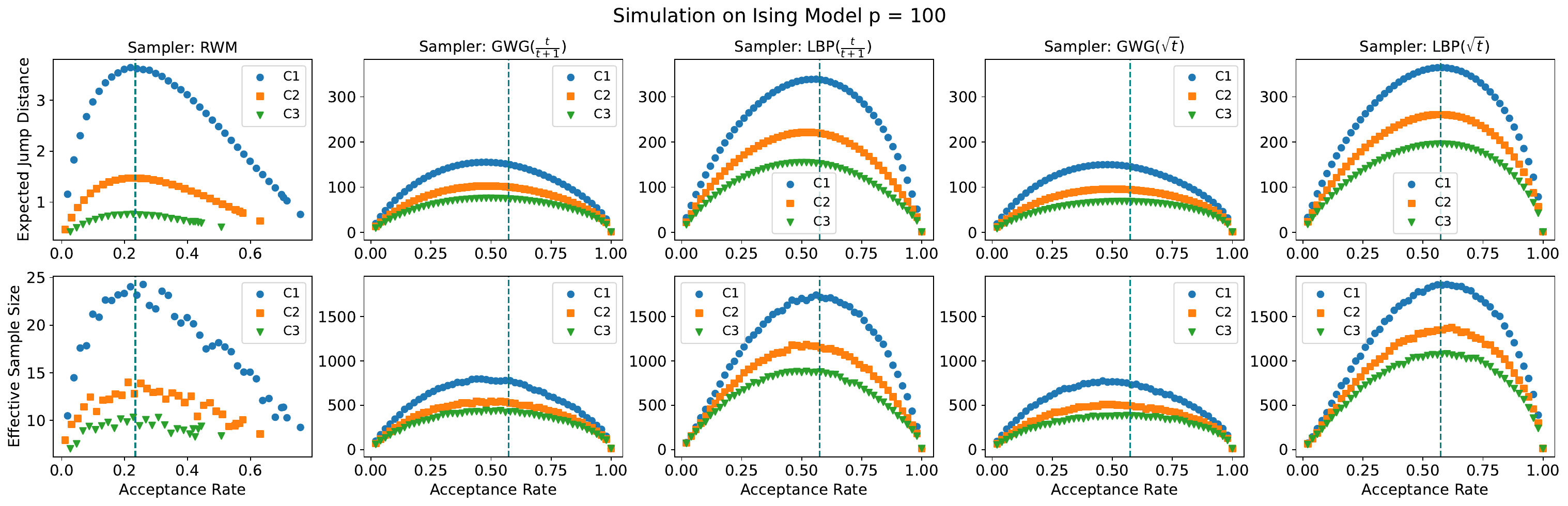}
    \caption{Simulation Results on Ising Model}
    \label{fig:ising_all}
\end{figure}

\begin{table}[htb]
\footnotesize
\centering
\begin{tabular}{crrrrrrrrr}
\toprule
Size  & \multicolumn{3}{c}{$p=20$} & \multicolumn{3}{c}{$p=50$} & \multicolumn{3}{c}{$p=100$} \\
\cmidrule(lr){1-1} \cmidrule(lr){2-4} \cmidrule(lr){5-7}  \cmidrule(lr){8-10}
Sampler & C1 & C2 & C3 & C1 & C2 & C3 & C1 & C2 & C3 \\
\midrule
RWM-1 & 0.77 & 0.65 & 0.54 & 0.77 & 0.64 & 0.52 & 0.76 & 0.63 & 0.51 \\
ARWM & 4.02 & 1.70 & 0.89 & 3.83 & 1.58 & 0.82 & 3.64 & 1.47 & 0.76 \\
GRWM & 4.12 & 1.69 & 0.90 & 3.84 & 1.59 & 0.83 & 3.64 & 1.47 & 0.76 \\
GWG($\frac{t}{t+1}$)-1 & 1.00 & 1.00 & 1.00 & 1.00 & 1.00 & 1.00 & 1.00 & 1.00 & 1.00 \\
AGWG($\frac{t}{t+1}$) & 27.93 & 19.35 & 14.73 & 74.33 & 50.27 & 37.68 & 150.31 & 100.21 & 74.15 \\
GGWG($\frac{t}{t+1}$) & 28.39 & 19.64 & 14.96 & 76.33 & 51.44 & 38.31 & 155.29 & 102.49 & 75.47 \\
LBP($\frac{t}{t+1}$)-1 & 1.00 & 1.00 & 1.00 & 1.00 & 1.00 & 1.00 & 1.00 & 1.00 & 1.00 \\
ALBP($\frac{t}{t+1}$) & 43.42 & 30.99 & 23.50 & 141.01 & 96.23 & 69.54 & 338.14 & 219.73 & 152.05 \\
GLBP($\frac{t}{t+1}$) & 43.45 & 31.10 & 23.62 & 141.20 & 96.68 & 70.16 & 339.11 & 221.49 & 154.52 \\
GWG($\sqrt{t}$)-1 & 1.00 & 1.00 & 1.00 & 1.00 & 1.00 & 1.00 & 1.00 & 1.00 & 1.00 \\
AGWG($\sqrt{t}$) & 26.96 & 18.09 & 13.29 & 72.04 & 47.25 & 34.30 & 145.43 & 94.41 & 68.09 \\
GGWG($\sqrt{t}$) & 27.49 & 18.32 & 13.39 & 73.94 & 48.03 & 34.54 & 149.84 & 95.79 & 68.30 \\
LBP($\sqrt{t}$)-1 & 1.00 & 1.00 & 1.00 & 1.00 & 1.00 & 1.00 & 1.00 & 1.00 & 1.00 \\
ALBP($\sqrt{t}$) & 43.85 & 31.83 & 24.32 & 146.02 & 105.43 & 79.82 & 364.76 & 261.05 & 195.65 \\
GLBP($\sqrt{t}$) & 43.73 & 31.76 & 24.33 & 146.21 & 105.38 & 79.78 & 364.84 & 260.81 & 195.67 \\
\bottomrule
\end{tabular}
\caption{Expected Jump Distance on Ising Model}
\label{tab:ising_d_all}
\end{table}

\begin{table}[htb]
\footnotesize
\centering
\begin{tabular}{crrrrrrrrr}
\toprule
Size  & \multicolumn{3}{c}{$p=20$} & \multicolumn{3}{c}{$p=50$} & \multicolumn{3}{c}{$p=100$} \\
\cmidrule(lr){1-1} \cmidrule(lr){2-4} \cmidrule(lr){5-7}  \cmidrule(lr){8-10}
Sampler & C1 & C2 & C3 & C1 & C2 & C3 & C1 & C2 & C3 \\
\midrule
RWM-1 & 13.85 & 13.70 & 13.25 & 11.39 & 12.14 & 9.73 & 9.27 & 8.58 & 8.36 \\
ARWM & 51.66 & 27.50 & 17.39 & 35.34 & 19.60 & 12.89 & 22.99 & 13.31 & 9.47 \\
GRWM & 54.48 & 27.36 & 19.41 & 35.85 & 20.16 & 13.96 & 24.28 & 13.99 & 10.32 \\
GWG($\frac{t}{t+1}$)-1 & 19.55 & 18.06 & 20.73 & 13.53 & 13.30 & 14.54 & 10.26 & 11.44 & 11.38 \\
AGWG($\frac{t}{t+1}$) & 362.96 & 250.15 & 205.55 & 611.15 & 429.22 & 340.15 & 755.44 & 533.07 & 419.56 \\
GGWG($\frac{t}{t+1}$) & 377.87 & 264.00 & 211.38 & 641.09 & 434.17 & 349.96 & 795.99 & 543.68 & 441.19 \\
LBP($\frac{t}{t+1}$)-1 & 17.78 & 18.76 & 20.24 & 13.69 & 14.11 & 14.88 & 10.04 & 10.32 & 12.30 \\
ALBP($\frac{t}{t+1}$) & 551.81 & 394.65 & 330.29 & 1135.03 & 821.06 & 620.26 & 1733.51 & 1164.64 & 868.62 \\
GLBP($\frac{t}{t+1}$) & 575.40 & 416.56 & 328.42 & 1161.62 & 809.12 & 629.38 & 1742.19 & 1184.58 & 880.69 \\
GWG($\sqrt{t}$)-1 & 19.95 & 17.66 & 17.87 & 13.22 & 13.57 & 14.22 & 9.54 & 10.17 & 11.50 \\
AGWG($\sqrt{t}$) & 356.57 & 236.55 & 176.42 & 569.23 & 399.21 & 306.83 & 727.25 & 501.14 & 379.41 \\
GGWG($\sqrt{t}$) & 359.85 & 244.00 & 186.74 & 611.60 & 407.16 & 312.81 & 774.36 & 508.04 & 384.21 \\
LBP($\sqrt{t}$)-1 & 18.24 & 19.32 & 20.65 & 14.01 & 14.09 & 16.07 & 10.24 & 11.53 & 11.08 \\
ALBP($\sqrt{t}$) & 562.14 & 413.21 & 329.44 & 1197.76 & 867.77 & 680.61 & 1866.85 & 1359.86 & 1078.16 \\
GLBP($\sqrt{t}$) & 576.18 & 414.54 & 328.02 & 1223.24 & 877.04 & 695.96 & 1861.60 & 1374.12 & 1079.32 \\
\bottomrule
    \end{tabular}
    \caption{Effective Sample Size on Ising Model}
    \label{tab:ising_e_all}
\end{table}

\begin{table}[htb]
\footnotesize
\centering
\begin{tabular}{crrrrrrrrr}
\toprule
Size  & \multicolumn{3}{c}{$p=20$} & \multicolumn{3}{c}{$p=50$} & \multicolumn{3}{c}{$p=100$} \\
\cmidrule(lr){1-1} \cmidrule(lr){2-4} \cmidrule(lr){5-7}  \cmidrule(lr){8-10}
Sampler & C1 & C2 & C3 & C1 & C2 & C3 & C1 & C2 & C3 \\
\midrule
RWM-1 & 18.78 & 19.73 & 19.64 & 71.79 & 74.28 & 75.45 & 173.58 & 142.63 & 143.42 \\
ARWM & 18.84 & 19.94 & 19.37 & 76.05 & 77.45 & 78.17 & 134.24 & 149.44 & 150.26 \\
GRWM & 19.20 & 20.08 & 19.99 & 76.09 & 76.89 & 76.90 & 134.70 & 149.48 & 150.13 \\
GWG($\frac{t}{t+1}$)-1 & 29.54 & 31.07 & 31.78 & 89.62 & 92.75 & 97.31 & 228.38 & 224.22 & 228.92 \\
AGWG($\frac{t}{t+1}$) & 31.07 & 32.54 & 40.28 & 97.31 & 99.73 & 104.38 & 271.45 & 304.41 & 273.28 \\
GGWG($\frac{t}{t+1}$) & 31.10 & 32.38 & 32.26 & 96.96 & 99.61 & 104.46 & 271.39 & 267.24 & 273.17 \\
LBP($\frac{t}{t+1}$)-1 & 36.40 & 37.61 & 46.76 & 108.31 & 111.19 & 116.87 & 260.65 & 291.16 & 269.27 \\
ALBP($\frac{t}{t+1}$) & 37.42 & 38.36 & 38.82 & 126.34 & 124.37 & 116.80 & 308.16 & 320.08 & 317.61 \\
GLBP($\frac{t}{t+1}$) & 37.91 & 39.26 & 39.12 & 124.74 & 129.28 & 125.60 & 309.07 & 310.78 & 316.89 \\
GWG($\sqrt{t}$)-1 & 29.93 & 30.59 & 31.06 & 115.42 & 120.13 & 121.43 & 216.42 & 237.03 & 240.41 \\
AGWG($\sqrt{t}$) & 30.17 & 31.68 & 31.53 & 95.37 & 98.34 & 103.86 & 261.55 & 273.21 & 280.95 \\
GGWG($\sqrt{t}$) & 30.57 & 31.46 & 31.44 & 121.69 & 125.99 & 127.66 & 259.48 & 304.54 & 280.64 \\
LBP($\sqrt{t}$)-1 & 36.99 & 37.42 & 36.98 & 106.82 & 110.86 & 117.53 & 258.04 & 275.03 & 283.44 \\
ALBP($\sqrt{t}$) & 37.62 & 38.84 & 37.58 & 125.48 & 128.87 & 121.34 & 303.28 & 306.60 & 312.44 \\
GLBP($\sqrt{t}$) & 36.87 & 38.96 & 37.87 & 125.95 & 130.17 & 174.29 & 403.14 & 305.27 & 339.78 \\
\bottomrule
\end{tabular}
\caption{Running Time on Ising Model}
\label{tab:ising_t_all}
\end{table}

\subsection{Simulation on FHMM}
\label{sec:fhmm}
FHMM uses latent variables $x \in \mathcal{X} = \{0, 1\}^{L\times K}$ to characterize time series data $y \in \mathbb{R}^L$. Denote $p(x)$ as the prior for hidden variables $x$, and $p(y|x)$ for the likelihood:
\begin{align}
    p(x) &= \prod_{l=1}^L p(x_{l, 1}) \prod_{k=2}^K p(x_{l, k}|x_{l, k-1}) \\
    p(y|x) &= \prod_{l=1}^L \mathcal{N}(y_l; w^T x_l + b, \sigma^2)
\end{align}
Specifically, we have $p(x_{l, 1}) = 0.1$, $p(x_{l, k} = x_{l, k-1} | x_{l, k-1}) = 0.8$ independently $\forall l = 1, ..., L$ and $\forall k = 2, ..., K$. And we have all entries in $W$ and $b$ are independent Gaussian random variables. We sample latent variables $x$ and sample $y \sim p(y|x)$. Then we simulate our samplers to sample the latent variables $x$ from the posterior $\pi(x) = p(x|y)$.

We consider three configurations
\begin{itemize}
    \item C1: $\sigma^2 = 2$
    \item C2: $\sigma^2 = 1$
    \item C3: $\sigma^2 = 0.5$
\end{itemize}
For each configuration, we simulate on three sizes:
\begin{itemize}
    \item $L=200, K=5$, sample Markov chain $x_{1:10000}$, use $x_{1:5000}$ for burn in, use $x_{5001:10000}$ to estimate expected acceptance rate, expected jump distance, effective sample size.
    \item $L=1000, K=5$, sample Markov chain $x_{1:40000}$, use $x_{1:20000}$ for burn in, use $x_{20001:40000}$ to estimate expected acceptance rate, expected jump distance, effective sample size.
    \item $L=4000, K=5$, sample Markov chain $x_{1:100000}$, use $x_{1:50000}$ for burn in, use $x_{50001:100000}$ to estimate expected acceptance rate, expected jump distance, effective sample size.
\end{itemize}

We give the scatter plot of $(a, d)$ and $(a, r)$ in figure~\ref{fig:fhmm_all}. And we examine the performance of our adaptive algorithm in table~\ref{tab:fhmm_d_all} and table~\ref{tab:fhmm_e_all}.
\begin{figure}
    \centering
    \includegraphics[width=\textwidth]{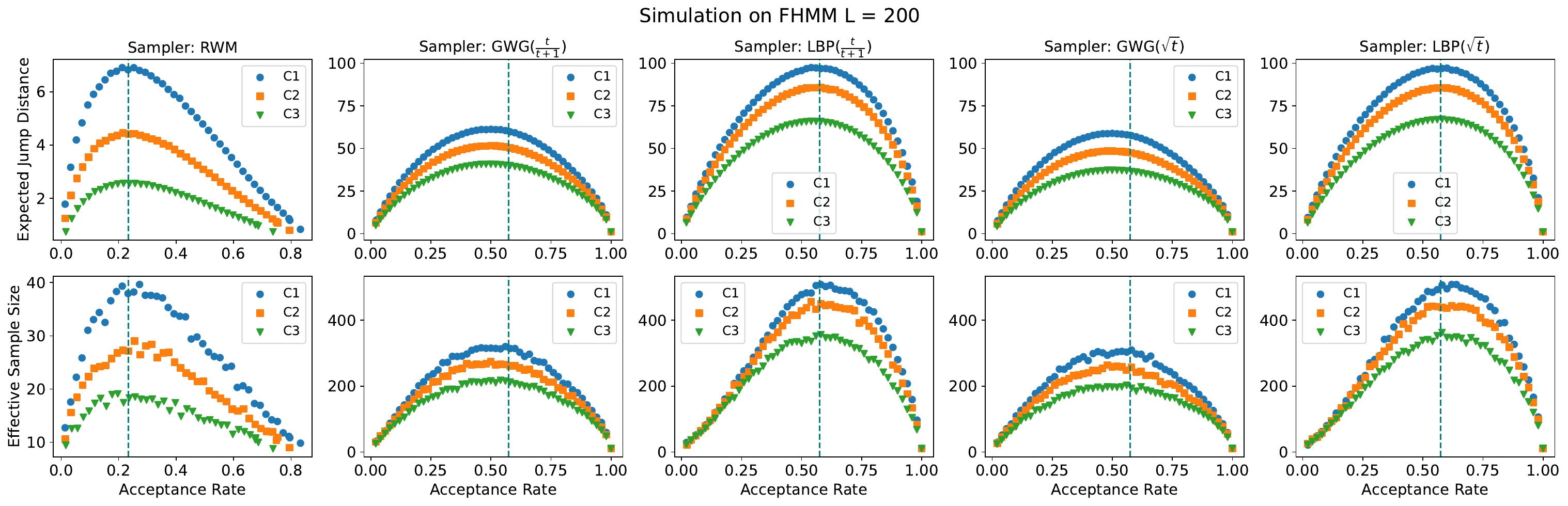}
    \includegraphics[width=\textwidth]{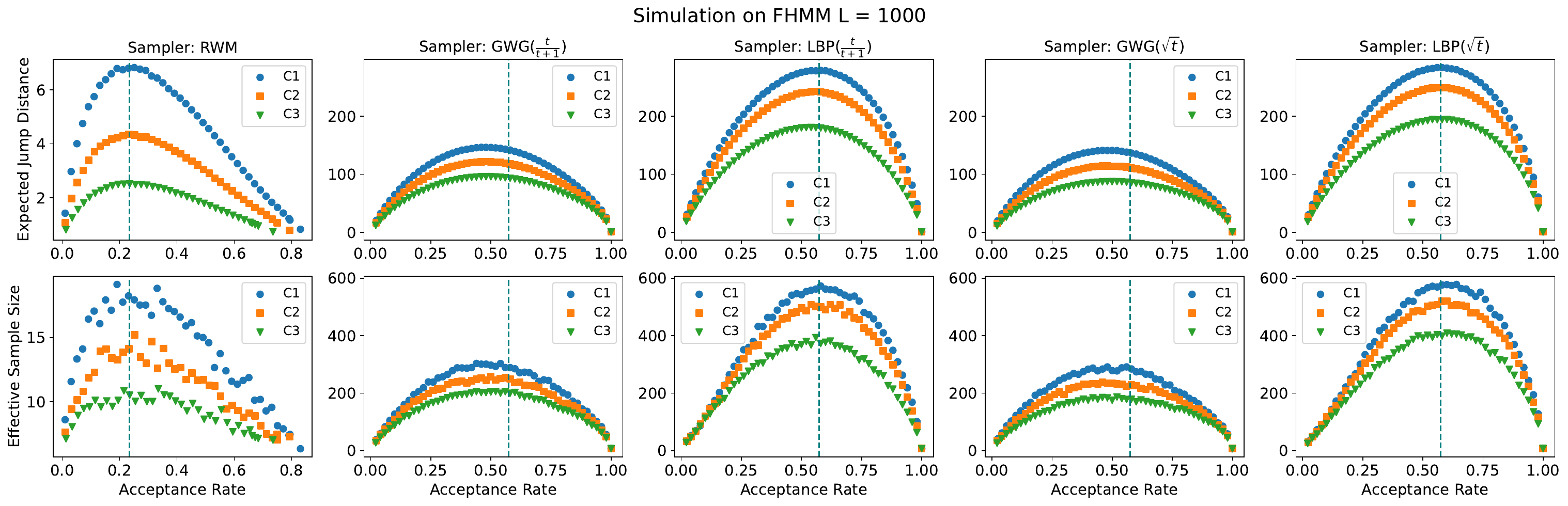}
    \includegraphics[width=\textwidth]{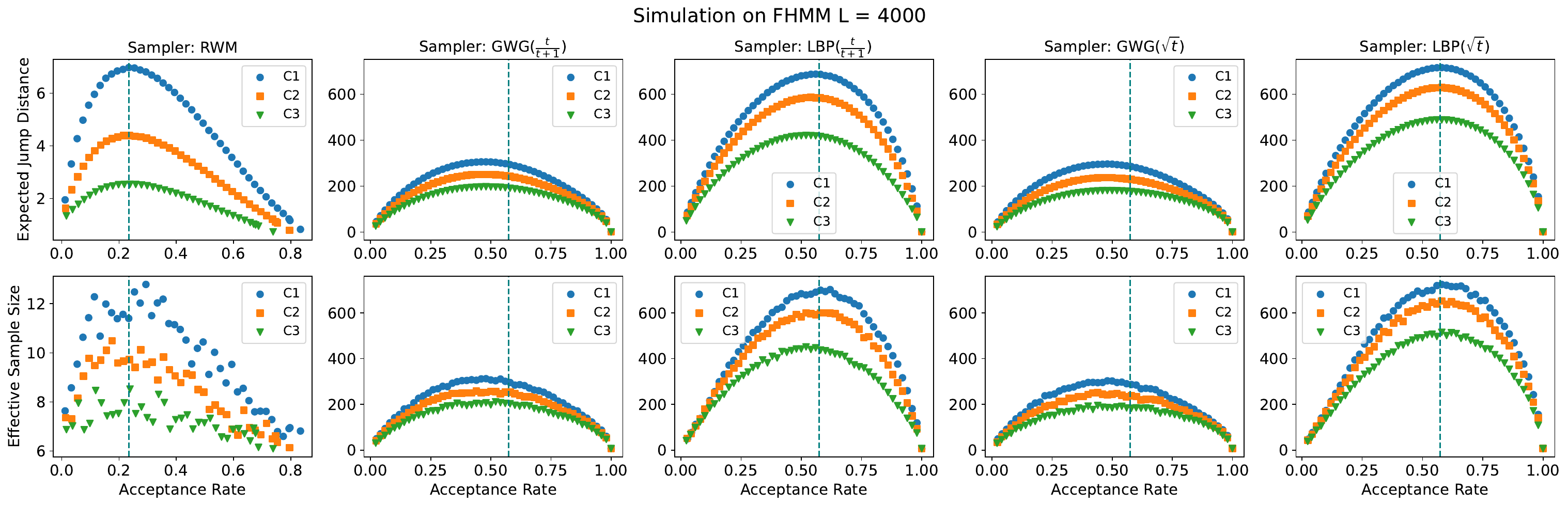}
    \caption{Simulation Results on FHMM}
    \label{fig:fhmm_all}
\end{figure}

\begin{table}[htb]
\footnotesize
\centering
\begin{tabular}{crrrrrrrrr}
\toprule
Size  & \multicolumn{3}{c}{$L=200$} & \multicolumn{3}{c}{$L=1000$} & \multicolumn{3}{c}{$L=4000$} \\
\cmidrule(lr){1-1} \cmidrule(lr){2-4} \cmidrule(lr){5-7}  \cmidrule(lr){8-10}
Sampler & C1 & C2 & C3 & C1 & C2 & C3 & C1 & C2 & C3 \\
\midrule
RWM-1 & 0.83 & 0.79 & 0.74 & 0.83 & 0.79 & 0.73 & 0.83 & 0.80 & 0.74 \\
ARWM & 6.85 & 4.41 & 2.55 & 6.79 & 4.32 & 2.50 & 6.94 & 4.40 & 2.54 \\
GRWM & 6.91 & 4.45 & 2.56 & 6.83 & 4.35 & 2.51 & 6.97 & 4.39 & 2.54 \\
GWG($\frac{t}{t+1}$)-1 & 1.00 & 1.00 & 1.00 & 1.00 & 1.00 & 1.00 & 1.00 & 1.00 & 1.00 \\
AGWG($\frac{t}{t+1}$) & 59.96 & 50.53 & 40.05 & 142.24 & 118.18 & 93.44 & 294.85 & 243.23 & 191.70 \\
GGWG($\frac{t}{t+1}$) & 61.24 & 51.63 & 40.74 & 146.40 & 121.91 & 95.88 & 305.78 & 251.09 & 197.28 \\
LBP($\frac{t}{t+1}$)-1 & 1.00 & 1.00 & 1.00 & 1.00 & 1.00 & 1.00 & 1.00 & 1.00 & 1.00 \\
ALBP($\frac{t}{t+1}$) & 97.11 & 85.70 & 65.77 & 278.20 & 242.01 & 179.51 & 687.29 & 585.02 & 416.32 \\
GLBP($\frac{t}{t+1}$) & 97.47 & 85.78 & 65.97 & 278.59 & 242.52 & 180.60 & 687.74 & 586.70 & 420.53 \\
GWG($\sqrt{t}$)-1 & 1.00 & 1.00 & 1.00 & 1.00 & 1.00 & 1.00 & 1.00 & 1.00 & 1.00 \\
AGWG($\sqrt{t}$) & 57.52 & 47.55 & 36.53 & 137.11 & 111.40 & 85.78 & 286.03 & 230.52 & 177.22 \\
GGWG($\sqrt{t}$) & 58.83 & 48.53 & 37.21 & 141.15 & 114.24 & 87.45 & 296.62 & 237.35 & 180.79 \\
LBP($\sqrt{t}$)-1 & 1.00 & 1.00 & 1.00 & 1.00 & 1.00 & 1.00 & 1.00 & 1.00 & 1.00 \\
ALBP($\sqrt{t}$) & 96.64 & 85.77 & 66.91 & 283.12 & 248.95 & 193.99 & 715.20 & 627.85 & 488.14 \\
GLBP($\sqrt{t}$) & 97.14 & 85.58 & 67.14 & 283.51 & 248.84 & 194.34 & 716.13 & 628.80 & 488.50 \\
\bottomrule
\end{tabular}
\caption{Expected Jump Distance on FHMM}
\label{tab:fhmm_d_all}
\end{table}

\begin{table}[htb]
\footnotesize
\centering
\begin{tabular}{crrrrrrrrr}
\toprule
Size  & \multicolumn{3}{c}{$L=200$} & \multicolumn{3}{c}{$L=1000$} & \multicolumn{3}{c}{$L=4000$} \\
\cmidrule(lr){1-1} \cmidrule(lr){2-4} \cmidrule(lr){5-7}  \cmidrule(lr){8-10}
Sampler & C1 & C2 & C3 & C1 & C2 & C3 & C1 & C2 & C3 \\
\midrule
RWM-1 & 9.83 & 8.98 & 8.78 & 6.33 & 7.26 & 7.01 & 6.82 & 6.14 & 6.09 \\
ARWM & 35.88 & 28.73 & 18.09 & 18.45 & 13.32 & 10.68 & 10.99 & 10.04 & 8.37 \\
GRWM & 39.65 & 29.04 & 19.09 & 19.15 & 15.22 & 11.00 & 12.79 & 10.49 & 8.52 \\
GWG($\frac{t}{t+1}$)-1 & 10.78 & 10.43 & 9.96 & 7.13 & 6.91 & 7.22 & 6.50 & 5.82 & 6.69 \\
AGWG($\frac{t}{t+1}$) & 306.97 & 262.30 & 213.37 & 288.52 & 245.50 & 196.80 & 295.08 & 241.20 & 195.35 \\
GGWG($\frac{t}{t+1}$) & 320.12 & 273.95 & 217.62 & 303.42 & 257.10 & 205.59 & 312.35 & 257.75 & 210.94 \\
LBP($\frac{t}{t+1}$)-1 & 10.70 & 10.26 & 10.58 & 7.25 & 7.25 & 7.05 & 5.97 & 6.34 & 6.76 \\
ALBP($\frac{t}{t+1}$) & 499.13 & 455.94 & 352.52 & 573.35 & 487.63 & 383.97 & 679.06 & 594.70 & 436.70 \\
GLBP($\frac{t}{t+1}$) & 508.67 & 456.24 & 356.10 & 572.88 & 508.27 & 393.32 & 702.34 & 600.21 & 451.36 \\
GWG($\sqrt{t}$)-1 & 10.30 & 10.22 & 11.06 & 6.69 & 7.90 & 7.09 & 6.53 & 6.72 & 6.80 \\
AGWG($\sqrt{t}$) & 294.38 & 251.57 & 190.81 & 278.17 & 227.18 & 186.24 & 289.14 & 232.77 & 184.94 \\
GGWG($\sqrt{t}$) & 309.79 & 264.26 & 202.46 & 291.43 & 238.06 & 186.76 & 302.98 & 251.14 & 194.88 \\
LBP($\sqrt{t}$)-1 & 9.86 & 10.52 & 10.82 & 6.98 & 7.24 & 7.68 & 6.05 & 6.59 & 6.52 \\
ALBP($\sqrt{t}$) & 502.23 & 443.64 & 348.77 & 575.49 & 524.64 & 406.15 & 727.50 & 645.50 & 504.30 \\
GLBP($\sqrt{t}$) & 508.85 & 444.36 & 362.72 & 578.29 & 520.92 & 408.98 & 724.64 & 651.81 & 515.16 \\
\bottomrule
\end{tabular}
\caption{Effective Sample Size on FHMM}
\label{tab:fhmm_e_all}
\end{table}

\begin{table}[htb]
\footnotesize
\centering
\begin{tabular}{crrrrrrrrr}
\toprule
Size  & \multicolumn{3}{c}{$L=200$} & \multicolumn{3}{c}{$L=1000$} & \multicolumn{3}{c}{$L=4000$} \\
\cmidrule(lr){1-1} \cmidrule(lr){2-4} \cmidrule(lr){5-7}  \cmidrule(lr){8-10}
Sampler & C1 & C2 & C3 & C1 & C2 & C3 & C1 & C2 & C3 \\
\midrule
RWM-1 & 136.30 & 129.16 & 30.23 & 58.29 & 58.03 & 61.32 & 112.92 & 112.73 & 110.94 \\
ARWM & 139.53 & 138.36 & 29.94 & 60.08 & 60.02 & 58.61 & 120.83 & 119.95 & 120.38 \\
GRWM & 137.61 & 123.72 & 30.14 & 66.84 & 61.19 & 58.43 & 120.42 & 120.39 & 118.40 \\
GWG($\frac{t}{t+1}$)-1 & 49.54 & 48.86 & 56.84 & 113.86 & 112.89 & 82.67 & 282.79 & 282.29 & 281.53 \\
AGWG($\frac{t}{t+1}$) & 48.80 & 64.40 & 68.73 & 119.27 & 124.07 & 88.57 & 315.22 & 315.52 & 313.65 \\
GGWG($\frac{t}{t+1}$) & 49.76 & 48.60 & 57.24 & 118.77 & 118.59 & 88.57 & 315.45 & 314.77 & 315.27 \\
LBP($\frac{t}{t+1}$)-1 & 53.94 & 69.11 & 75.98 & 129.92 & 134.42 & 91.59 & 295.47 & 294.56 & 292.91 \\
ALBP($\frac{t}{t+1}$) & 43.73 & 57.83 & 64.59 & 92.14 & 129.28 & 93.24 & 315.10 & 327.57 & 308.84 \\
GLBP($\frac{t}{t+1}$) & 57.84 & 57.21 & 63.94 & 136.68 & 140.43 & 100.02 & 315.52 & 314.08 & 309.44 \\
GWG($\sqrt{t}$)-1 & 231.12 & 196.52 & 56.26 & 112.63 & 110.48 & 109.11 & 279.85 & 279.51 & 277.10 \\
AGWG($\sqrt{t}$) & 209.90 & 218.15 & 55.81 & 113.70 & 119.34 & 114.59 & 964.08 & 314.70 & 314.63 \\
GGWG($\sqrt{t}$) & 218.94 & 218.49 & 55.59 & 116.99 & 117.65 & 112.57 & 314.86 & 314.45 & 311.70 \\
LBP($\sqrt{t}$)-1 & 256.23 & 248.33 & 62.95 & 147.23 & 128.65 & 121.14 & 1069.63 & 945.32 & 292.23 \\
ALBP($\sqrt{t}$) & 57.65 & 57.08 & 63.78 & 133.30 & 130.70 & 98.33 & 313.61 & 311.16 & 308.09 \\
GLBP($\sqrt{t}$) & 232.46 & 230.29 & 64.57 & 129.31 & 153.27 & 130.77 & 315.30 & 313.04 & 309.27 \\
\bottomrule
\end{tabular}
\caption{Running Time on FHMM}
\label{tab:fhmm_t_all}
\end{table}

\subsection{Simulation on RBM}
\label{sec:rbm}
RBM is a bipartite latent-variable model, defining a distribution over binary data $x \in \{0, 1\}^N$ and latent data $z \in \{0, 1\}^h$. Given parameters $W \in \mathbb{R}^{h \times N}, b \in \mathbb{R}^N, c \in \mathbb{R}^h$, the distribution of observable variables $x$ is obtained by marginalizing the latent variables $z$:
\begin{equation}
    \pi(x) \propto \exp(b^T x) \prod_{i=1}^h (1 + \exp(W_i x + c_i))
\end{equation}
We train the RBM on MNIST dataset using contrastive divergence \citep{hinton2002training} in three configurations 
\begin{itemize}
    \item C1: $h = 100$
    \item C2: $h = 400$
    \item C3: $h = 1000$
\end{itemize}
For each configuration, we sample Markov chain $x_{1:40000}$, use $x_{1:20000}$ for burn in, use $x_{20001:40000}$ to estimate expected acceptance rate, expected jump distance, effective sample size.

We give the scatter plot of $(a, d)$ and $(a, r)$ in figure~\ref{fig:rbm_all}. And we examine the performance of our adaptive algorithm in table~\ref{tab:rbm_d_all} and table~\ref{tab:rbm_e_all}.
\begin{figure}
    \centering
    \includegraphics[width=\textwidth]{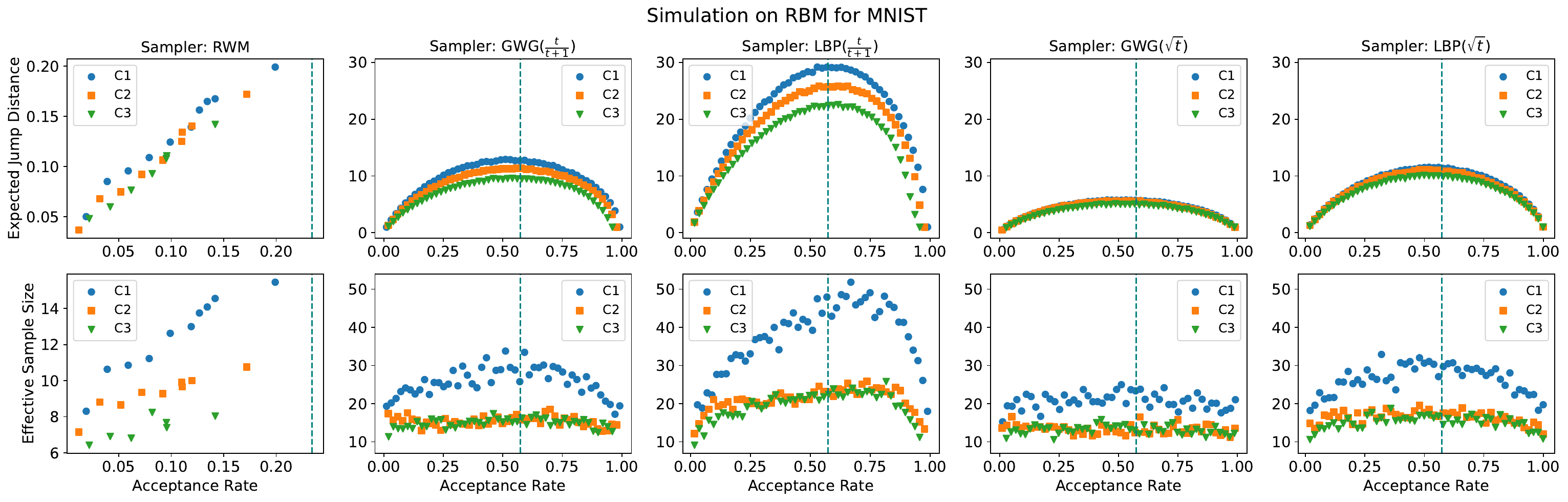}
    \caption{Simulation Results on RBM}
    \label{fig:rbm_all}
\end{figure}

\begin{table}[htb]
\centering
\begin{tabular}{crrr}
\toprule
Size  & $h=100$ & $h=400$ & $h=1000$ \\
\midrule
RWM-1 & 0.20 & 0.17 & 0.14 \\
ARWM & 0.19 & 0.17 & 0.14 \\
GRWM & 0.20 & 0.17 & 0.14 \\
GWG($\frac{t}{t+1}$)-1 & 0.99 & 0.98 & 0.96 \\
AGWG($\frac{t}{t+1}$) & 12.75 & 11.31 & 9.62 \\
GGWG($\frac{t}{t+1}$) & 12.91 & 11.36 & 9.58 \\
LBP($\frac{t}{t+1}$)-1 & 0.99 & 0.98 & 0.96 \\
ALBP($\frac{t}{t+1}$) & 29.03 & 26.07 & 22.47 \\
GLBP($\frac{t}{t+1}$) & 29.19 & 25.85 & 22.55 \\
GWG($\sqrt{t}$)-1 & 0.99 & 0.99 & 0.99 \\
AGWG($\sqrt{t}$) & 5.74 & 5.50 & 5.04 \\
GGWG($\sqrt{t}$) & 5.76 & 5.58 & 5.10 \\
LBP($\sqrt{t}$)-1 & 1.00 & 1.00 & 1.00 \\
ALBP($\sqrt{t}$) & 11.41 & 10.65 & 9.93 \\
GLBP($\sqrt{t}$) & 11.53 & 11.09 & 10.10 \\
\bottomrule
\end{tabular}
\caption{Expected Jump Distance on RBM}
\label{tab:rbm_d_all}
\end{table}

\begin{table}[htb]
\centering
\begin{tabular}{crrr}
\toprule
Size  & $h=100$ & $h=400$ & $h=1000$ \\
\midrule
RWM-1 & 15.46 & 10.76 & 8.04 \\
ARWM & 15.08 & 11.13 & 8.82 \\
GRWM & 15.46 & 10.76 & 8.24 \\
GWG($\frac{t}{t+1}$)-1 & 19.42 & 14.45 & 12.70 \\
AGWG($\frac{t}{t+1}$) & 31.71 & 16.42 & 16.21 \\
GGWG($\frac{t}{t+1}$) & 33.77 & 18.51 & 17.36 \\
LBP($\frac{t}{t+1}$)-1 & 17.99 & 13.38 & 11.16 \\
ALBP($\frac{t}{t+1}$) & 48.20 & 25.59 & 23.61 \\
GLBP($\frac{t}{t+1}$) & 51.82 & 25.83 & 25.78 \\
GWG($\sqrt{t}$)-1 & 21.03 & 13.59 & 12.20 \\
AGWG($\sqrt{t}$) & 21.92 & 13.51 & 15.52 \\
GGWG($\sqrt{t}$) & 24.97 & 16.58 & 15.81 \\
LBP($\sqrt{t}$)-1 & 19.72 & 12.02 & 10.77 \\
ALBP($\sqrt{t}$) & 33.43 & 16.95 & 17.28 \\
GLBP($\sqrt{t}$) & 32.90 & 19.51 & 18.74 \\
\bottomrule
\end{tabular}
\caption{Effective Sample Size on RBM}
\label{tab:rbm_e_all}
\end{table}

\begin{table}[htb]
\centering
\begin{tabular}{crrr}
\toprule
Size  & $h=100$ & $h=400$ & $h=1000$ \\
\midrule
RWM-1 & 67.37 & 59.54 & 43.48 \\
ARWM & 69.71 & 61.24 & 42.28 \\
GRWM & 67.37 & 59.54 & 44.52 \\
GWG($\frac{t}{t+1}$)-1 & 92.75 & 93.28 & 69.18 \\
AGWG($\frac{t}{t+1}$) & 99.34 & 95.26 & 74.14 \\
GGWG($\frac{t}{t+1}$) & 94.70 & 96.48 & 73.09 \\
LBP($\frac{t}{t+1}$)-1 & 118.60 & 116.04 & 87.54 \\
ALBP($\frac{t}{t+1}$) & 84.03 & 144.03 & 90.43 \\
GLBP($\frac{t}{t+1}$) & 121.71 & 119.38 & 91.12 \\
GWG($\sqrt{t}$)-1 & 109.86 & 94.40 & 69.86 \\
AGWG($\sqrt{t}$) & 94.97 & 94.63 & 72.30 \\
GGWG($\sqrt{t}$) & 96.42 & 94.23 & 72.93 \\
LBP($\sqrt{t}$)-1 & 116.63 & 116.15 & 86.23 \\
ALBP($\sqrt{t}$) & 120.47 & 118.61 & 88.35 \\
GLBP($\sqrt{t}$) & 118.45 & 118.86 & 89.34 \\
\bottomrule
\end{tabular}
\caption{Running Time on RBM}
\label{tab:rbm_t_all}
\end{table}

\end{document}